\documentclass{article} 
\usepackage{iclr2027_conference,times}

\usepackage{hyperref}
\usepackage{url}

\usepackage{booktabs}       
\usepackage{amsfonts}       
\usepackage{nicefrac}       
\usepackage{microtype}      
\usepackage{xcolor}         

\usepackage{amsmath}
\usepackage{amssymb}
\usepackage{mathtools}
\usepackage{amsthm}
\usepackage{graphicx}
\usepackage{subcaption}
\usepackage{placeins}
\usepackage{algorithm}
\usepackage{algpseudocode}

\theoremstyle{plain}
\newtheorem{theorem}{Theorem}[section]
\newtheorem{proposition}[theorem]{Proposition}
\newtheorem{lemma}[theorem]{Lemma}
\newtheorem{corollary}[theorem]{Corollary}
\theoremstyle{definition}

\newtheorem{assumption}[theorem]{Assumption}
\theoremstyle{remark}
\newtheorem{remark}[theorem]{Remark}

\title{Robust Direct Preference Optimization under Preference Label-Flip Attacks}

\author{
   Chenye Yang \\
   Department of Electrical and Computer Engineering \\
   University of California, Davis \\
   Davis, CA 95616, USA \\
   \texttt{cyyyang@ucdavis.edu} \\
   \And
   Weiyu Xu \\
   Department of Electrical and Computer Engineering \\
   University of Iowa \\
   Iowa City, IA 52242, USA \\
   \texttt{weiyu-xu@uiowa.edu} \\
   \And
   Lifeng Lai \\
   Department of Electrical and Computer Engineering \\
   University of California, Davis \\
   Davis, CA 95616, USA \\
   \texttt{lflai@ucdavis.edu}
}

\iclrfinalcopy 

\begin{document}

\maketitle
\fancyhead{} 

\begin{abstract}
Direct Preference Optimization (DPO) is widely used for offline Reinforcement Learning from Human Feedback. 
However, recent works have shown that DPO can be easily manipulated by adversarial preference label flip attacks,
which are specifically designed to mislead the learner toward an attacker-chosen target policy. 
To address this issue, we design robust DPO methods that can defend against such attacks.
Building on the property that lable flips induce parameter-independent gradient shifts in log-linear DPO, 
we propose Trim-$K$ Robust-DPO, which removes the $K$ highest-impact pairs, and Soft-Trim-$K$ Robust-DPO, which smoothly downweights them. 
We establish deterministic gradient perturbation bounds uniformly over all attack sets containing at most $K$ flips, 
explicitly separating defense-induced bias from surviving attack effects. 
Under bounded-region assumptions, these bounds yield time-uniform parameter and policy deviation guarantees. 
We further quantify how quadratic regularization improves curvature while introducing additional bias. 
Experiments on Stanford Human Preferences, HH-RLHF, and UltraFeedback-Binarized validate our analysis.
\end{abstract}

\section{Introduction}

Reinforcement Learning from Human Feedback (RLHF) is a central approach for aligning the output of modern large models with human preferences when an explicit reward function is difficult to specify~\citep{christiano2017deep,ziegler2019fine,ouyang2022training,bai2022training,casper2023open}.
In the offline RLHF setting, the learner trains on a pre-collected dataset of pairwise preference comparisons~\citep{xiong2023iterative,zhu2023principled}, and Direct Preference Optimization (DPO)~\citep{rafailov2023direct} has become especially popular among these approaches.
DPO avoids separate reward modeling and RL-based policy optimization, and directly optimizes a preference-based objective over the policy under the Bradley-Terry model~\citep{bradley1952rank}. 

However, this dependence on preference pairs of RLHF and DPO also introduces a security risk: 
corrupted comparisons can systematically mislead the learned policy~\citep{casper2023open}.
In particular, several recent papers~\citep{wu2024preference,rando2023universal,baumgartner2024best,pathmanathan2024poisoning,nika2025policy,fu2025poisonbench,pathmanathan2025poisoning,wu2025preference} have introduced adversarial preference attacks on offline RLHF.
Unlike random preference noise caused by annotation errors and limited expert knowledge~\citep{casper2023open,yan2024reward},
the targeted adversarial attacker can strategically flip preference labels or inject poisoned comparisons,
to mislead the learner toward an attacker-chosen target output policy.
The label flip attack is practically plausible because it does not require injecting a large number of new preference pairs~\citep{nika2025policy}.
\citet{yang2026efficient}~formalizes this threat on DPO and develops efficient adversarial targeted attacks.
The above security issue is serious since RLHF is widely used in real-world applications like Large Language Models (LLMs)~\citep{bai2022training},
and a misled output might be biased, toxic, or even harmful~\citep{casper2023open,rando2023universal,pathmanathan2025poisoning}.

To address this serious issue, in this paper, we design DPO algorithms robust to adversarial preference flip attacks.
Unlike attacks on bandits or standard RL that manipulate numerical rewards, actions, contexts, or transition dynamics~\citep{jun2018adversarial,ma2018data,garcelon2020adversarial,rakhsha2020policy,rakhsha2021reward,liu2021provably,xu2022efficient},
preference corruption changes the preference order and then the optimized DPO policy.
Existing robust RLHF/DPO studies mainly address random noise or distribution shift in preferences~\citep{yan2024reward,cheng2024rime,mandal2024corruption,chowdhury2024provably,wu2024towards,zhou2025unified,kim2025preference,xu2025robust}.
Our challenge is instead to analyze and mitigate strategically chosen label flips and their effect on the learned policy.
To the best of our knowledge, this is the first work to design DPO algorithms robust against such adversaries.

Our analysis focuses on the log-linear DPO policy class, which enables us to analyze the problem in gradient space by a key observation~\citet{yang2026efficient} that, 
flipping one preference pair shifts its per-sample DPO gradient by a parameter-independent vector.
This property makes attack-impact magnitudes pre-computable before training, without knowing the actual attack strategy.
Based on this structure, we design robust DPO methods that reduce the influence of samples whose flips would have large impact in the gradient space.

We propose \emph{Trim-$K$ Robust-DPO}, which removes the $K$ highest-impact preference pairs and optimizes DPO on the retained set, and \emph{Soft-Trim-$K$ Robust-DPO}, 
which retains all pairs while smoothly downweighting the high-impact region.
Both methods use interpretable pre-computed impact-based rules, and control potential flip-induced gradient effects without attempting to identify attacked samples.
Soft-Trim-$K$ provides a continuous family between standard DPO and Trim-$K$.
For both methods, we establish gradient-bias bounds relative to full-data clean DPO uniformly over all fixed attack sets of at most $K$ flips, 
explicitly separating the bias introduced by trimming or reweighting from surviving attack effects.
Under the practical assumption that training stays in a bounded region around the reference parameter, 
these bounds yield time-uniform parameter and policy deviation guarantees for full-batch gradient descent.
We also analyze quadratically regularized variants, quantifying the trade-off between increased curvature and additional bias relative to unregularized clean DPO.

We evaluate the methods on Stanford Human Preferences (SHP)~\citep{ethayarajh2022understanding}, HH-RLHF~\citep{bai2022training,ganguli2022red}, and UltraFeedback-Binarized~\citep{cui2023ultrafeedback}.
The SHP experiments compare our methods and their regularized variants with standard and robust DPO baselines under different attacks (Appendices~\ref{app:noatk-random} and~\ref{app:baselines}).
Further experiments examine datasets and budgets, matched regularization and score controls, budget misspecification (Appendices~\ref{app:matched-regularization}--\ref{app:k-selection}), and small nonlinear policies (Appendix~\ref{app:nonlinear-extension}).
The results illustrate the trade-offs in our analysis: hard trimming is most helpful when flips concentrate on high-impact pairs, soft trimming offers a smoother robustness--retention trade-off, and regularization can limit attack-induced deviation while introducing bias relative to clean DPO.

\textbf{Main contributions:}
(1) We propose Trim-$K$ and Soft-Trim-$K$ Robust-DPO without attempting to identify the attacked samples.
(2) We establish deterministic gradient-bias bounds relative to full-data clean DPO, uniformly over all fixed attack sets of at most $K$ flips.
(3) Under the stated assumptions, we derive time-uniform parameter and policy deviation guarantees and quantify the curvature gain and bias introduced by quadratic regularization.
(4) We evaluate the methods on three preference datasets, with matched controls, score-assignment ablations, and attack-budget and misspecification studies. We also conduct preliminary experiments with small nonlinear policies.

\textbf{Additional related work:}
Appendix~\ref{app:related-work} summarizes prior works on adversarial attacks and robust learning in bandits, RL, and RLHF.

\section{Problem Formulation}\label{sec:problem-formulation}

\subsection{Offline RLHF and Log-linear DPO}\label{sec:log-linear-dpo}

Offline RLHF usually combines feedback collection, reward modeling, and policy optimization~\citep{christiano2017deep,ouyang2022training,bai2022training}.
Given a prompt $x$, annotators compare two candidate responses $y_w$ and $y_l$ and return a preference $y_w \succ y_l \mid x$, where $y_w$ is preferred to $y_l$. 
Under Bradley-Terry model~\citep{bradley1952rank}, the preference probability:
\begin{equation*}
p^*(y_w \succ y_l \mid x)
=
\frac{\exp(r^*(x,y_w))}
{\exp(r^*(x,y_w))+\exp(r^*(x,y_l))},
\end{equation*}
where $r^*(x,y)$ is an unknown latent reward. 
Standard RLHF first learns a reward model from such comparisons and then optimizes a policy against that reward, 
usually with a KL penalty relative to a reference policy~\citep{ziegler2019fine,schulman2017proximal}.
DPO removes these two stages, 
and optimizes the policy directly from the same preference pairs~\citep{rafailov2023direct}.
Let $\pi_{\mathrm{ref}}$ be the reference policy and let $\beta>0$ control the strength of KL regularization. 
For a dataset $\mathcal D=\{(x^i,y_w^i,y_l^i)\}_{i=1}^N$,
let $\sigma(z)=1/(1+e^{-z})$,
the DPO objective is
\begin{equation}\label{eq:dpo-loss}
    \mathcal{L}_{\mathrm{DPO}}(\pi_\theta, \pi_{\mathrm{ref}}, \mathcal{D}) = - \mathbb{E}_{(x, y_w, y_l) \sim \mathcal{D}} \left[ \log \sigma \left( \beta \log \frac{\pi_\theta(y_w \mid x)}{\pi_{\mathrm{ref}}(y_w \mid x)} - \beta \log \frac{\pi_\theta(y_l \mid x)}{\pi_{\mathrm{ref}}(y_l \mid x)} \right) \right].
\end{equation}

DPO is efficient and stable in practice, 
and avoids the mismatch that can arise when the reward model is fit in a separate stage~\citep{rafailov2023direct}.

In this paper, we mainly consider a log-linear policy class, following the theoretical setup in the literature~\citep{nika2025policy,chowdhury2024provably,xu2025robust}.
In Appendix~\ref{app:nonlinear-extension}, we discuss the potential to extend our methods and analysis beyond the log-linear setting,
and the associated preliminary nonlinear experiments.

\begin{assumption}[Log-linear Policy~\citealp{nika2025policy}]\label{ass:log-linear-policy}
    Let $\psi$ be a $d$-dimensional feature mapping $\psi: \mathcal{X} \times \mathcal{Y} \to \mathbb{R}^{d}$ with $\max_{x,y} \|\psi(x,y)\|_2 \leq 1$.
    We consider the following class of log-linear policies parameterized by $\theta \in \mathbb{R}^{d}$ for all $(x,y) \in \mathcal{X} \times \mathcal{Y}$:
    \begin{equation*}
        \Pi^{\mathrm{log}} = \left\{ \pi_\theta : \pi_\theta(y \mid x) = \frac{\exp\left(\psi(x,y)^\top \theta\right)}{\sum_{y'} \exp\left(\psi(x,y')^\top \theta\right)} \right\}.
    \end{equation*}
\end{assumption}

For each preference pair $(x^i,y_w^i,y_l^i)$ in $\mathcal D$, the per-sample DPO loss is
\begin{equation}\label{eq:dpo-per-sample}
    \ell_i(\theta)
    :=
    \ell_{\mathrm{DPO}}(\theta;x^i,y_w^i,y_l^i)
    =
    - \log \sigma\!\left(
        \beta \Big[
        \log \tfrac{\pi_\theta(y_w^i\mid x^i)}{\pi_{\mathrm{ref}}(y_w^i\mid x^i)}
        - \log \tfrac{\pi_\theta(y_l^i\mid x^i)}{\pi_{\mathrm{ref}}(y_l^i\mid x^i)}
        \Big]
    \right).
\end{equation}
Under Assumption~\ref{ass:log-linear-policy}, the normalization constants cancel:
$\log \pi_\theta(y_w^i\mid x^i) - \log \pi_\theta(y_l^i\mid x^i) = \big\langle \theta,\ \psi(x^i,y_w^i)-\psi(x^i,y_l^i) \big\rangle$.
Define the feature difference $v_i$ and reference-policy offset $b_i$ for each sample, then the DPO logit $z_i(\theta)$ is affine in $\theta$, and $\ell_i(\theta)=-\log\sigma(z_i(\theta))$:
\begin{equation}\label{eq:dpo-logit-affine}
    v_i := \psi(x^i,y_w^i)-\psi(x^i,y_l^i),
    \quad
    b_i := \log \tfrac{\pi_{\mathrm{ref}}(y_w^i\mid x^i)}{\pi_{\mathrm{ref}}(y_l^i\mid x^i)},
    \quad
    z_i(\theta):=\beta\big(\langle \theta,v_i\rangle-b_i\big).
\end{equation}
The corresponding per-sample gradient $g_i(\theta)$ and its Hessian are
\begin{align}
    g_i(\theta)
    :=\nabla \ell_i(\theta)
    &= \beta\big(\sigma(z_i(\theta))-1\big)v_i,
    \label{eq:dpo-per-sample-grad}\\
    \nabla^2 \ell_i(\theta)
    &=\beta^2\sigma(z_i(\theta))(1-\sigma(z_i(\theta)))v_i v_i^\top \succeq 0.
    \label{eq:dpo-per-sample-hess}
\end{align}

The following identity comes from the gradient-space analysis of attacks on log-linear DPO in~\citet{yang2026efficient}.
We restate it here as an immediate structural property of the log-linear policy class.

\begin{lemma}[Constant gradient shift under label swap]\label{lem:constant-shift}
    Let $g_i(\theta)$ be the clean per-sample gradient in~\eqref{eq:dpo-per-sample-grad}.
    Let $g_i^{\mathrm{flip}}(\theta)$ denote the gradient after swapping $(y_w^i,y_l^i)$ to $(y_l^i,y_w^i)$.
    Then for all $\theta\in\mathbb R^d$,
    \begin{equation*}\label{eq:constant-shift}
        g_i^{\mathrm{flip}}(\theta)-g_i(\theta)=\beta v_i,
    \end{equation*}
    and hence the flip impact magnitude is
    $\|g_i^{\mathrm{flip}}(\theta)-g_i(\theta)\|_2=\beta\|v_i\|_2$.
\end{lemma}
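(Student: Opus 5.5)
The plan is a direct computation in the affine-logit representation \eqref{eq:dpo-logit-affine}. First I would determine how the quantities $v_i$, $b_i$ and $z_i(\theta)$ change when $(y_w^i,y_l^i)$ is swapped to $(y_l^i,y_w^i)$. By definition, the feature difference becomes $v_i^{\mathrm{flip}}=\psi(x^i,y_l^i)-\psi(x^i,y_w^i)=-v_i$. The reference offset becomes $b_i^{\mathrm{flip}}=\log\frac{\pi_{\mathrm{ref}}(y_l^i\mid x^i)}{\pi_{\mathrm{ref}}(y_w^i\mid x^i)}=-b_i$. Hence the flipped logit is $z_i^{\mathrm{flip}}(\theta)=\beta(\langle\theta,-v_i\rangle+b_i)=-z_i(\theta)$ for every $\theta$. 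This uses only Assumption~\ref{ass:log-linear-policy}, through the cancellation of the normalizing constants already noted before \eqref{eq:dpo-logit-affine}.

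Next I would plug the flipped pair into the gradient formula \eqref{eq:dpo-per-sample-grad}, which holds for any pair. This gives
\[
g_i^{\mathrm{flip}}(\theta)=\beta\bigl(\sigma(-z_i(\theta))-1\bigr)(-v_i)=\beta\bigl(1-\sigma(-z_i(\theta))\bigr)v_i=\beta\,\sigma(z_i(\theta))\,v_i,
\]
where the last equality uses the logistic identity $\sigma(-z)=1-\sigma(z)$. Subtracting the clean gradient $g_i(\theta)=\beta(\sigma(z_i(\theta))-1)v_i$ leaves
\[
g_i^{\mathrm{flip}}(\theta)-g_i(\theta)=\beta\bigl[\sigma(z_i)-\sigma(z_i)+1\bigr]v_i=\beta v_i.
\]
The $\theta$-dependent sigmoid terms cancel exactly, so the difference is parameter-independent.

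Finally, the norm statement $\|g_i^{\mathrm{flip}}(\theta)-g_i(\theta)\|_2=\beta\|v_i\|_2$ follows because $\beta>0$.

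There is no real obstacle here. The only point needing care is the sign bookkeeping: both $v_i$ and $b_i$ flip sign, so $z_i$ flips sign, and one must then apply $\sigma(-z)=1-\sigma(z)$ correctly.
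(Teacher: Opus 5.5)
Your proposal is correct and follows the paper's proof essentially line for line. Both note that the swap sends $(v_i,b_i,z_i)$ to $(-v_i,-b_i,-z_i)$, apply the gradient formula together with $\sigma(-z)=1-\sigma(z)$ to get $g_i^{\mathrm{flip}}(\theta)=\beta\sigma(z_i(\theta))v_i$, and subtract; your remark that $\beta>0$ gives the norm identity is a detail the paper leaves implicit.
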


We assume training stays in a bounded region, motivated by initialization at the reference, finite training horizons, and norm control. Appendix~\ref{app:bounded-region} explains the resulting local curvature, its role in time-uniform stability, and its distinction from quadratic regularization~\citep{nika2025policy}.

\begin{assumption}[Bounded training region]\label{ass:bounded-region}
There exists a radius $R<\infty$ such that all clean and attacked gradient-descent iterates considered in our theory remain in
$\mathbb B_R(\theta_{\mathrm{ref}}):=\{\theta:\|\theta-\theta_{\mathrm{ref}}\|_2\le R\}$,
where $\theta_{\mathrm{ref}}$ is the parameter of log-linear reference policy $\pi_{\mathrm{ref}}$.
Moreover, the feature covariance of each trimmed or weighted objective is positive definite on a common subspace containing all update directions.
\end{assumption}

\subsection{Budgeted Adversarial Preference Attack and Robust Defense}\label{sec:threat-model}

Given a clean preference dataset $\mathcal{D}=\{(x^i,y_w^i,y_l^i)\}_{i=1}^N$, a label-flip attacker selects a set
$A\subseteq[N]$ with $|A|\le K$ and forms the attacked dataset, denoted by tildes:
\begin{equation*}
\tilde{\mathcal D}(A)
=
\{(x^i,\tilde y_w^i,\tilde y_l^i)\}_{i=1}^N,
\qquad
(\tilde y_w^i,\tilde y_l^i)
=
\begin{cases}
(y_l^i,y_w^i), & i\in A,\\
(y_w^i,y_l^i), & i\notin A.
\end{cases}
\end{equation*}
A general budgeted label-flip attacker may choose any fixed support $A$.
A targeted attacker is the special case that selects $A$ to move training toward an attacker-chosen and potentially misaligned target policy $\pi_{\theta^\dagger}$. Our defense algorithms do not rely on the attack methods used by the attacker.

The robust defender observes only: (1) the possibly attacked dataset $\tilde{\mathcal D}(A)$; (2) an upper-bound $K$ on the attack budget.
It does not know the attacked index set $A$, has no clean preference orientations, 
and no clean validation data is assumed.
The ground-truth comparison target is the standard DPO objective on the full clean dataset:
\begin{equation}\label{eq:true-clean-F}
F(\theta)\equiv F_{\mathrm{DPO}}(\theta):=\frac{1}{N}\sum_{i=1}^N \ell_i(\theta).
\end{equation}
Let $\mathrm{R\text{-}DPO}$ denote a robust DPO rule, and let $\tilde F_{\mathrm{R\text{-}DPO},A}$ denote its differentiable training objective constructed from the observed dataset $\tilde{\mathcal D}(A)$ and budget upper bound $K$.

Our goal is to design $\mathrm{R\text{-}DPO}$ so that robust DPO on the attacked data remains close to clean DPO on the full clean data, uniformly over all admissible attack sets.
More precisely, for the clean and attacked full-batch iterates
\begin{equation*}
\theta_{t+1}^{\mathrm{cln}}
=
\theta_t^{\mathrm{cln}}-\eta\nabla F(\theta_t^{\mathrm{cln}}),
\qquad
\theta_{t+1}^{\mathrm{atk}}
=
\theta_t^{\mathrm{atk}}-\eta\nabla \tilde F_{\mathrm{R\text{-}DPO},A}(\theta_t^{\mathrm{atk}}),
\end{equation*}
initialized from the same $\theta_0$, we seek deterministic bounds of the form
\begin{align*}
\sup_{A \subseteq[N], |A|\le K}
\sup_{\theta\in\mathbb B_R(\theta_{\mathrm{ref}})}
\big\|\nabla \tilde F_{\mathrm{R\text{-}DPO},A}(\theta)-\nabla F(\theta)\big\|_2
&\le B_g(\mathrm{R\text{-}DPO},K),\\
\sup_{A\subseteq[N], |A|\le K}
\sup_{T\ge 1}
\big\|\theta_T^{\mathrm{atk}}-\theta_T^{\mathrm{cln}}\big\|_2
&\le B_\theta(\mathrm{R\text{-}DPO},K),\\
\sup_{A\subseteq[N], |A|\le K}
\sup_{T\ge 1}
\mathbb E_x\!\left[
D_{\mathrm{KL}}\!\big(\pi_{\theta_T^{\mathrm{atk}}}(\cdot\mid x)\,\|\,\pi_{\theta_T^{\mathrm{cln}}}(\cdot\mid x)\big)
\right]
&\le B_\pi(\mathrm{R\text{-}DPO},K).
\end{align*}
The bounds should depend explicitly on the attack budget $K$, 
the feature geometry of the preference pairs, 
the clean bias introduced by the defense rule, 
and the local curvature of the optimized robust objective on $\mathbb B_R(\theta_{\mathrm{ref}})$.
When there is no attack ($K=0$), the $\mathrm{R\text{-}DPO}$ should reduce to standard DPO.

\section{Methods}\label{sec:methods}

The identity in Lemma~\ref{lem:constant-shift} makes the magnitude of each comparison's potential label-flip effect pre-computable from the observed pair geometry.
For each observed pair, define the flip-impact score
\begin{equation}\label{eq:flip-impact-score}
s_i
:=
\beta\big\|\psi(x^i,\tilde y_w^i)-\psi(x^i,\tilde y_l^i)\big\|_2
=
\beta\|v_i\|_2.
\end{equation}
Because a label flip only changes the sign of the feature difference, $s_i$ is invariant to the observed winner and loser.
We first develop Trim-$K$ Robust-DPO, which removes the $K$ largest-score pairs, then introduce Soft-Trim-$K$ Robust-DPO as a smooth extension to preserve more clean information.
Both methods are static and require only $\tilde{\mathcal D}(A)$ and $K$.
\citet{yang2026efficient} uses the same identity and a known target $\theta^\dagger$ to construct attackers.
But we ask a different question: with limited knowledge of the attack, can a defense control the effect of the attack?
The analysis below address this question.

\subsection{Error Decomposition}\label{sec:error-decomposition}

We distinguish three objectives in the analysis. The comparison target $F$ is the full-data clean DPO objective in~\eqref{eq:true-clean-F}.
For a given robust rule, $\tilde F_{\mathrm{R\text{-}DPO},A}$ is the objective actually optimized on the attacked dataset, 
while $F_{\mathrm{R\text{-}DPO}}$ is its auxiliary clean counterpart, obtained by using the clean preference labels while keeping the defense methods fixed.
This auxiliary objective is used only for analysis:
\begin{equation}\label{eq:gradient-bias-decomposition}
\nabla \tilde F_{\mathrm{R\text{-}DPO},A}(\theta)-\nabla F(\theta)
=
\underbrace{\big(\nabla F_{\mathrm{R\text{-}DPO}}(\theta)-\nabla F(\theta)\big)}_{\text{clean defense bias}}
+
\underbrace{\big(\nabla \tilde F_{\mathrm{R\text{-}DPO},A}(\theta)-\nabla F_{\mathrm{R\text{-}DPO}}(\theta)\big)}_{\text{surviving attack bias}}.
\end{equation}
The first term measures the error introduced by the defense even on clean labels, 
and the second measures the residual effect of the label flips under that same defense.
The following subsections define these objectives for each robust DPO method and bound the two terms.

\subsection{Trim-$K$ Robust-DPO}\label{sec:trim-k-robust-dpo}

Rank the samples by decreasing impact score, and let $r(i)\in[N]$ be the resulting rank, with $s_{(j)}$ denoting the score at rank $j$.
For a upper bound $K<N$, define $\delta(K):=s_{(K+1)}$.
Trim-$K$ Robust-DPO removes the indices with $r(i)\le K$ and keeps
$G:=\{i\in[N]:r(i)>K\}$.
Thus $|G|=N-K$ and $\max_{i\in G}s_i=\delta(K)$.
The rule is a worst-case safeguard: it removes the pairs whose flips could have the largest attacked gradient effect, without trying to identify the attacked set exactly.

Write $\tilde \ell_i(\theta):=\ell_{\mathrm{DPO}}(\theta;x^i,\tilde y_w^i,\tilde y_l^i)$ for the per-sample loss on the observed pair produced by attack set $A$.
For the kept set $G$, define the clean trimmed auxiliary objective and the attacked objective actually optimized by the defender:
\begin{equation*}
F_G(\theta)
:=
\frac{1}{|G|}\sum_{i\in G}\ell_i(\theta),
\qquad
\tilde F_{G,A}(\theta)
:=
\frac{1}{|G|}\sum_{i\in G}\tilde \ell_i(\theta).
\end{equation*}
In the notation of Section~\ref{sec:error-decomposition}, $F_{\mathrm{R\text{-}DPO}}=F_G$ and $\tilde F_{\mathrm{R\text{-}DPO},A}=\tilde F_{G,A}$.
Both objectives use the same kept set $G$, which is fixed during training and invariant to label flips.
Without attack ($K=0$), this objective reduces exactly to standard DPO on the full dataset.
If $A=\varnothing$ but the supplied upper bound is $K>0$, trimming can introduce clean-data bias.
Algorithm~\ref{alg:trimk} in Appendix~\ref{app:algorithms} gives the full implementation.

By Lemma~\ref{lem:constant-shift}, only indices in $A\cap G$ contribute a surviving gradient shift, 
so the gradient difference decomposes as
$
\nabla \tilde F_{G,A}(\theta)-\nabla F(\theta)
=
\big(\nabla F_G(\theta)-\nabla F(\theta)\big)
+
\frac{1}{|G|}\sum_{i\in A\cap G}\beta v_i
$.
Define the clean trimming bias on the bounded training region by
\begin{equation}\label{eq:clean-trim-bias}
\mathsf B_G^{\mathrm{cln}}(R)
:=
\sup_{\theta\in\mathbb B_R(\theta_{\mathrm{ref}})}
\|\nabla F_G(\theta)-\nabla F(\theta)\|_2 .
\end{equation}
This term is zero only when the trimmed subset has the same average clean gradient as the full dataset,
and in particular it is zero in the no-attack control $K=0$, where $G=[N]$.
A closed-form upper bound on $\mathsf B_G^{\mathrm{cln}}(R)$ that depends only on the score statistics is proved as Lemma~\ref{lem:trim-bias-bound} in Appendix~\ref{app:supporting-lemmas}:
\begin{equation*}
\mathsf B_G^{\mathrm{cln}}(R)\;\le\;\frac{K\,\delta(K)+\sum_{j=1}^K s_{(j)}}{N}\;\le\;\frac{4\beta K}{N}.
\end{equation*}
For compact notation, define
\begin{equation}\label{eq:Delta-G-def}
\Delta_G
:=
\mathsf B_G^{\mathrm{cln}}(R)+\frac{K\,\delta(K)}{N-K},
\qquad
\bar\Delta_G
:=
\frac{K\,\delta(K)+\sum_{j=1}^K s_{(j)}}{N}
+
\frac{K\,\delta(K)}{N-K}.
\end{equation}

\begin{theorem}[Worst-case gradient robustness of Trim-$K$ Robust-DPO]\label{thm:K-robust-grad}
Let $G$ be the kept set after removing the top-$K$ largest impact scores, so that $|G|=N-K$ and
$\max_{i\in G}s_i\le \delta(K)$.
Then for every attack set $A\subseteq[N]$ with $|A|\le K$ and every $\theta\in\mathbb B_R(\theta_{\mathrm{ref}})$,
\begin{equation*}
\|\nabla \tilde F_{G,A}(\theta)-\nabla F(\theta)\|_2
\le
\Delta_G
\le
\bar\Delta_G .
\end{equation*}
\end{theorem}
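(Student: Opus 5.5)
The plan is to use the error decomposition~\eqref{eq:gradient-bias-decomposition} with $F_{\mathrm{R\text{-}DPO}}=F_G$ and $\tilde F_{\mathrm{R\text{-}DPO},A}=\tilde F_{G,A}$, then bound the two terms separately with the triangle inequality. Fix any $A\subseteq[N]$ with $|A|\le K$ and any $\theta\in\mathbb B_R(\theta_{\mathrm{ref}})$, and write
\begin{equation*}
\nabla \tilde F_{G,A}(\theta)-\nabla F(\theta)
=
\big(\nabla F_G(\theta)-\nabla F(\theta)\big)
+
\big(\nabla \tilde F_{G,A}(\theta)-\nabla F_G(\theta)\big).
\end{equation*}
The first term has norm at most $\mathsf B_G^{\mathrm{cln}}(R)$ directly by the definition~\eqref{eq:clean-trim-bias}, since $\theta$ lies in the ball and $G$ does not depend on $A$. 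The scores $s_i$ are flip-invariant, so the ranking and hence $G$ are the same whichever attack set produced the observed data.

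For the surviving attack term, observe that $\tilde\ell_i=\ell_i$ for $i\in G\setminus A$. For $i\in A\cap G$, Lemma~\ref{lem:constant-shift} gives $\nabla\tilde\ell_i(\theta)-\nabla\ell_i(\theta)=\beta v_i$ for every $\theta$. Hence
\begin{equation*}
\nabla \tilde F_{G,A}(\theta)-\nabla F_G(\theta)=\frac{1}{|G|}\sum_{i\in A\cap G}\beta v_i ,
\end{equation*}
which is independent of $\theta$. Its norm is at most $\frac{1}{N-K}\sum_{i\in A\cap G}s_i$. This sum has at most $|A|\le K$ terms, each with $s_i\le\max_{j\in G}s_j=\delta(K)$. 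So the term is bounded by $K\delta(K)/(N-K)$, and combining with the first term gives $\Delta_G$.

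The second inequality $\Delta_G\le\bar\Delta_G$ reduces to the closed-form bias bound $\mathsf B_G^{\mathrm{cln}}(R)\le (K\delta(K)+\sum_{j\le K}s_{(j)})/N$ of Lemma~\ref{lem:trim-bias-bound}, which I may invoke. If it had to be proved, the route would be as follows. Write
\begin{equation*}
\nabla F_G-\nabla F=\Big(\tfrac{1}{N-K}-\tfrac1N\Big)\sum_{i\in G}g_i-\tfrac1N\sum_{i\notin G}g_i .
\end{equation*}
Then use $\|g_i(\theta)\|_2=\beta(1-\sigma(z_i))\|v_i\|_2\le s_i$ from~\eqref{eq:dpo-per-sample-grad}. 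The first piece is bounded by $\frac{K}{N(N-K)}(N-K)\delta(K)=K\delta(K)/N$, and the second by $\frac1N\sum_{j\le K}s_{(j)}$.

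The main obstacle is only bookkeeping. One must ensure that $G$ is genuinely attack-invariant, which follows from the invariance of $s_i$ under label swap. One must also check that $|A\cap G|\le K$ is used rather than any finer count, so that the bound holds uniformly over all admissible $A$.
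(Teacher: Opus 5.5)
Your proof is correct and matches the paper's argument. You use the same split into the clean trimming bias, bounded by the definition of $\mathsf B_G^{\mathrm{cln}}(R)$, plus the $\theta$-independent surviving shift $\frac{1}{|G|}\sum_{i\in A\cap G}\beta v_i$, bounded by $K\delta(K)/(N-K)$; the second inequality $\Delta_G\le\bar\Delta_G$ then comes from Lemma~\ref{lem:trim-bias-bound}, and your sketch of that lemma follows the paper's own proof of it.
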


This theorem separates the two effects that determine distance to clean DPO: 
the deterministic clean trimming bias and the surviving attack bias. 
The clean-trimming term scales as $K/N$ and is governed by the top-$K$ score sum together with the cutoff $\delta(K)$, 
while the attack-dependent term is controlled by the $(K+1)$-st largest score rather than the largest score in the full dataset.

To propagate this gradient-bias bound through training, define the empirical feature covariance and the bounded-region logistic curvature constant
\begin{equation}\label{eq:trim-cov-curvature}
\Sigma_G:=\frac{1}{|G|}\sum_{i\in G} v_i v_i^\top,
\qquad
\underline c_G(R):=
\min_{\theta\in \mathbb B_R(\theta_{\mathrm{ref}})}\min_{i\in G}
\sigma(z_i(\theta))\bigl(1-\sigma(z_i(\theta))\bigr).
\end{equation}
Because $G$ is finite and $\mathbb B_R(\theta_{\mathrm{ref}})$ is compact, Assumption~\ref{ass:bounded-region} implies $\underline c_G(R)>0$.
Let
\begin{equation}\label{eq:trim-mu-G}
\mu_G:=\beta^2\,\underline c_G(R)\,\lambda_{\min}(\Sigma_G)>0.
\end{equation}
Here and below, $\lambda_{\min}$ denotes the smallest eigenvalue on the common trained subspace of Assumption~\ref{ass:bounded-region}.
The supporting smoothness, restricted strong-convexity, and gradient-descent stability lemmas are stated in Appendix~\ref{app:supporting-lemmas}.
Proofs of the theoretical results are in Appendix~\ref{app:proofs}.

\begin{theorem}[Time-uniform stability for Trim-$K$ Robust-DPO]\label{thm:trim-K-iter-stability}
Under Assumptions~\ref{ass:log-linear-policy} and~\ref{ass:bounded-region}, suppose the clean DPO iterates and the attacked Trim-$K$ iterates remain in $\mathbb B_R(\theta_{\mathrm{ref}})$.
Run full-batch gradient descent from the same initialization with step size $\eta\le 1/\beta^2$:
$\theta_{t+1}^{\mathrm{cln}}=\theta_t^{\mathrm{cln}}-\eta\nabla F(\theta_t^{\mathrm{cln}})$,
$\theta_{t+1}^{\mathrm{atk}}=\theta_t^{\mathrm{atk}}-\eta\nabla \tilde F_{G,A}(\theta_t^{\mathrm{atk}})$.
Then for every $T\ge 1$,
\begin{equation*}
\|\theta_T^{\mathrm{atk}}-\theta_T^{\mathrm{cln}}\|_2
\le
\frac{\Delta_G\bigl(1-(1-\eta\mu_G)^T\bigr)}{\mu_G}
\le
\frac{\Delta_G}{\mu_G}.
\end{equation*}
In particular, Lemma~\ref{lem:trim-bias-bound} gives
$
\|\theta_T^{\mathrm{atk}}-\theta_T^{\mathrm{cln}}\|_2
\le
\bar\Delta_G/\mu_G.
$
\end{theorem}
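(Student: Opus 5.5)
The approach is a standard perturbed gradient-descent stability argument. We compare the two recursions by adding and subtracting $\eta\nabla\tilde F_{G,A}(\theta_t^{\mathrm{cln}})$. Let $e_t:=\theta_t^{\mathrm{atk}}-\theta_t^{\mathrm{cln}}$, with $e_0=0$. Then
\begin{equation*}
e_{t+1}
=
\Bigl[\theta_t^{\mathrm{atk}}-\eta\nabla\tilde F_{G,A}(\theta_t^{\mathrm{atk}})\Bigr]
-\Bigl[\theta_t^{\mathrm{cln}}-\eta\nabla\tilde F_{G,A}(\theta_t^{\mathrm{cln}})\Bigr]
-\eta\Bigl[\nabla\tilde F_{G,A}(\theta_t^{\mathrm{cln}})-\nabla F(\theta_t^{\mathrm{cln}})\Bigr].
\end{equation*}
The last bracket is bounded in norm by $\eta\Delta_G$ via Theorem~\ref{thm:K-robust-grad}, since $\theta_t^{\mathrm{cln}}\in\mathbb B_R(\theta_{\mathrm{ref}})$. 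The first difference is the gradient map $\Phi(\theta)=\theta-\eta\nabla\tilde F_{G,A}(\theta)$ of a single fixed objective, evaluated at two points in the ball. It therefore suffices to show that $\Phi$ contracts by a factor $(1-\eta\mu_G)$ on the relevant subspace, which gives
$\|e_{t+1}\|\le(1-\eta\mu_G)\|e_t\|+\eta\Delta_G$. Unrolling this recursion yields $\|e_T\|\le\eta\Delta_G\sum_{s<T}(1-\eta\mu_G)^s=\Delta_G(1-(1-\eta\mu_G)^T)/\mu_G$. This is at most $\Delta_G/\mu_G$ because $0<\eta\mu_G\le1$. The final claim then follows from $\Delta_G\le\bar\Delta_G$, which is Lemma~\ref{lem:trim-bias-bound} together with the second inequality of Theorem~\ref{thm:K-robust-grad}.

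For the contraction step, I use the Hessian formula~\eqref{eq:dpo-per-sample-hess}. Flipping a label replaces $v_i$ by $-v_i$, so the per-sample Hessian of $\tilde\ell_i$ still equals $\beta^2\sigma(\tilde z_i)(1-\sigma(\tilde z_i))v_iv_i^\top$. Moreover, $\sigma(-z)(1-\sigma(-z))=\sigma(z)(1-\sigma(z))$, and the flipped logit is $\tilde z_i(\theta)=-z_i(\theta)$, because $b_i$ also changes sign under the swap. Hence $\nabla^2\tilde F_{G,A}=\nabla^2F_G$ pointwise, and I can work with the clean Hessian directly. On the segment between the two iterates, which lies in the convex ball, each weight satisfies
\begin{equation*}
\underline c_G(R)\le\sigma(z_i)(1-\sigma(z_i))\le\tfrac14 .
\end{equation*}
This gives $\mu_G I\preceq H\preceq\beta^2\lambda_{\max}(\Sigma_G)/4\cdot I\preceq\beta^2 I$ on the common subspace, where $H$ is the Hessian averaged along the segment. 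For the upper bound I use $\|v_i\|\le2$, so $\lambda_{\max}(\Sigma_G)\le4$. By the mean value theorem, $\Phi(\theta)-\Phi(\theta')=(I-\eta H)(\theta-\theta')$, and the choice $\eta\le1/\beta^2$ makes $0\preceq I-\eta H\preceq(1-\eta\mu_G)I$ on that subspace. This gives the contraction.

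The main obstacle is the restriction to a subspace. The Hessian is only positive definite on the common subspace of Assumption~\ref{ass:bounded-region}, so $e_t$ must be shown to stay in it. Both updates move only along the spans of the $v_i$ (the clean updates over all of $[N]$, the attacked ones over $G$). Starting from the same $\theta_0$, $e_t$ therefore lies in the span of all update directions, which the assumption places inside the common subspace. I will state this explicitly and apply the eigenvalue bounds only to vectors in that subspace. A second point to check is that the contraction uses the curvature of $\tilde F_{G,A}$ (equivalently $F_G$) rather than that of $F$. This matches the constant $\mu_G$ in the statement, which is why I add and subtract at the clean iterate and not at the attacked one.
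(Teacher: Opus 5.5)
Your proposal is correct and follows the paper's proof essentially step for step. The paper also uses $\nabla^2\tilde F_{G,A}=\nabla^2F_G$ to get $\mu_G$-strong convexity and $\beta^2$-smoothness of the attacked objective, bounds the perturbation at the clean iterate by $\Delta_G$ via Theorem~\ref{thm:K-robust-grad}, and unrolls $\|e_{t+1}\|_2\le(1-\eta\mu_G)\|e_t\|_2+\eta\Delta_G$, packaged as Lemma~\ref{lem:sc-time-uniform}. The only difference is that you derive the $(1-\eta\mu_G)$ contraction directly from the segment-averaged Hessian and make the restriction to the common subspace explicit, whereas the paper cites the standard contraction result and leaves the subspace implicit.
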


Theorem~\ref{thm:trim-K-iter-stability} shows that the Trim-$K$ objective has a time-uniform robustness guarantee once we restrict attention to the practical training region.
This is a deterministic deviation bound for every fixed admissible attack set and every training horizon under the stated assumptions. 
It does not rank robust DPO methods, or predict their empirical dominance for a particular attack. 
We note that the constant $\mu_G$ can still be small. 
In Section~\ref{sec:regularized-robust-dpo}, we address this issue and propose regularized variants that can produce tighter behavior.

Corollary~\ref{cor:trim-K-policy-kl} converts the parameter deviation into policy deviation.

\begin{corollary}[Trim-$K$ policy deviation bound]\label{cor:trim-K-policy-kl}
Under the conditions of Theorem~\ref{thm:trim-K-iter-stability}, for any fixed context $x$,
$
D_{\mathrm{KL}}\!\big(\pi_{\theta_T^{\mathrm{atk}}}(\cdot|x)\,\|\,\pi_{\theta_T^{\mathrm{cln}}}(\cdot|x)\big)
\le \frac{1}{2} \|\theta_T^{\mathrm{atk}}-\theta_T^{\mathrm{cln}}\|_2^2
$.
Consequently,
\begin{equation*}
\mathbb E_x\!\left[
D_{\mathrm{KL}}\!\big(\pi_{\theta_T^{\mathrm{atk}}}(\cdot|x)\,\|\,\pi_{\theta_T^{\mathrm{cln}}}(\cdot|x)\big)
\right]
\;\le\;
\frac{1}{2}\left(\frac{\bar\Delta_G}{\mu_G}\right)^2,
\end{equation*}
where the final bound follows from Theorem~\ref{thm:trim-K-iter-stability} and Lemma~\ref{lem:trim-bias-bound}.
\end{corollary}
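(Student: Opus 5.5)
The plan is to prove the pointwise KL bound for a fixed context $x$ first, and then take expectations and apply Theorem~\ref{thm:trim-K-iter-stability}. Fix $x$ and write $A(\theta):=\log\sum_{y'}\exp(\psi(x,y')^\top\theta)$ for the log-partition function. For the log-linear class of Assumption~\ref{ass:log-linear-policy}, the standard exponential-family identity gives
\begin{equation*}
D_{\mathrm{KL}}\big(\pi_{\theta_1}(\cdot|x)\,\|\,\pi_{\theta_2}(\cdot|x)\big)=A(\theta_2)-A(\theta_1)-\langle\nabla A(\theta_1),\theta_2-\theta_1\rangle .
\end{equation*}
I would verify this directly. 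One has $\log\pi_{\theta_1}-\log\pi_{\theta_2}=\psi^\top(\theta_1-\theta_2)-A(\theta_1)+A(\theta_2)$ and $\nabla A(\theta_1)=\mathbb E_{\pi_{\theta_1}}[\psi]$. Taking the expectation of the first identity under $\pi_{\theta_1}$ therefore yields exactly the Bregman divergence of $A$.

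The key step is a uniform Hessian bound. We have $\nabla^2A(\theta)=\mathrm{Cov}_{\pi_\theta}(\psi(x,Y))$. For any unit vector $u$,
\begin{equation*}
u^\top\nabla^2A(\theta)u=\mathrm{Var}_{\pi_\theta}(u^\top\psi)\le\mathbb E_{\pi_\theta}[(u^\top\psi)^2]\le\max_y\|\psi(x,y)\|_2^2\le 1,
\end{equation*}
using $\|\psi\|_2\le1$. So $A$ is $1$-smooth. Taylor's theorem with integral remainder then bounds the Bregman divergence:
\begin{equation*}
D_{\mathrm{KL}}\le\int_0^1(1-s)\,(\theta_2-\theta_1)^\top\nabla^2A\big(\theta_1+s(\theta_2-\theta_1)\big)(\theta_2-\theta_1)\,ds\le\tfrac12\|\theta_2-\theta_1\|_2^2 .
\end{equation*}
Setting $\theta_1=\theta_T^{\mathrm{atk}}$ and $\theta_2=\theta_T^{\mathrm{cln}}$ gives the first claim. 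Note that this step needs no bounded-region assumption. There is a slightly tighter constant available, since $\mathrm{Var}(u^\top\psi)\le\frac14(\max-\min)^2\le1$ as well, but it is not needed.

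For the second claim, the pointwise bound holds for every $x$, and the right-hand side $\frac12\|\theta_T^{\mathrm{atk}}-\theta_T^{\mathrm{cln}}\|_2^2$ does not depend on $x$. Taking $\mathbb E_x$ therefore preserves it. Theorem~\ref{thm:trim-K-iter-stability}, combined with Lemma~\ref{lem:trim-bias-bound}, gives $\|\theta_T^{\mathrm{atk}}-\theta_T^{\mathrm{cln}}\|_2\le\bar\Delta_G/\mu_G$ for every $T\ge1$ and every admissible $A$. Squaring this nonnegative inequality and multiplying by $\frac12$ yields the stated bound.

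The only potentially delicate point is the Hessian bound: one must see that the covariance is bounded by the second moment, so that the centering does not hurt. Everything else is routine.
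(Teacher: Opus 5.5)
Your proposal is correct and follows the paper's proof: you write the conditional KL as the Bregman divergence of the log-partition function, show that function is $1$-smooth from $\|\psi\|_2\le 1$, deduce the $\tfrac12\|\theta_T^{\mathrm{atk}}-\theta_T^{\mathrm{cln}}\|_2^2$ bound, then take $\mathbb E_x$ and invoke Theorem~\ref{thm:trim-K-iter-stability} with $\Delta_G\le\bar\Delta_G$ from Lemma~\ref{lem:trim-bias-bound}. Your covariance computation and integral-remainder Taylor step simply spell out what the paper states as $\|\nabla^2 A_x\|_{\mathrm{op}}\le 1$ and the descent inequality.
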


\subsection{Soft-Trim-$K$ Robust-DPO}\label{sec:soft-trim-k-robust-dpo}

The hard trimming rule of Section~\ref{sec:trim-k-robust-dpo} gives the clearest worst-case guarantee, but it can also remove clean high-impact samples.
This is desirable when the attacker is concentrated on those samples, but it may discard too much useful information when the true attack pattern is widespread or when the budget upper bound $K$ is conservative.
To handle this trade-off, we introduce a soft version that downweights the top-impact samples instead of deleting them.

Using the deterministic ranking $r(i)$ introduced in Section~\ref{sec:trim-k-robust-dpo}, fix a temperature parameter $\tau>0$ and assign the exponential weights
\begin{equation}\label{eq:exp-soft-trim-weight}
w_i^{(\tau)}
=
\begin{cases}
\exp\!\big(-(K+1-r(i))/\tau\big), & r(i)\le K, \\
1, & r(i)\ge K+1.
\end{cases}
\end{equation}
The highest-impact samples receive small positive weights, while samples after the trimming cutoff keep weight one.
When $K=0$, Soft-Trim is standard DPO. 
When $K>0$, $\tau\downarrow 0$ approaches hard trimming and $\tau\to\infty$ approaches DPO.
The weighting rule is illustrated in Appendix~\ref{app:soft-trim-weights}, Figure~\ref{fig:exp-soft-trim-weights}.

Let $W_\tau := \sum_{i=1}^N w_i^{(\tau)}$ and $\varepsilon_\tau:=e^{-1/\tau}$.
Define the clean weighted auxiliary objective and the attacked objective actually optimized by the defender:
\begin{equation*}
F_\tau(\theta)
:=
\frac{1}{W_\tau}\sum_{i=1}^N w_i^{(\tau)}\ell_i(\theta),
\qquad
\tilde F_{\tau,A}(\theta)
:=
\frac{1}{W_\tau}\sum_{i=1}^N w_i^{(\tau)}\tilde \ell_i(\theta).
\end{equation*}
In the notation of Section~\ref{sec:error-decomposition}, $F_{\mathrm{R\text{-}DPO}}=F_\tau$ and $\tilde F_{\mathrm{R\text{-}DPO},A}=\tilde F_{\tau,A}$.
Both objectives use the same weights $w_i^{(\tau)}$, which are fixed during training and invariant to label flips.
Algorithm~\ref{alg:softtrim} in Appendix~\ref{app:algorithms} gives the implementation.

For each sample, define the weighted impact
$\rho_i^{(\tau)}:=w_i^{(\tau)}s_i$.
Let $\rho_{[1]}^{(\tau)}\ge \rho_{[2]}^{(\tau)}\ge \cdots\ge \rho_{[N]}^{(\tau)}$ be these weighted impacts sorted in descending order, and define
\begin{equation*}
\Gamma_\tau(K)
:=
\max_{A\subseteq[N],\,|A|\le K}\sum_{i\in A}\rho_i^{(\tau)}
=
\sum_{j=1}^K\rho_{[j]}^{(\tau)}.
\end{equation*}
Thus $\Gamma_\tau(K)/W_\tau$ is the largest normalized weighted flip-impact mass that any attack set of at most $K$ samples can have.
Define the clean weighting bias on the bounded training region by
\begin{equation*}
\mathsf B_\tau^{\mathrm{cln}}(R)
:=
\sup_{\theta\in\mathbb B_R(\theta_{\mathrm{ref}})}
\|\nabla F_\tau(\theta)-\nabla F(\theta)\|_2 .
\end{equation*}
Let
$a_j^{(\tau)}:=\exp \bigl(-(K+1-j)/\tau\bigr)$,
$D_\tau:=K-\sum_{j=1}^K a_j^{(\tau)}$,
$\bar s=\tfrac{1}{N}\sum_i s_i$.
A closed-form score-only upper bound is proved as Lemma~\ref{lem:softtrim-bias-bound} in Appendix~\ref{app:supporting-lemmas}: 
\begin{equation*}
\mathsf B_\tau^{\mathrm{cln}}(R)
\le
\frac{D_\tau\bar s+\sum_{j=1}^K(1-a_j^{(\tau)})s_{(j)}}{W_\tau}.
\end{equation*}
For compact notation, define
\begin{equation*}
\Delta_\tau
:=
\mathsf B_\tau^{\mathrm{cln}}(R)+\frac{\Gamma_\tau(K)}{W_\tau},
\qquad
\bar\Delta_\tau
:=
\frac{D_\tau\bar s+\sum_{j=1}^K(1-a_j^{(\tau)})s_{(j)}}{W_\tau}
+
\frac{\varepsilon_\tau\sum_{j=1}^K s_{(j)}+K\,\delta(K)}{W_\tau}.
\end{equation*}
For $K=0$, all sums above are empty, $D_\tau=\Gamma_\tau(0)=0$, and $\Delta_\tau=\bar\Delta_\tau=0$.

\begin{theorem}[Worst-case gradient robustness of Soft-Trim-$K$ Robust-DPO]\label{thm:softtrim-grad}
For any attack set $A\subseteq[N]$ with $|A|\le K$ and every $\theta\in\mathbb B_R(\theta_{\mathrm{ref}})$,
\begin{equation}\label{eq:softtrim-exact-bound}
\|\nabla \tilde F_{\tau,A}(\theta)-\nabla F(\theta)\|_2
\le
\Delta_\tau.
\end{equation}
Moreover, for the exponential weights in~\eqref{eq:exp-soft-trim-weight},
$
\Gamma_\tau(K)
\;\le\;
\varepsilon_\tau\sum_{j=1}^K s_{(j)}+K\,\delta(K)
$.
Combining with Lemma~\ref{lem:softtrim-bias-bound},
\begin{equation}\label{eq:softtrim-coarse-bound}
\|\nabla \tilde F_{\tau,A}(\theta)-\nabla F(\theta)\|_2
\le
\bar\Delta_\tau.
\end{equation}
\end{theorem}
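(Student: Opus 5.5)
The plan is to use the error decomposition~\eqref{eq:gradient-bias-decomposition} with $F_{\mathrm{R\text{-}DPO}}=F_\tau$ and $\tilde F_{\mathrm{R\text{-}DPO},A}=\tilde F_{\tau,A}$, and then bound the two terms separately. The clean weighting bias $\|\nabla F_\tau(\theta)-\nabla F(\theta)\|_2$ is at most $\mathsf B_\tau^{\mathrm{cln}}(R)$ for every $\theta\in\mathbb B_R(\theta_{\mathrm{ref}})$, by definition of the supremum.

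For the surviving attack bias, the weights $w_i^{(\tau)}$ depend only on the flip-invariant scores, so they are identical in $F_\tau$ and $\tilde F_{\tau,A}$. Lemma~\ref{lem:constant-shift} says that $\tilde\ell_i$ and $\ell_i$ have the same gradient for $i\notin A$, and for $i\in A$ the gradients differ by $\beta v_i$. Hence
\[
\nabla\tilde F_{\tau,A}(\theta)-\nabla F_\tau(\theta)=\frac{1}{W_\tau}\sum_{i\in A}w_i^{(\tau)}\beta v_i .
\]
The triangle inequality bounds its norm by $\frac{1}{W_\tau}\sum_{i\in A}\rho_i^{(\tau)}$, which is at most $\Gamma_\tau(K)/W_\tau$ since $|A|\le K$. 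Adding the two pieces gives~\eqref{eq:softtrim-exact-bound}.

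The main step is the bound on $\Gamma_\tau(K)$. Any $A$ with $|A|\le K$ splits as $A_1=\{i\in A:r(i)\le K\}$ and $A_2=A\setminus A_1$.
\begin{itemize}
\item For $i\in A_2$, $w_i^{(\tau)}=1$ and $s_i\le s_{(K+1)}=\delta(K)$. Since $|A_2|\le K$, this part contributes at most $K\,\delta(K)$.
\item For $i\in A_1$, $r(i)\le K$ gives $K+1-r(i)\ge 1$. Hence $w_i^{(\tau)}\le e^{-1/\tau}=\varepsilon_\tau$, so $\rho_i^{(\tau)}\le\varepsilon_\tau s_i$. Summing over $A_1\subseteq\{r\le K\}$ gives at most $\varepsilon_\tau\sum_{j=1}^K s_{(j)}$.
\end{itemize}
Taking the maximum over $A$ yields $\Gamma_\tau(K)\le\varepsilon_\tau\sum_{j=1}^K s_{(j)}+K\,\delta(K)$.

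This bound is loose, since it charges both parts the full budget, but it is what the statement asserts. Ties in the ranking cause no trouble because $r$ is a fixed deterministic ordering.

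Finally, Lemma~\ref{lem:softtrim-bias-bound} bounds $\mathsf B_\tau^{\mathrm{cln}}(R)$ by its score-only expression. Substituting this and the $\Gamma_\tau(K)$ bound into $\Delta_\tau$ gives $\Delta_\tau\le\bar\Delta_\tau$, which is~\eqref{eq:softtrim-coarse-bound}. When $K=0$, every sum is empty and all quantities vanish, consistent with the statement.
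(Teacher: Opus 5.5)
Your proposal is correct and follows the paper's proof essentially step by step. You split the error into the clean weighting bias and the weighted constant-shift term, and bound the latter by $\Gamma_\tau(K)/W_\tau$ via the triangle inequality. You then bound $\Gamma_\tau(K)$ by noting that top-$K$ indices carry weight at most $\varepsilon_\tau$ while all other indices have score at most $\delta(K)$, and finish by substituting Lemma~\ref{lem:softtrim-bias-bound}.
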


As $\tau\downarrow 0$, we have $W_\tau\to N-K$ and the exact attack term converges to the largest total score among at most $K$ samples that survive hard Trim-$K$:
\begin{equation*}
\frac{\Gamma_\tau(K)}{W_\tau}
\to
\frac{1}{N-K}
\max_{A\subseteq G,\,|A|\le K}\sum_{i\in A}s_i
\le
\frac{K}{N-K}\,\delta(K).
\end{equation*}
Thus the Soft-Trim objective approaches hard Trim-$K$, and its exact surviving-attack bound approaches a quantity no larger than the hard-trim term $K\delta(K)/(N-K)$, while $\mathsf B_\tau^{\mathrm{cln}}(R)$ approaches $\mathsf B_G^{\mathrm{cln}}(R)$.

Define the weighted feature covariance and bounded-region curvature constant
\begin{equation}\label{eq:softtrim-cov-curvature}
\Sigma_\tau:=\frac{1}{W_\tau}\sum_{i=1}^N w_i^{(\tau)}v_i v_i^\top,
\qquad
\underline c_\tau(R):=
\min_{\theta\in \mathbb B_R(\theta_{\mathrm{ref}})}\min_{i:w_i^{(\tau)}>0}
\sigma(z_i(\theta))\bigl(1-\sigma(z_i(\theta))\bigr),
\end{equation}
and let $\mu_\tau:=\beta^2\underline c_\tau(R)\lambda_{\min}(\Sigma_\tau)>0$.
By the same argument as Lemma~\ref{lem:trimk-restricted-sc}, the clean weighted objective $F_\tau$ is $\mu_\tau$-strongly convex on $\mathbb B_R(\theta_{\mathrm{ref}})$ and $\beta^2$-smooth.

Then we can propagate the gradient-bias bound to time-uniform parameter and policy deviation bounds in Theorem~\ref{thm:softtrim-iter-stability} and Corollary~\ref{cor:soft-trim-K-policy-kl}.

\begin{theorem}[Time-uniform stability for Soft-Trim-$K$ Robust-DPO]\label{thm:softtrim-iter-stability}
Under Assumptions~\ref{ass:log-linear-policy} and~\ref{ass:bounded-region}, suppose the clean DPO iterates and the attacked Soft-Trim-$K$ iterates remain in $\mathbb B_R(\theta_{\mathrm{ref}})$ and gradient descent is run with step size $\eta\le 1/\beta^2$ from the same initialization:
$\theta_{t+1}^{\mathrm{cln}}=\theta_t^{\mathrm{cln}}-\eta\nabla F(\theta_t^{\mathrm{cln}})$,
$\theta_{t+1}^{\mathrm{atk}}=\theta_t^{\mathrm{atk}}-\eta\nabla \tilde F_{\tau,A}(\theta_t^{\mathrm{atk}})$.
Then for every $T\ge 1$,
\begin{equation*}
\|\theta_T^{\mathrm{atk}}-\theta_T^{\mathrm{cln}}\|_2
\le
\frac{\Delta_\tau\bigl(1-(1-\eta\mu_\tau)^T\bigr)}{\mu_\tau}
\le
\frac{\Delta_\tau}{\mu_\tau}.
\end{equation*}
In particular, Theorem~\ref{thm:softtrim-grad} and Lemma~\ref{lem:softtrim-bias-bound} give
$
\|\theta_T^{\mathrm{atk}}-\theta_T^{\mathrm{cln}}\|_2
\le
\bar\Delta_\tau/\mu_\tau.
$
\end{theorem}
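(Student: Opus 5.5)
The plan is a perturbed-gradient-descent stability argument. The attacked Soft-Trim objective is compared with the clean weighted objective $F_\tau$, and the gap from $\nabla F$ is absorbed through the bound $\Delta_\tau$ of Theorem~\ref{thm:softtrim-grad}. Set $e_t:=\theta_t^{\mathrm{atk}}-\theta_t^{\mathrm{cln}}$, with $e_0=0$. Subtracting the two updates and adding and subtracting $\eta\nabla F_\tau(\theta_t^{\mathrm{cln}})$ gives
\begin{equation*}
e_{t+1}
=
\Bigl[\bigl(\theta_t^{\mathrm{atk}}-\eta\nabla F_\tau(\theta_t^{\mathrm{atk}})\bigr)-\bigl(\theta_t^{\mathrm{cln}}-\eta\nabla F_\tau(\theta_t^{\mathrm{cln}})\bigr)\Bigr]
-\eta\bigl(\nabla\tilde F_{\tau,A}-\nabla F_\tau\bigr)(\theta_t^{\mathrm{atk}})
+\eta\bigl(\nabla F_\tau-\nabla F\bigr)(\theta_t^{\mathrm{cln}}).
\end{equation*}
The first bracket is the gradient-descent map $\Phi(\theta)=\theta-\eta\nabla F_\tau(\theta)$ evaluated at two points of $\mathbb B_R(\theta_{\mathrm{ref}})$. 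The remaining two terms are the surviving-attack bias at $\theta_t^{\mathrm{atk}}$ and the clean weighting bias at $\theta_t^{\mathrm{cln}}$.

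\textbf{Contraction.} The stated facts are that $F_\tau$ is $\mu_\tau$-strongly convex on the ball, restricted to the common subspace of Assumption~\ref{ass:bounded-region}, and $\beta^2$-smooth. Since $\eta\le 1/\beta^2$, the standard GD-stability lemma from Appendix~\ref{app:supporting-lemmas} then gives $\|\Phi(\theta)-\Phi(\theta')\|_2\le(1-\eta\mu_\tau)\|\theta-\theta'\|_2$. To see this directly, write $\Phi(\theta)-\Phi(\theta')=(I-\eta H)(\theta-\theta')$, where $H=\int_0^1\nabla^2F_\tau(\theta'+s(\theta-\theta'))\,ds$. The segment stays in the ball by convexity, so $\mu_\tau I\preceq H\preceq\beta^2 I$ on the subspace.

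\textbf{Bias terms.} By Lemma~\ref{lem:constant-shift}, the attack term equals $\frac{1}{W_\tau}\sum_{i\in A}w_i^{(\tau)}\beta v_i$ up to sign. Its norm is therefore at most $\Gamma_\tau(K)/W_\tau$, uniformly in $\theta$. The clean term is at most $\mathsf B_\tau^{\mathrm{cln}}(R)$ because $\theta_t^{\mathrm{cln}}$ lies in the ball. Their sum is exactly $\Delta_\tau$. Equivalently, the two terms together equal $\eta$ times $\nabla F(\theta_t^{\mathrm{cln}})-\nabla F_\tau(\theta_t^{\mathrm{cln}})+\nabla F_\tau(\theta_t^{\mathrm{atk}})-\nabla\tilde F_{\tau,A}(\theta_t^{\mathrm{atk}})$, and each piece is bounded uniformly on the ball.

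\textbf{Recursion.} Combining the two steps gives $\|e_{t+1}\|_2\le(1-\eta\mu_\tau)\|e_t\|_2+\eta\Delta_\tau$. Unrolling from $e_0=0$ yields
\begin{equation*}
\|e_T\|_2\le\eta\Delta_\tau\sum_{k=0}^{T-1}(1-\eta\mu_\tau)^k=\frac{\Delta_\tau\bigl(1-(1-\eta\mu_\tau)^T\bigr)}{\mu_\tau}.
\end{equation*}
Since $\eta\mu_\tau\le\eta\beta^2\le1$, we have $0\le 1-\eta\mu_\tau<1$, so this is at most $\Delta_\tau/\mu_\tau$. The final claim follows from $\Delta_\tau\le\bar\Delta_\tau$, which is \eqref{eq:softtrim-coarse-bound} together with Lemma~\ref{lem:softtrim-bias-bound}.

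\textbf{Main obstacle.} The delicate step is the contraction. Strong convexity holds only on the common subspace, so I must check that $e_t$ stays in it. Both updates move along spans of the $v_i$, so $e_t\in\mathrm{span}\{v_i\}$, which Assumption~\ref{ass:bounded-region} places inside the common subspace where $\lambda_{\min}(\Sigma_\tau)>0$. I also need $\mu_\tau\le\beta^2$. This holds because $\sigma(1-\sigma)\le 1/4$ and $\|v_i\|_2\le 2$, so $\lambda_{\min}(\Sigma_\tau)\le 4$ and hence $\mu_\tau\le\beta^2$.
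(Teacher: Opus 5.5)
Your proof is correct and is essentially the paper's argument. The paper notes that $\tilde F_{\tau,A}$ has the same Hessian as $F_\tau$ and then applies Lemma~\ref{lem:sc-time-uniform} with $H=\tilde F_{\tau,A}$ and $B=\Delta_\tau$; you instead inline that lemma with the map $\Phi$ for $F_\tau$, which differs from the attacked gradient-descent map only by the constant translation $-\eta W_\tau^{-1}\sum_{i\in A}w_i^{(\tau)}\beta v_i$, so the recursion is identical, and your explicit checks of the subspace invariance and of $\mu_\tau\le\beta^2$ are a useful addition. One harmless slip: in your first display the clean-bias term should read $-\eta(\nabla F_\tau-\nabla F)(\theta_t^{\mathrm{cln}})$, as your later ``equivalently'' line correctly has it, and only its norm is used.
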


\begin{corollary}[Soft-Trim-$K$ policy deviation bound]\label{cor:soft-trim-K-policy-kl}
Under the conditions of Theorem~\ref{thm:softtrim-iter-stability},
\begin{equation*}
\mathbb E_x\!\left[
D_{\mathrm{KL}}\!\big(\pi_{\theta_T^{\mathrm{atk}}}(\cdot|x)\,\|\,\pi_{\theta_T^{\mathrm{cln}}}(\cdot|x)\big)
\right]
\;\le\;
\frac{1}{2} \left(\frac{\bar\Delta_\tau}{\mu_\tau}\right)^2.
\end{equation*}
\end{corollary}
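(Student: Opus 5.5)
The proof of Corollary~\ref{cor:soft-trim-K-policy-kl} has two steps. First, a pointwise KL--parameter inequality for log-linear policies, as in Corollary~\ref{cor:trim-K-policy-kl}. Second, the time-uniform parameter bound of Theorem~\ref{thm:softtrim-iter-stability}, squared and averaged over $x$.

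For the first step, fix a context $x$ and write $A_x(\theta):=\log\sum_{y'}\exp(\psi(x,y')^\top\theta)$ for the log-partition function. Since $\log\pi_\theta(y\mid x)=\psi(x,y)^\top\theta-A_x(\theta)$, the KL divergence between two log-linear policies is a Bregman divergence of $A_x$:
\[
D_{\mathrm{KL}}\big(\pi_{\theta}(\cdot|x)\,\|\,\pi_{\theta'}(\cdot|x)\big)=A_x(\theta')-A_x(\theta)-\langle\nabla A_x(\theta),\theta'-\theta\rangle .
\]
The Hessian $\nabla^2A_x(\vartheta)$ is the covariance of $\psi(x,Y)$ under $Y\sim\pi_\vartheta(\cdot|x)$. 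It is bounded in operator norm by $\mathbb E\|\psi(x,Y)\|_2^2\le 1$ by Assumption~\ref{ass:log-linear-policy}, so $A_x$ is $1$-smooth. The standard smoothness upper bound on the Bregman divergence then gives
\[
D_{\mathrm{KL}}\big(\pi_{\theta}(\cdot|x)\,\|\,\pi_{\theta'}(\cdot|x)\big)\le\tfrac12\|\theta-\theta'\|_2^2
\]
for all $\theta,\theta'$, uniformly in $x$. This is the same inequality stated in Corollary~\ref{cor:trim-K-policy-kl}, and its derivation does not depend on which objective produced the iterates. We apply it with $\theta=\theta_T^{\mathrm{atk}}$ (the Soft-Trim iterate) and $\theta'=\theta_T^{\mathrm{cln}}$.

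For the second step, the right-hand side $\tfrac12\|\theta_T^{\mathrm{atk}}-\theta_T^{\mathrm{cln}}\|_2^2$ does not depend on $x$. Taking $\mathbb E_x$ therefore preserves the bound. Theorem~\ref{thm:softtrim-iter-stability}, whose hypotheses are exactly those assumed here, gives $\|\theta_T^{\mathrm{atk}}-\theta_T^{\mathrm{cln}}\|_2\le\Delta_\tau/\mu_\tau$. Theorem~\ref{thm:softtrim-grad} together with Lemma~\ref{lem:softtrim-bias-bound} gives $\Delta_\tau\le\bar\Delta_\tau$, hence a bound of $\bar\Delta_\tau/\mu_\tau$. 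Both sides are nonnegative, so squaring preserves the inequality. This yields the claimed $\tfrac12(\bar\Delta_\tau/\mu_\tau)^2$ for every $T\ge1$ and every admissible $A$.

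The only nonroutine point is the smoothness constant $1$ of $A_x$. It requires the covariance bound $\mathrm{Cov}(\psi)\preceq\mathbb E[\psi\psi^\top]$ and $\|\psi\|_2\le1$. Everything else is direct substitution of earlier results.
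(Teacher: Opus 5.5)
Your proposal is correct and follows the paper's proof: the paper likewise reuses the Bregman-divergence bound $D_{\mathrm{KL}}\le\tfrac12\|\theta-\theta'\|_2^2$ for the $1$-smooth log-partition function from the proof of Corollary~\ref{cor:trim-K-policy-kl}, takes $\mathbb E_x$, and substitutes the $\bar\Delta_\tau/\mu_\tau$ parameter bound from Theorem~\ref{thm:softtrim-iter-stability}. Your covariance argument for $\nabla^2 A_x\preceq I$ spells out the $1$-smoothness step, which the paper asserts without detail.
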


Soft-Trim preserves more clean information than hard trimming, offering a smoother robustness--retention trade-off without uniformly improving its worst-case bias.

The guarantee requires the defender's $K$ to upper-bound the true number of flips. 
Underestimating $K$ invalidates the guarantee, while overestimating it removes or downweights more clean data.
Appendix~\ref{app:k-selection} studies budget selection and misspecified $K$.

\subsection{Regularized Trim-$K$ and Soft-Trim-$K$ Robust-DPO}\label{sec:regularized-robust-dpo}

Write $\mathfrak R\in\{G,\tau\}$ for the above two instances of the generic $\mathrm{R\text{-}DPO}$: $G$ for Trim-$K$ and $\tau$ for Soft-Trim-$K$.
We use the same convention for $F_{\mathfrak R}$, $\tilde F_{\mathfrak R,A}$, $\mathsf B_{\mathfrak R}^{\mathrm{cln}}(R)$, $\mu_{\mathfrak R}$, $\Delta_{\mathfrak R}$, and $\bar\Delta_{\mathfrak R}$.
The bounded-region curvature constants $\mu_G$ and $\mu_\tau$ are positive, but they can be small. 
A simple way to increase the curvature of the optimized robust objective $\tilde F_{\mathfrak R,A}$ is to keep the same trimming or soft-weighting rule and add a quadratic regularization $\lambda$ around the reference parameter:
\begin{equation*}
F_{\mathfrak R,\lambda}(\theta):=F_{\mathfrak R}(\theta)+\frac{\lambda}{2}\|\theta-\theta_{\mathrm{ref}}\|_2^2,
\qquad
\tilde F_{\mathfrak R,\lambda, A}(\theta):=\tilde F_{\mathfrak R,A}(\theta)+\frac{\lambda}{2}\|\theta-\theta_{\mathrm{ref}}\|_2^2.
\end{equation*}
Since the reference objective remains the unregularized clean DPO objective $F$, regularization also creates a deterministic bias. Let
\begin{equation*}
\mathsf B_{\mathfrak R,\lambda}^{\mathrm{cln}}(R)
:=
\sup_{\theta\in\mathbb B_R(\theta_{\mathrm{ref}})}
\left\|
\nabla F_{\mathfrak R}(\theta)+\lambda(\theta-\theta_{\mathrm{ref}})-\nabla F(\theta)
\right\|_2 
\;\le\;
\mathsf B_{\mathfrak R}^{\mathrm{cln}}(R)+\lambda R .
\end{equation*}
Thus the surviving-attack term is unchanged, the clean-bias term gains at most $\lambda R$, and the curvature denominator becomes $\mu_{\mathfrak R}+\lambda$.
\begin{proposition}[Quadratic regularization improves curvature]\label{prop:reg-dpo-curvature}
For $\mathfrak R\in\{G,\tau\}$, suppose $F_{\mathfrak R}$ is $\mu_{\mathfrak R}$-strongly convex on $\mathbb B_R(\theta_{\mathrm{ref}})$ and $\beta^2$-smooth. Then
$F_{\mathfrak R,\lambda}$ is $(\mu_{\mathfrak R}+\lambda)$-strongly convex on $\mathbb B_R(\theta_{\mathrm{ref}})$ and $(\beta^2+\lambda)$-smooth.
\end{proposition}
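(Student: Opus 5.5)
The plan is to argue directly from the definitions, using only that the added term $q(\theta):=\frac{\lambda}{2}\|\theta-\theta_{\mathrm{ref}}\|_2^2$ is smooth with constant Hessian $\nabla^2 q=\lambda I$ and gradient $\nabla q(\theta)=\lambda(\theta-\theta_{\mathrm{ref}})$. The ball $\mathbb B_R(\theta_{\mathrm{ref}})$ is convex, so strong convexity and smoothness on it may be expressed in first-order form. That is, for all $\theta,\theta'\in\mathbb B_R(\theta_{\mathrm{ref}})$:
\begin{align*}
\langle \nabla F_{\mathfrak R}(\theta)-\nabla F_{\mathfrak R}(\theta'),\theta-\theta'\rangle &\ge \mu_{\mathfrak R}\|\theta-\theta'\|_2^2,\\
\|\nabla F_{\mathfrak R}(\theta)-\nabla F_{\mathfrak R}(\theta')\|_2 &\le \beta^2\|\theta-\theta'\|_2 .
\end{align*}
Equivalently, one could use the Hessian sandwich $\mu_{\mathfrak R} I\preceq \nabla^2 F_{\mathfrak R}\preceq \beta^2 I$ on the ball, restricted to the common trained subspace of Assumption~\ref{ass:bounded-region}. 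That Hessian form is available because each $\ell_i$ is $C^2$ by \eqref{eq:dpo-per-sample-hess}.

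The first step is strong convexity. Adding $\nabla q(\theta)-\nabla q(\theta')=\lambda(\theta-\theta')$ contributes exactly $\lambda\|\theta-\theta'\|_2^2$ to the monotonicity inner product. This gives modulus $\mu_{\mathfrak R}+\lambda$ for $F_{\mathfrak R,\lambda}$. In Hessian language, $\nabla^2F_{\mathfrak R,\lambda}=\nabla^2F_{\mathfrak R}+\lambda I\succeq(\mu_{\mathfrak R}+\lambda)I$.

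The second step is smoothness. By the triangle inequality,
\[
\|\nabla F_{\mathfrak R,\lambda}(\theta)-\nabla F_{\mathfrak R,\lambda}(\theta')\|_2\le \beta^2\|\theta-\theta'\|_2+\lambda\|\theta-\theta'\|_2 .
\]
This yields the constant $\beta^2+\lambda$. Alternatively, $\nabla^2F_{\mathfrak R,\lambda}\preceq(\beta^2+\lambda)I$.

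The only subtlety is the subspace convention. In Assumption~\ref{ass:bounded-region}, $\mu_{\mathfrak R}$ is defined via $\lambda_{\min}$ on the common trained subspace. Since $\lambda I$ acts as $\lambda$ times the identity on every subspace, adding it shifts all restricted eigenvalues by exactly $\lambda$. This is consistent with that convention, and in fact the regularizer makes the bound hold on all of $\mathbb R^d$ directions. I expect no real obstacle beyond stating this clearly.
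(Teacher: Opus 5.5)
Your proof is correct and matches the paper's: the paper writes $\nabla^2F_{\mathfrak R,\lambda}=\nabla^2F_{\mathfrak R}+\lambda I$ and reads off $(\mu_{\mathfrak R}+\lambda)I\preceq\nabla^2F_{\mathfrak R,\lambda}$ and $\|\nabla^2F_{\mathfrak R,\lambda}\|_{\mathrm{op}}\le\beta^2+\lambda$, which is your Hessian variant, and your first-order monotonicity/Lipschitz version is an equivalent restatement. Your closing aside needs two qualifications: off the common trained subspace $\nabla^2F_{\mathfrak R}$ may be singular, so there the regularizer guarantees only curvature $\lambda$, not $\mu_{\mathfrak R}+\lambda$; and the smoothness bound holds on all of $\mathbb R^d$, not just the ball, because $F_{\mathfrak R}$ is globally $\beta^2$-smooth.
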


Thus $K$ controls trimming/downweighting and $\lambda$ controls explicit curvature; at $K=0$, the regularized variants reduce to regularized DPO and retain only the regularization bias relative to clean DPO.

\begin{corollary}[Time-uniform stability for regularized Robust-DPO]\label{cor:reg-robust-dpo}
Under Assumptions~\ref{ass:log-linear-policy} and~\ref{ass:bounded-region}, fix $\mathfrak R\in\{G,\tau\}$, $\lambda>0$, and an attack set $A\subseteq[N]$ with $|A|\le K$.
Run full-batch gradient descent on $\tilde F_{\mathfrak R,\lambda,A}$ and clean DPO on $F$ with the same step size $0<\eta\le 1/(\beta^2+\lambda)$ and initialization $\theta_0$.
Then for every $T\ge 1$,
\begin{equation*}
\|\theta_T^{\mathrm{atk}}-\theta_T^{\mathrm{cln}}\|_2
\;\le\;
\frac{\mathsf B_{\mathfrak R,\lambda}^{\mathrm{cln}}(R)+\Delta_{\mathfrak R}-\mathsf B_{\mathfrak R}^{\mathrm{cln}}(R)}{\mu_{\mathfrak R}+\lambda}
\;\le\;
\frac{\bar\Delta_{\mathfrak R}+\lambda R}{\mu_{\mathfrak R}+\lambda}.
\end{equation*}
Here $\Delta_{\mathfrak R}-\mathsf B_{\mathfrak R}^{\mathrm{cln}}(R)$ is the previously derived surviving-attack bound: $K\delta(K)/(N-K)$ for $\mathfrak R=G$ and $\Gamma_\tau(K)/W_\tau$ for $\mathfrak R=\tau$.
\end{corollary}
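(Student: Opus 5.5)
The proof reuses the template of Theorems~\ref{thm:trim-K-iter-stability} and~\ref{thm:softtrim-iter-stability}, now for the regularized objective. The argument has three ingredients: a uniform gradient-bias bound for the regularized attacked objective, the curvature and smoothness of Proposition~\ref{prop:reg-dpo-curvature}, and a one-step contraction recursion.

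\textbf{Step 1: gradient bias.} For $\theta\in\mathbb B_R(\theta_{\mathrm{ref}})$ we decompose as in~\eqref{eq:gradient-bias-decomposition}:
\begin{equation*}
\nabla\tilde F_{\mathfrak R,\lambda,A}(\theta)-\nabla F(\theta)
=\big(\nabla F_{\mathfrak R}(\theta)+\lambda(\theta-\theta_{\mathrm{ref}})-\nabla F(\theta)\big)
+\big(\nabla\tilde F_{\mathfrak R,A}(\theta)-\nabla F_{\mathfrak R}(\theta)\big).
\end{equation*}
The first term is at most $\mathsf B_{\mathfrak R,\lambda}^{\mathrm{cln}}(R)$ by definition. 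The second term is unaffected by the regularizer. By Lemma~\ref{lem:constant-shift} it equals $\frac{1}{|G|}\sum_{i\in A\cap G}\beta v_i$ for $\mathfrak R=G$, or $\frac{1}{W_\tau}\sum_{i\in A}w_i^{(\tau)}\beta v_i$ for $\mathfrak R=\tau$. These are bounded respectively by $K\delta(K)/(N-K)$ and by $\Gamma_\tau(K)/W_\tau$, exactly as in the proofs of Theorems~\ref{thm:K-robust-grad} and~\ref{thm:softtrim-grad}. In both cases this equals $\Delta_{\mathfrak R}-\mathsf B_{\mathfrak R}^{\mathrm{cln}}(R)$. Call the resulting total bound $\Delta_{\mathfrak R,\lambda}$.

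\textbf{Step 2: one-step recursion.} Let $e_t=\theta_t^{\mathrm{atk}}-\theta_t^{\mathrm{cln}}$, so $e_0=0$. Adding and subtracting $\eta\nabla F_{\mathfrak R,\lambda}(\theta_t^{\mathrm{cln}})$ gives
\begin{equation*}
e_{t+1}=\Big[\theta_t^{\mathrm{atk}}-\eta\nabla F_{\mathfrak R,\lambda}(\theta_t^{\mathrm{atk}})-\big(\theta_t^{\mathrm{cln}}-\eta\nabla F_{\mathfrak R,\lambda}(\theta_t^{\mathrm{cln}})\big)\Big]
-\eta\Big[\nabla\tilde F_{\mathfrak R,\lambda,A}(\theta_t^{\mathrm{atk}})-\nabla F_{\mathfrak R,\lambda}(\theta_t^{\mathrm{atk}})\Big]
-\eta\Big[\nabla F_{\mathfrak R,\lambda}(\theta_t^{\mathrm{cln}})-\nabla F(\theta_t^{\mathrm{cln}})\Big].
\end{equation*}
The last two brackets are, respectively, the surviving-attack term at $\theta_t^{\mathrm{atk}}$ (the regularizers cancel) and the regularized clean bias at $\theta_t^{\mathrm{cln}}$. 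Both points lie in the ball, so these brackets are bounded separately by the two pieces from Step 1, and their sum is at most $\Delta_{\mathfrak R,\lambda}$.

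The first bracket is a gradient-descent map applied to a single function, $F_{\mathfrak R,\lambda}$. By Proposition~\ref{prop:reg-dpo-curvature} this function is $(\mu_{\mathfrak R}+\lambda)$-strongly convex and $(\beta^2+\lambda)$-smooth on the ball. The step-size condition $\eta\le 1/(\beta^2+\lambda)$ then makes the map a contraction with factor $1-\eta(\mu_{\mathfrak R}+\lambda)$; this is the gradient-descent stability lemma of Appendix~\ref{app:supporting-lemmas}. That lemma needs the segment between the iterates to stay in the ball, and convexity of the ball together with Assumption~\ref{ass:bounded-region} provides this. Strong convexity holds only on the common trained subspace. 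Since $e_t$ lies in the span of the update directions, I would restrict the argument to that subspace, as was done for $\mu_G$. The added $\lambda I$ is full-rank, so this restriction causes no difficulty.

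\textbf{Step 3: unroll and conclude.} The recursion reads $\|e_{t+1}\|\le(1-\eta(\mu_{\mathfrak R}+\lambda))\|e_t\|+\eta\Delta_{\mathfrak R,\lambda}$. Unrolling from $e_0=0$ gives a geometric sum, bounded by $\Delta_{\mathfrak R,\lambda}/(\mu_{\mathfrak R}+\lambda)$ uniformly in $T$; this is the first claimed inequality. For the second, use $\mathsf B_{\mathfrak R,\lambda}^{\mathrm{cln}}(R)\le\mathsf B_{\mathfrak R}^{\mathrm{cln}}(R)+\lambda R$, by the triangle inequality and $\|\theta-\theta_{\mathrm{ref}}\|\le R$. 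Then $\Delta_{\mathfrak R,\lambda}\le\Delta_{\mathfrak R}+\lambda R\le\bar\Delta_{\mathfrak R}+\lambda R$, using Lemma~\ref{lem:trim-bias-bound} and Theorem~\ref{thm:K-robust-grad} for $G$, or Lemma~\ref{lem:softtrim-bias-bound} and Theorem~\ref{thm:softtrim-grad} for $\tau$.

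The main obstacle is Step 2. The contraction must be applied to a function whose curvature we control, namely the clean regularized robust objective, and not to $F$. The perturbation also has to be split so that each piece is bounded by a quantity already available. In particular, we must avoid the naive bound $\|\nabla\tilde F_{\mathfrak R,\lambda,A}-\nabla F\|$ evaluated at two different points, which would bring in an unwanted smoothness factor.
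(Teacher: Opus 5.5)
Your proposal is correct and follows the paper's proof. It uses the same split into the regularized clean bias plus the constant surviving-attack shift, the curvature from Proposition~\ref{prop:reg-dpo-curvature}, and the contraction-plus-bias recursion of Lemma~\ref{lem:sc-time-uniform}, which you re-derive inline. The paper contracts with the gradient-descent map of $\tilde F_{\mathfrak R,\lambda,A}$ and you use that of $F_{\mathfrak R,\lambda}$, but the two objectives differ only by a linear term, so their maps differ by a constant translation and the recursions are identical.
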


\begin{corollary}[Policy deviation for regularized Robust-DPO]\label{cor:reg-policy-kl}
Under the conditions of Corollary~\ref{cor:reg-robust-dpo}, for either $\mathfrak R\in\{G,\tau\}$ and every $T\ge 1$,
\begin{equation}\label{eq:reg-policy-kl}
\mathbb E_x\!\left[
D_{\mathrm{KL}}\!\big(\pi_{\theta_T^{\mathrm{atk}}}(\cdot|x)\,\|\,\pi_{\theta_T^{\mathrm{cln}}}(\cdot|x)\big)
\right]
\;\le\;
\frac{1}{2}\left(\frac{\bar\Delta_{\mathfrak R}+\lambda R}{\mu_{\mathfrak R}+\lambda}\right)^2.
\end{equation}
\end{corollary}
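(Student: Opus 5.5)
The plan has two ingredients. First, a pointwise KL bound in parameter space, exactly as in Corollary~\ref{cor:trim-K-policy-kl}. Second, the time-uniform parameter bound of Corollary~\ref{cor:reg-robust-dpo}.

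\textbf{Pointwise KL bound.} Fix a context $x$ and write $A_x(\theta):=\log\sum_{y'}\exp(\psi(x,y')^\top\theta)$ for the log-partition function. For log-linear policies,
\[
D_{\mathrm{KL}}(\pi_{\theta}(\cdot|x)\,\|\,\pi_{\theta'}(\cdot|x))=A_x(\theta')-A_x(\theta)-\langle\nabla A_x(\theta),\theta'-\theta\rangle ,
\]
which is the Bregman divergence of $A_x$. The Hessian of $A_x$ is the covariance of $\psi(x,y)$ under $\pi_\theta(\cdot|x)$. Its operator norm is at most $\mathbb E\|\psi\|_2^2\le 1$ by Assumption~\ref{ass:log-linear-policy}. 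A second-order Taylor bound along the segment then gives
\[
D_{\mathrm{KL}}\le\tfrac12\|\theta-\theta'\|_2^2 .
\]
This bound holds for every $x$ and does not depend on the bounded region.

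\textbf{Applying it to the iterates.} Take $\theta=\theta_T^{\mathrm{atk}}$ and $\theta'=\theta_T^{\mathrm{cln}}$, where the attacked iterates are those of gradient descent on $\tilde F_{\mathfrak R,\lambda,A}$. Taking $\mathbb E_x$ preserves the bound, since the right-hand side does not depend on $x$. So
\[
\mathbb E_x[D_{\mathrm{KL}}]\le\tfrac12\|\theta_T^{\mathrm{atk}}-\theta_T^{\mathrm{cln}}\|_2^2 .
\]
Corollary~\ref{cor:reg-robust-dpo} gives, for every $T\ge1$ and either $\mathfrak R\in\{G,\tau\}$,
\[
\|\theta_T^{\mathrm{atk}}-\theta_T^{\mathrm{cln}}\|_2\le(\bar\Delta_{\mathfrak R}+\lambda R)/(\mu_{\mathfrak R}+\lambda).
\]
Squaring this nonnegative bound and substituting it into the previous display yields~\eqref{eq:reg-policy-kl}.

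\textbf{Main obstacle.} There is essentially none, since Corollary~\ref{cor:reg-robust-dpo} does the heavy lifting. The only care needed is in the KL step. One must check that the Hessian bound $\preceq I$ holds uniformly along the whole segment, which follows from $\|\psi\|_2\le1$ pointwise. One must also note that the direction of the KL matches the Bregman ordering used. Both checks are already implicit in Corollary~\ref{cor:trim-K-policy-kl}, so we can cite that argument verbatim.
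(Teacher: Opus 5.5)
Your proposal is correct and is essentially the paper's proof. Both bound the pointwise KL by $\tfrac12\|\theta_T^{\mathrm{atk}}-\theta_T^{\mathrm{cln}}\|_2^2$ via the Bregman representation and $1$-smoothness of the log-partition function, take $\mathbb E_x$, and substitute the squared parameter bound from Corollary~\ref{cor:reg-robust-dpo} to obtain~\eqref{eq:reg-policy-kl}.
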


Corollary~\ref{cor:reg-robust-dpo} makes the regularization trade-off explicit: the surviving-attack term is divided by $\mu_{\mathfrak R}+\lambda$, but the comparison to unregularized clean DPO pays an additional bias of at most $\lambda R$.
Thus regularization is most useful when this curvature gain dominates the clean bias, 
for example when targeted flips are not concentrated among the highest-impact samples,
which is observed in the targeted attack experiment below.
When $K=0$, the bound measures only the regularization offset.

\section{Experiments}\label{sec:experiments}

\textbf{Setup.} 
We evaluate on SHP~\citep{ethayarajh2022understanding} with $160{,}000$ training and $40{,}000$ test pairs,
using $\ell_2$-normalized \texttt{all-MiniLM-L6-v2} pairwise features ($d=384$), $\beta=1$, 
and SGD initialized at an independently trained $\theta_{\mathrm{ref}}$.
The four scenarios are no attack, random corruption, top-impact attack, and targeted BMP-A attack~\citep{yang2026efficient}.
Corrupted scenarios flip $30\%$ of the training data ($K=48{,}000$), while the no-attack control uses $K=0$.
The $30\%$ setting is a severe stress test, not an estimate of real-world attack prevalence.
We compare \texttt{DPO}, Trim-$K$ (\texttt{TRIM}), Soft-Trim-$K$ (\texttt{SOFT\_TRIM}), and their regularized variants (\texttt{\_REG}), with $\tau=20{,}000$ and $\lambda=10^{-3}$.
The matched full-data \texttt{DPO\_REG} control uses the same attacked data and quadratic penalty $\lambda\|\theta-\theta_{\mathrm{ref}}\|_2^2/2$, without trimming or downweighting.
Full constructions are in Appendices~\ref{app:shp_simulation_details}--\ref{app:bmp}.

\textbf{Metric.} 
Let $\theta^\star$ be the final standard DPO parameter trained on the full clean set, and write $\pi^\star=\pi_{\theta^\star}$.
The primary metric is candidate-set KL ($\mathrm{candKL}$), the KL from $\pi^\star$ to the evaluated policy on held-out prompt-specific candidate sets of size $100$, so $\mathrm{candKL}(\theta^\star)=0$.
Appendix~\ref{app:shp_simulation_details} discusses the interpretation of this metric.

\textbf{Top-impact attack.} 
Figure~\ref{fig:allrdpo-topimpact} shows that \texttt{TRIM} and \texttt{SOFT\_TRIM} reduce candKL from $0.155$ for DPO to $0.018$ and $0.064$, respectively. 
Both unregularized defenses outperform their REG counterparts in this setting: directly controlling the high-impact pairs is effective, 
while regularization introduces additional bias.

\textbf{BMP-A attack.} 
Figure~\ref{fig:allrdpo-targeted} shows that \texttt{TRIM\_REG} and \texttt{SOFT\_TRIM\_REG} attain candKL $0.1176$ and $0.1232$, compared with $0.1296$ for matched full-data \texttt{DPO\_REG}. 
These reductions of $9.3\%$ and $4.9\%$ demonstrate an additional benefit from impact-based trimming beyond the shared quadratic regularizer in this targeted-attack setting.

\textbf{Additional experiments.} 
We include comprehensive additional experiments in Appendices~\ref{app:noatk-random}--\ref{app:nonlinear-extension}. In particular, appendices~\ref{app:noatk-random}--\ref{app:baselines} report no-attack/random-corruption controls, clean-data retention costs, parameter-distance curves, and comparisons with existing robust-DPO baselines. 
Matched-regularization controls and score-assignment ablations further separate the effects of trimming and regularization in Appendices~\ref{app:matched-regularization}--\ref{app:score-ablation}. 
Sweeps on SHP, HH-RLHF, and UltraFeedback-Binarized in Appendix~\ref{app:multidataset-budget} show lower KL than DPO for both REG variants at $20\%$--$30\%$ attack budgets, while at smaller budgets retention and regularization bias can outweigh the realized attack effect.
A fixed-attack study examines sensitivity to the defender's assumed $K$ and budget selection in Appendix~\ref{app:k-selection}. 
A separate preliminary nonlinear-policy extension study is reported in Appendix~\ref{app:nonlinear-extension}.

\begin{figure}[tbp]
\centering
\begin{subfigure}[t]{0.49\textwidth}
  \centering
  \includegraphics[width=\linewidth]{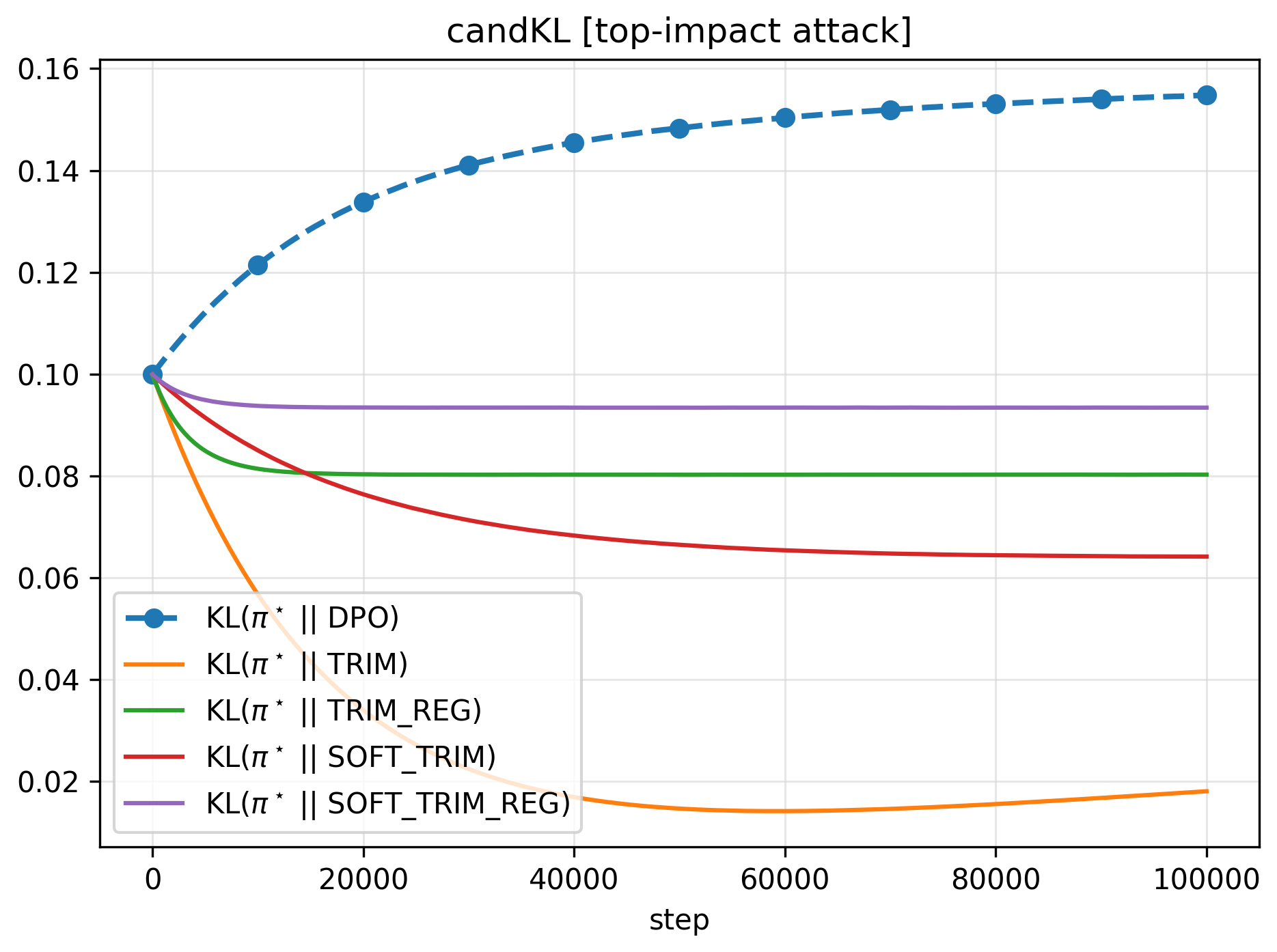}
  \caption{Top-impact attack.}
  \label{fig:allrdpo-topimpact}
\end{subfigure}
\hfill
\begin{subfigure}[t]{0.49\textwidth}
  \centering
  \includegraphics[width=\linewidth]{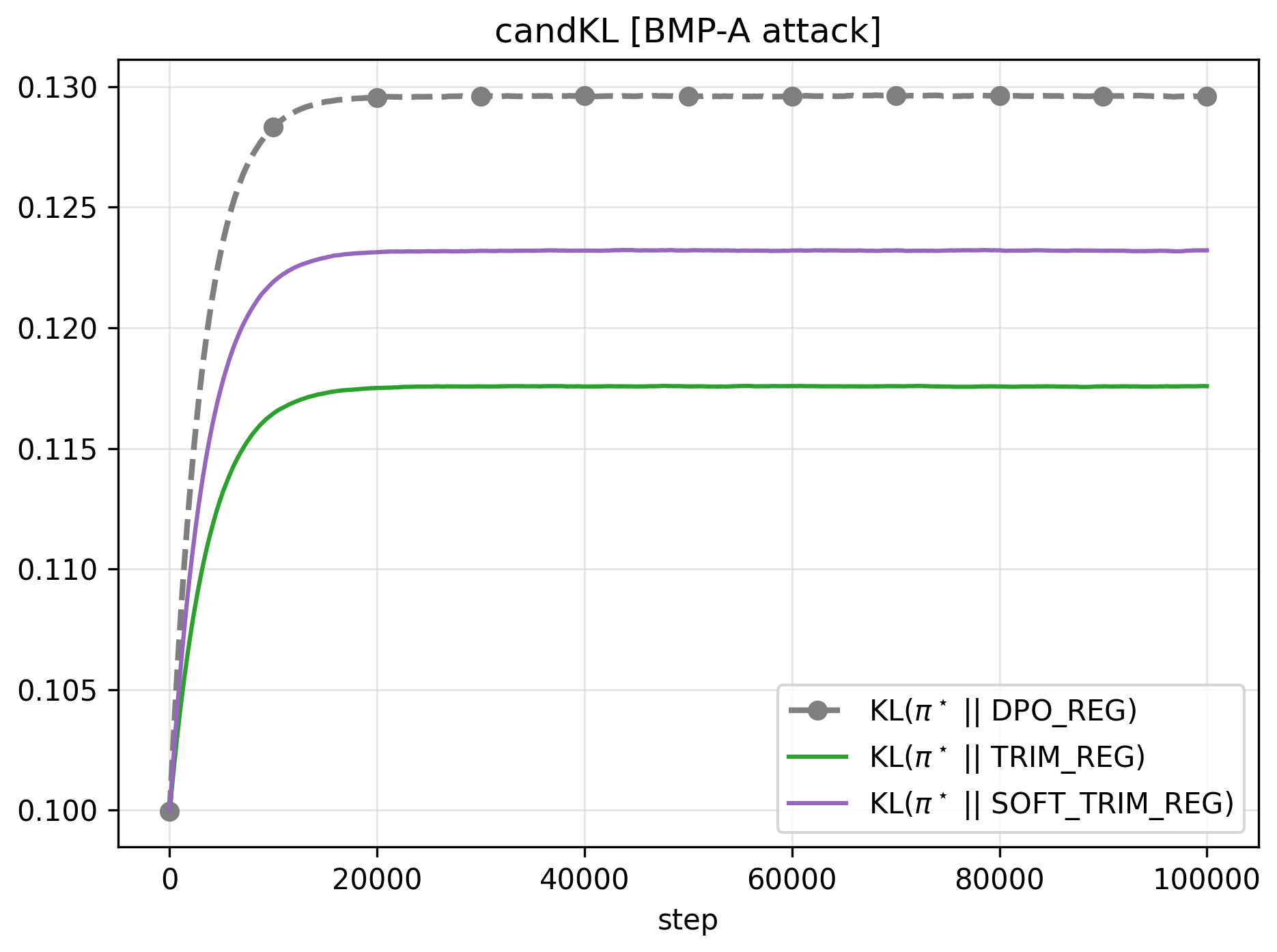}
  \caption{BMP-A attack.}
  \label{fig:allrdpo-targeted}
\end{subfigure}
\caption{Candidate-set $\mathrm{KL}(\pi^\star\|\cdot)$ under two fixed $30\%$ label-flip attacks. Solid curves: our proposed methods. Dashed curves: DPO (left) and matched \texttt{DPO\_REG} (right). $\lambda=10^{-3}$. Lower is better.}
\label{fig:allrdpo-attacks}
\end{figure}

\section{Conclusion}\label{sec:conclusion}

We have designed DPO methods robust to budgeted label-flip attacks in the log-linear setting.
Building on the parameter-independent gradient shift property, 
we have proposed Trim-$K$ Robust-DPO and Soft-Trim-$K$ Robust-DPO, as well as their regularized variants.
For all these methods, we have derived explicit worst-case bounds relative to the true clean DPO objective, 
separating the clean bias of the defense rule from the surviving adversarial bias, 
and translated these bounds into guarantees on parameter and policy deviation.
Under a bounded training region, the optimized robust objective admits a positive curvature constant, 
which leads to time-uniform robustness guarantees. 
Regularization increases this curvature term, while its deterministic bias relative to unregularized clean DPO must also be accounted for.
These results provide a principled robustness characterization of trimmed DPO objectives uniformly over all fixed budgeted label-flip attacks.
Experiments on SHP, HH-RLHF, and UltraFeedback-Binarized are consistent with our analysis and support the behavior of the proposed methods:
trimming is strong for top-impact attacks, while regularization helps control deviation in the targeted BMP-A experiment.
Establishing robustness guarantees beyond the log-linear model remains a future direction, and the preliminary results are discussed in Appendix~\ref{app:nonlinear-extension}.


\newpage

\subsection*{AI use statement}

In this work, we used generative AI tools for aid and polish the writing of this manuscript, 
including improving clarity, grammar, and consistency of notation.
We have not used generative AI tools for other tasks with required disclosure.
Additionally, we used generative AI tools for cross-checking of experimental descriptions against the implementation and saved results. 
We have reviewed all AI-assisted work. 
We checked LLM-polished grammar and verified the experimental descriptions against our code.
We take responsibility for the final content of this work,
including text, claims or artifacts produced with the aid of generative AI.

\subsection*{Reproducibility statement}

The policy class, attack model, and training assumptions are specified in Section~\ref{sec:problem-formulation}, with further discussion of the bounded training region in Appendix~\ref{app:bounded-region}.
Appendix~\ref{app:algorithms} gives algorithm pseudocode, and Appendices~\ref{app:supporting-lemmas}--\ref{app:proofs} provide supporting lemmas and complete proofs for the log-linear analysis.
Appendices~\ref{app:shp_simulation_details}--\ref{app:bmp} describe the main SHP experiment's data processing, feature construction, reference-policy training, SGD hyperparameters, compute resources, evaluation metrics, and attack construction.
Appendices~\ref{app:noatk-random}--\ref{app:score-ablation} document the additional attack scenarios, baseline methods, matched regularization controls, and score-assignment ablation.
Appendices~\ref{app:multidataset-budget}--\ref{app:k-selection} specify data splits, seed protocols, hyperparameters, and evaluation procedures for the three-dataset budget sweep and budget-misspecification study.
Appendix~\ref{app:nonlinear-extension} presents the nonlinear flip-gradient identity and its proof, together with the preliminary nonlinear-policy experiments' architecture, training and evaluation settings, and results.


\bibliography{Robust_DPO_Bib}
\bibliographystyle{iclr2027_conference}


\appendix

\section{Introduction: Additional Related Work}\label{app:related-work}

\subsection{Bandit Problems: Adversarial Attacks and Robust Defenses}

Adversarial attacks have been studied in several bandit settings, 
including multi-armed bandits (MAB)~\citep{jun2018adversarial,liu2019data,liu2020action,ma2023adversarial,yang2024stochastic},
linear contextual bandits~\citep{ma2018data,garcelon2020adversarial,liu2021efficient},
and dueling bandits~\citep{gajane2015relative,agarwal2021stochastic,saha2021adversarial}.
These works show that bandit algorithms can be sensitive to adversarially corrupted feedback or interaction signals.

In MAB, the learner chooses one arm in each round and observes the corresponding reward~\citep{robbins1952some,auer1995gambling,auer2002nonstochastic}.
An adversary may corrupt rewards~\citep{jun2018adversarial,ma2023adversarial,liu2019data},
or manipulate the selected actions~\citep{liu2020action} to force the learner to pull a target arm more often.
Most existing studies focus on stochastic rewards, where each arm has a fixed reward distribution. 
Other works consider adversarial rewards, where rewards are chosen arbitrarily~\citep{auer2002nonstochastic,zimmert2019an},
or non-stationary rewards, where reward distributions change over time within some extent~\citep{yang2024stochastic,garivier2011upper,besbes2014stochastic,auer2019adaptively,besbes2019optimal,cheung2022hedging}.

In linear contextual bandits, the expected reward is assumed to be linear in the context features~\citep{auer2002using,abe2003reinforcement,li2010contextual}.
Existing attacks include reward poisoning~\citep{ma2018data,garcelon2020adversarial}, context poisoning~\citep{garcelon2020adversarial}, and action poisoning~\citep{liu2021efficient}.

Dueling bandits are closer to RLHF, in which the learner selects a pair of arms and observes only which one is preferred~\citep{yue2012k,zoghi2014relative,sui2018advancements}.
Prior work studies stochastic dueling bandits with corrupted pairs~\citep{agarwal2021stochastic},
as well as adversarial preference models~\citep{gajane2015relative,saha2021adversarial,lyu2025adversarial}.

Adversarial attacks naturally motivate robust bandit algorithms. 
Existing robust defenses have been studied for stochastic bandits~\citep{lykouris2018stochastic, gupta2019better, guan2020robust, bogunovic2021stochastic},
adversarial bandits~\citep{yang2020adversarial},
stochastic linear contextual bandits~\citep{ding2022robust},
strategic manipulation in stochastic bandits~\citep{feng2020intrinsic},
stochastic Lipschitz bandits~\citep{kang2023robust},
and non-stationary bandits~\citep{yang2024stochastic}.
These defenses are important in real applications, where corrupted feedback can lead to harmful decisions such as unfair recommendations or unsafe medical treatments. 

\subsection{Standard Reinforcement Learning: Adversarial Attacks and Robust Defenses}

Adversarial attacks on standard RL have also been widely studied, 
in which the attacker corrupts multiple aspects of the learning process so that the learned policy is misled. 
Common poisoning models include reward/cost poisoning~\citep{huang2019deceptive,zhang2020adaptive,rakhsha2020policy,rakhsha2021reward,banihashem2022admissible,xu2022efficient,li2025online};
action poisoning~\citep{liu2021provably};
and more general environment or transition dynamics poisoning~\citep{rakhsha2020policy,xu2021transferable}.
Poisoning attacks are also studied in batch settings~\citep{ma2019policy},
and in dynamics-agnostic settings~\citep{sun2020vulnerabilityaware,liu2021provably,rakhsha2021reward,xu2022efficient,li2025online}.

Beyond single-agent RL, adversarial attacks have also been considered in multi-agent RL~\citep{liu2023efficient,wu2023reward},
and in federated RL~\citep{ma2024reward}.
Other works characterize the limits of reward-only or action-only poisoning in episodic RL~\citep{rangi2022understanding},
or formulate the optimal attack through the attacker's own Markov decision process~\citep{mcmahan2024optimal}.
In deep RL, the policies are neural networks and the attackers modify the high-dimensional observations~\citep{huang2017adversarial,kos2017delving,lin2017tactics,pattanaik2017robust,mandlekar2017adversarially,pan2019characterizing,gleave2019adversarial,sun2020stealthy}.

Existing works have proposed defense strategies in RL algorithms~\citep{tessler2019action, zhang2022corruption, ye2023corruption}.
Related robustness ideas have also been developed in broader machine learning settings~\citep{dong2024byzantine,li2021adversarial,pillutla2022robust}.
These works show that defense mechanisms are necessary when corrupted interactions can push the learner toward undesirable behaviors.

\subsection{Reinforcement Learning from Human Feedback: Adversarial Attacks and Robust Defenses}

RLHF is different from bandit and standard RL because it learns policies from pairwise discrete preference data rather than a numerical reward from the environment.
Depending on whether the training is on a pre-collected dataset or through real-time interaction with human annotators~\citep{xiong2023iterative},
RLHF can be categorized into offline RLHF~\citep{christiano2017deep,ouyang2022training,bai2022training,zhu2023principled,rafailov2023direct} and online RLHF~\citep{chen2022human,saha2023dueling,wang2023rlhf,zhan2023provablereward,wu2024making}.
It has been shown that offline RLHF is vulnerable to preference poisoning, such as data injection attack and label flipping attack,
through experiments~\citep{rando2023universal,wu2024preference,baumgartner2024best,pathmanathan2024poisoning,fu2025poisonbench,pathmanathan2025poisoning,wu2025preference},
and theoretical analysis~\citep{nika2025policy}.
\citet{nika2025policy} proves that a large poisoned preference set may be needed to mislead offline RLHF for data injection attack.
Meanwhile, \citet{yang2026efficient} proposes efficient label flipping attacks that strategically select a small subset of preference pairs to flip and mislead offline RLHF toward a target policy.
In online RLHF, preferences can also be attacked to manipulate the learning process~\citep{yang2025human}.

Other than the adversarial attack described above, RLHF is also sensitive to the random noise in preference data, 
which is caused by annotation errors, limited expert involvement, and other non-adversarial sources~\citep{casper2023open,yan2024reward}.
This noise may lead to unpredictable or biased aligned model behavior. 

Motivated by these reliability issues of preference data, 
existing works propose robust online RLHF against adversarial attacks~\citep{yang2025human}
and robust offline RLHF against random noise~\citep{yan2024reward,cheng2024rime,mandal2024corruption,chowdhury2024provably,wu2024towards} and distribution shift~\citep{xu2025robust}.
In the offline setting, there is a gap in the literature on robust RLHF against adversarial attacks, especially for specifically designed attacks leading to a target policy.
This paper aims to fill this gap by proposing robust DPO methods against budgeted preference label flips, including adversarial flips.

\section{Problem Formulation: Why the Bounded Training Region Matters}\label{app:bounded-region}

\begin{remark}[Why the bounded-region Assumption~\ref{ass:bounded-region} matters]\label{rem:dpo-not-strongly-convex}
Without Assumption~\ref{ass:bounded-region}, the standard DPO objective need not be globally strongly convex on all of $\mathbb R^d$. In the log-linear setting, the per-sample Hessian is
\begin{equation*}
\nabla^2 \ell_i(\theta)
=
\beta^2\,\sigma(z_i(\theta))\big(1-\sigma(z_i(\theta))\big)\,v_i v_i^\top,
\end{equation*}
and $\inf_\theta \sigma(z_i(\theta))\big(1-\sigma(z_i(\theta))\big) = 0$ as $\|\theta-\theta_{\mathrm{ref}}\|_2\to\infty$ along separable directions.
Thus, global strong convexity is not guaranteed even though the bounded-region curvature constant $\mu_G$ in~\eqref{eq:trim-mu-G} is positive.
\end{remark}

The strong-convexity statement in~\cite{nika2025policy} does not contradict the above discussion. 
The key distinction is whether the curvature comes from the logistic part on a bounded training region or from an explicit quadratic regularizer. 
In the notation of~\cite{nika2025policy}, their parameter displacement is
\begin{equation*}
    v := \theta-\theta_\mu,
\end{equation*}
and their regularized DPO/logistic objective includes the quadratic term
\begin{equation*}
    \frac{\lambda}{2}\|\theta-\theta_\mu\|_2^2 .
\end{equation*}
For the unregularized logistic part, their Hessian lower bound contains a data covariance term multiplied by a logistic-curvature constant:
\begin{equation*}
    \nabla^2 \ell_v(D) \succeq n\beta C_v \Sigma_D,
\end{equation*}
where
\begin{equation*}
    C_v := \min_i
    \frac{\exp(\beta y_i v^\top x_i+b)}{
    \bigl(1+\exp(\beta y_i v^\top x_i+b)\bigr)^2}.
\end{equation*}
Here, their $C_v$ is exactly the minimum logistic curvature factor over the dataset. 
In our notation, their center $\theta_\mu$ corresponds to the reference parameter $\theta_{\mathrm{ref}}$, their displacement $v$ corresponds to $\theta-\theta_{\mathrm{ref}}$, and their signed feature $y_i x_i$ corresponds to our pairwise feature direction
\begin{equation*}
    v_i=\psi(x^i,y_w^i)-\psi(x^i,y_l^i),
\end{equation*}
up to the sign convention used to write the logistic loss. 
Thus their scalar argument $\beta y_i v^\top x_i+b$ corresponds to our DPO logit $z_i(\theta)$.

At any fixed finite parameter displacement, $C_v>0$. 
More importantly for our setting, if training is restricted to a bounded ball around the reference parameter, then compactness of this ball and finiteness of the dataset imply a positive lower bound
\begin{equation}\label{eq:appendix-cbar-R}
    \underline C_R
    :=
    \inf_{\|\theta-\theta_{\mathrm{ref}}\|_2\le R}
    \min_i
    \frac{\exp(z_i(\theta))}{(1+\exp(z_i(\theta)))^2}
    >0 .
\end{equation}
Consequently, on the bounded training region, the unregularized DPO objective is strongly convex whenever the corresponding empirical feature covariance is positive definite on the trained parameter subspace. 
This is the local curvature used in our unregularized Trim-$K$ and Soft-Trim-$K$ bounds.

This local statement does not imply global strong convexity on all of $\mathbb R^d$. 
Indeed, for
\begin{equation*}
    q(z):=\frac{e^z}{(1+e^z)^2},
\end{equation*}
we have $q(z)>0$ for every finite $z$, but $q(z)\to0$ as $z\to\pm\infty$. 
Therefore, along a direction where some sample logit diverges, the corresponding logistic curvature factor vanishes, and the global infimum of $C_v$ is zero. 
This is why the unregularized objective need not admit a uniform global strong-convexity constant.

The quadratic regularizer in~\cite{nika2025policy} gives a different source of curvature. 
It adds $\lambda I$ to the Hessian, so the regularized objective can obtain an additional global curvature term even when the logistic curvature factor is small. 
This is the same mechanism used by our regularized variants in Section~\ref{sec:regularized-robust-dpo}: the denominator improves by $\lambda$, while comparison to the unregularized clean DPO target also introduces the deterministic regularization bias term $\lambda R$.

\begin{proposition}[Linear-in-$T$ deviation under convex smoothness alone]\label{prop:linearT-necessary}
For any $\bar L\in(0,\beta^2]$, there exist a convex $\bar L$-smooth function $F$, a constant bias vector $b\neq 0$, and a step size $0<\eta\le 2/\bar L$ such that, when clean and attacked gradient descent are run from the same initialization,
\begin{equation*}
\theta_{t+1}^{\mathrm{cln}}=\theta_t^{\mathrm{cln}}-\eta\nabla F(\theta_t^{\mathrm{cln}}),
\qquad
\theta_{t+1}^{\mathrm{atk}}=\theta_t^{\mathrm{atk}}-\eta(\nabla F(\theta_t^{\mathrm{atk}})+b),
\end{equation*}
the iterate deviation satisfies
\begin{equation*}
\|\theta_T^{\mathrm{atk}}-\theta_T^{\mathrm{cln}}\|_2=\eta T\|b\|_2
\qquad \text{for all } T\ge 1.
\end{equation*}
Thus, convex smoothness plus a uniform gradient-bias bound is not enough to imply a time-uniform deviation bound. 
\end{proposition}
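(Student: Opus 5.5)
The construction is an explicit counterexample: a convex, smooth function that is flat in one direction, so a constant gradient bias accumulates linearly with no contraction to offset it. The simplest choice is $F\equiv 0$ on $\mathbb R^d$ (or, to stay closer to the DPO setting, a function flat along some unit direction $e$). The zero function is convex and $\bar L$-smooth for every $\bar L>0$, since its gradient is identically zero and hence $\bar L$-Lipschitz. Take any $b\neq 0$, for instance $b=e_1$, and any step size $\eta\in(0,2/\bar L]$, for instance $\eta=1/\bar L$.

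With this choice the two recursions decouple trivially. The clean iterates satisfy $\theta_t^{\mathrm{cln}}=\theta_0$ for all $t$. The attacked iterates satisfy $\theta_{t+1}^{\mathrm{atk}}=\theta_t^{\mathrm{atk}}-\eta b$, so by induction $\theta_T^{\mathrm{atk}}=\theta_0-\eta T b$. Hence $\|\theta_T^{\mathrm{atk}}-\theta_T^{\mathrm{cln}}\|_2=\eta T\|b\|_2$ for every $T\ge 1$, which grows without bound. No uniform bound on $\sup_T$ can therefore follow from convexity, $\bar L$-smoothness and the gradient-bias bound $\|b\|_2$ alone.

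If a less degenerate example is wanted, for instance a function with genuine curvature in some directions, use $F(\theta)=\tfrac{\bar L}{2}\langle \theta,u\rangle^2$ with $u$ a unit vector. This is convex with Hessian $\bar L uu^\top\preceq \bar L I$. Choose $b\perp u$. The difference $e_t=\theta_t^{\mathrm{atk}}-\theta_t^{\mathrm{cln}}$ then obeys $e_{t+1}=(I-\eta\bar L uu^\top)e_t-\eta b$ with $e_0=0$. Since $b$ lies in the null space of $uu^\top$, induction gives $e_t=-\eta t b$ exactly.

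There is no real obstacle here. The only point needing care is that the example respects the constraint $\bar L\le\beta^2$ and the step-size range, which any $\bar L\in(0,\beta^2]$ and $\eta\le 2/\bar L$ satisfy. One should also remark that the failure comes from the absence of strong convexity along $b$. This is what motivates the restricted strong-convexity constant $\mu_G$ in Theorem~\ref{thm:trim-K-iter-stability}: the factor $(1-\eta\mu_G)^T$ there replaces the identity map along $b$ in this example.
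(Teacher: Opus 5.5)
Your proposal is correct and follows the paper's own proof: the paper also takes $F\equiv 0$, any $b\neq 0$ and any $0<\eta\le 2/\bar L$, so the clean iterates stay fixed while the attacked iterates drift by $-\eta b$ per step. Your second example, $F(\theta)=\tfrac{\bar L}{2}\langle\theta,u\rangle^2$ with $b\perp u$, is also valid. It shows the same flat-direction mechanism while the function has genuine curvature along $u$.
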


\begin{proof}[Proof of Proposition~\ref{prop:linearT-necessary}]
Choose any nonzero vector $b$ and any $0<\eta\le2/\bar L$.
Take $F(\theta)\equiv 0$ for all $\theta$. Then $F$ is convex and $\bar L$-smooth for every $\bar L>0$,
and $\nabla F(\theta)=0$ for all $\theta$. The clean GD iterates are constant:
\begin{equation*}
\theta_{t+1}^{\mathrm{cln}}=\theta_t^{\mathrm{cln}}.
\end{equation*}
The attacked GD iterates satisfy
\begin{equation*}
\theta_{t+1}^{\mathrm{atk}}=\theta_t^{\mathrm{atk}}-\eta b.
\end{equation*}
Starting from the same initialization $\theta_0^{\mathrm{atk}}=\theta_0^{\mathrm{cln}}$, we obtain by induction
\begin{equation*}
\theta_T^{\mathrm{atk}}-\theta_T^{\mathrm{cln}} = -\eta T b,
\end{equation*}
and hence $\|\theta_T^{\mathrm{atk}}-\theta_T^{\mathrm{cln}}\|_2=\eta T\|b\|_2$.
\end{proof}

Proposition~\ref{prop:linearT-necessary} explains why Assumption~\ref{ass:bounded-region} is important for time-uniform robustness. 
Without a bounded training region, the DPO objective may not have a uniform curvature lower bound, 
so the stability argument would fall back to the accumulated-bias recursion over gradient steps, leading to a deviation that grows linearly in $T$, illustrated by the proposition. 
The bounded-region assumption prevents this failure mode by keeping the logistic curvature uniformly positive along training. 
For the regularized variants, the quadratic regularizer provides an additional curvature term, but its additional bias means that it need not improve the complete deviation bound.

\section{Methods: Algorithm Details}\label{app:algorithms}

Algorithms~\ref{alg:trimk} and~\ref{alg:softtrim} give the implementation details for applying the objective-level defenses described in the main text to the observed dataset.
In the log-linear setting, the score $s_i=\beta\|v_i\|_2$ is the exact magnitude of a pair's potential flip-induced gradient change.
It does not estimate whether the pair is corrupted.
Because $\|\psi(x,y_w)-\psi(x,y_l)\|_2$ is invariant to swapping the two responses, this score can be computed directly from the observed pair even if it has been flipped.
Given precomputed $d$-dimensional features, computing all $N$ scores costs $O(Nd)$ and sorting them costs $O(N\log N)$, once before training.
Training then uses standard DPO updates on the fixed retained set or with fixed scalar weights on the per-example losses.
The REG variants additionally evaluate the quadratic penalty and its gradient, requiring $O(d)$ arithmetic per update.

\begin{algorithm}[H]
\caption{Trim-$K$ Robust-DPO}
\label{alg:trimk}
\begin{algorithmic}[1]
\Require Observed preference dataset $\tilde{\mathcal D}(A)=\{(x^i,\tilde y_w^i,\tilde y_l^i)\}_{i=1}^N$, budget upper bound $K$, reference policy $\pi_{\mathrm{ref}}$, inverse-temperature $\beta$, optimizer, training horizon $T$
\For{$i=1,\dots,N$}
    \State Compute observed direction $\tilde v_i \gets \psi(x^i,\tilde y_w^i)-\psi(x^i,\tilde y_l^i)$
    \State Compute score $s_i \gets \beta \|\tilde v_i\|_2$
\EndFor
\State Sort $\{s_i\}_{i=1}^N$ in descending order and keep the $N-K$ smallest-score indices $G$
\State Initialize parameter $\theta_0$
\For{$t=0,1,\dots,T-1$}
    \State Compute the observed trimmed DPO objective
    \begin{equation*}
    \tilde F_{G,A}(\theta_t)=\frac{1}{N-K}\sum_{i\in G}\ell_{\mathrm{DPO}}(\theta_t;x^i,\tilde y_w^i,\tilde y_l^i)
    \end{equation*}
    \State Update $\theta_{t+1}\gets \theta_t-\eta \nabla \tilde F_{G,A}(\theta_t)$
\EndFor
\State \Return $\theta_T$
\end{algorithmic}
\end{algorithm}

\begin{algorithm}[H]
\caption{Soft-Trim-$K$ Robust-DPO}
\label{alg:softtrim}
\begin{algorithmic}[1]
\Require Observed preference dataset $\tilde{\mathcal D}(A)=\{(x^i,\tilde y_w^i,\tilde y_l^i)\}_{i=1}^N$, budget upper bound $K$, temperature $\tau$, reference policy $\pi_{\mathrm{ref}}$, inverse-temperature $\beta$, optimizer, training horizon $T$
\For{$i=1,\dots,N$}
    \State Compute observed direction $\tilde v_i \gets \psi(x^i,\tilde y_w^i)-\psi(x^i,\tilde y_l^i)$
    \State Compute score $s_i \gets \beta \|\tilde v_i\|_2$
\EndFor
\State Rank the scores in descending order and assign weights $w_i^{(\tau)}$ by~\eqref{eq:exp-soft-trim-weight}
\State Initialize parameter $\theta_0$
\For{$t=0,1,\dots,T-1$}
    \State Compute the observed weighted DPO objective
    \begin{equation*}
    \tilde F_{\tau,A}(\theta_t)=\frac{1}{W_\tau}\sum_{i=1}^N
    w_i^{(\tau)}\ell_{\mathrm{DPO}}(\theta_t;x^i,\tilde y_w^i,\tilde y_l^i)
    \end{equation*}
    \State Update $\theta_{t+1}\gets \theta_t-\eta \nabla \tilde F_{\tau,A}(\theta_t)$
\EndFor
\State \Return $\theta_T$
\end{algorithmic}
\end{algorithm}

\section{Methods: Soft-Trim Weight Visualization}\label{app:soft-trim-weights}

Figure~\ref{fig:exp-soft-trim-weights} visualizes the rank-based weights used by Soft-Trim-$K$ Robust-DPO. 
Smaller values of $\tau$ more aggressively downweight the highest-score samples and approach Trim-$K$ Robust-DPO, while larger values of $\tau$ keep the weights closer to one and approach standard DPO.

\begin{figure}[ht]
\centering
\includegraphics[width=0.75\textwidth]{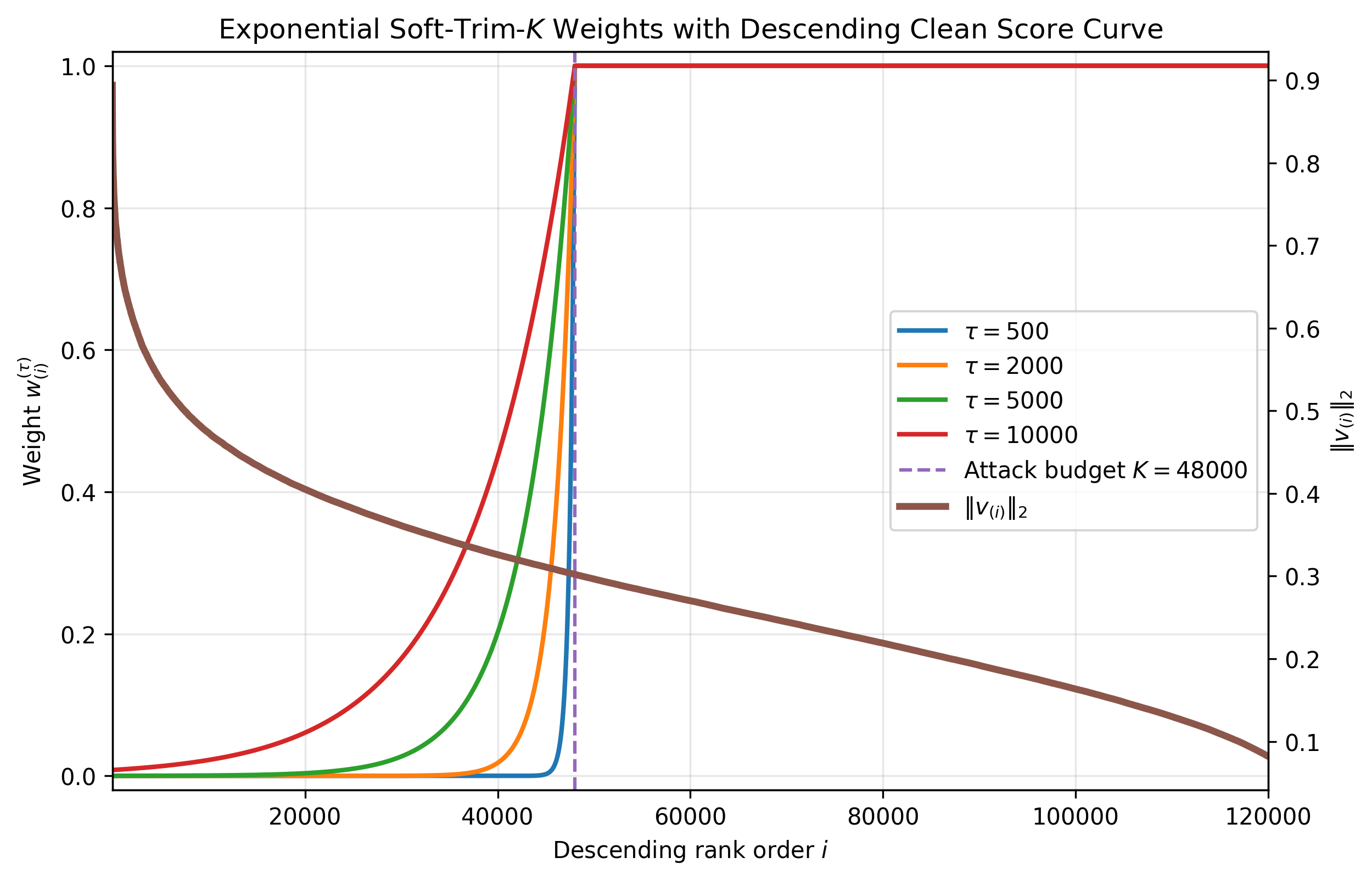}
\caption{Left vertical axis: Soft-Trim-$K$ weights $w_i^{(\tau)}$ as a function of the rank $r(i)$, for different temperatures $\tau$.
Right vertical axis: the impact scores $s_i=\beta\|v_i\|_2$ as a function of rank, with $\beta=1$. The trimming cutoff at $K=48{,}000$ is indicated by the vertical dashed line.}
\label{fig:exp-soft-trim-weights}
\end{figure}

\section{Methods: Supporting Lemmas for the Main Analysis}\label{app:supporting-lemmas}

This appendix collects the standard smoothness, curvature, and optimization facts used in Section~\ref{sec:methods}.

\begin{lemma}[Smoothness of per-sample and normalized log-linear DPO objectives]\label{lem:dpo-smooth-affine}
Under Assumption~\ref{ass:log-linear-policy}, for every sample $i$, the clean per-sample loss $\ell_i$ defined in~\eqref{eq:dpo-per-sample} is convex and $L_i$-smooth, where
\begin{equation*}
L_i\le \frac{\beta^2}{4}\|v_i\|_2^2\le \beta^2.
\end{equation*}
In particular, for all $\theta,\theta'$,
\begin{equation*}
\|\nabla \ell_i(\theta)-\nabla \ell_i(\theta')\|_2
\le
\beta^2\|\theta-\theta'\|_2.
\end{equation*}
The same conclusion holds for the observed loss $\tilde\ell_i$, because swapping the preference orientation replaces $v_i$ by $-v_i$ and preserves its norm. Consequently, for any fixed weights $a_i\ge 0$ with $\sum_i a_i=1$, both normalized objectives
\begin{equation*}
F_a(\theta):=\sum_i a_i\ell_i(\theta),
\qquad
\tilde F_a(\theta):=\sum_i a_i\tilde\ell_i(\theta)
\end{equation*}
are convex and $\beta^2$-smooth. In particular, this applies to $F$, $F_G$, $\tilde F_{G,A}$, $F_\tau$, and $\tilde F_{\tau,A}$.
\end{lemma}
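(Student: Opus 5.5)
The plan is to work directly with the closed-form Hessian in~\eqref{eq:dpo-per-sample-hess} and then pass from pointwise Hessian bounds to the smoothness and convexity statements. For each $i$, the factor $\sigma(z)(1-\sigma(z))$ lies in $(0,1/4]$ for every real $z$, and $v_iv_i^\top\succeq 0$. Hence $\nabla^2\ell_i(\theta)\succeq 0$ for all $\theta$, which gives convexity of $\ell_i$ on $\mathbb R^d$: a twice-differentiable function with positive semidefinite Hessian everywhere is convex. For the upper bound, the operator norm of $v_iv_i^\top$ is $\|v_i\|_2^2$, so
\begin{equation*}
\nabla^2\ell_i(\theta)\preceq \tfrac{\beta^2}{4}\|v_i\|_2^2 I .
\end{equation*}
By Assumption~\ref{ass:log-linear-policy} and the triangle inequality, $\|v_i\|_2\le\|\psi(x^i,y_w^i)\|_2+\|\psi(x^i,y_l^i)\|_2\le 2$. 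Therefore $L_i:=\sup_\theta\|\nabla^2\ell_i(\theta)\|_{\mathrm{op}}\le \beta^2\|v_i\|_2^2/4\le\beta^2$.

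To convert the Hessian bound into the Lipschitz-gradient statement, I will use the fundamental theorem of calculus along the segment:
\begin{equation*}
\nabla\ell_i(\theta)-\nabla\ell_i(\theta')=\int_0^1\nabla^2\ell_i\bigl(\theta'+t(\theta-\theta')\bigr)(\theta-\theta')\,dt .
\end{equation*}
Bounding the operator norm inside the integral by $L_i\le\beta^2$ yields the stated inequality. Alternatively, one can argue directly from~\eqref{eq:dpo-per-sample-grad}: $\sigma$ is $1/4$-Lipschitz and $z_i$ is affine with slope $\beta v_i$, so $\|g_i(\theta)-g_i(\theta')\|\le\beta\|v_i\|\cdot\tfrac14\beta|\langle v_i,\theta-\theta'\rangle|$.

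For the observed loss $\tilde\ell_i$, I note that it has exactly the same form with $(v_i,b_i)$ replaced by $(-v_i,-b_i)$ when $i\in A$. Swapping the pair negates both the feature difference and the reference log-ratio. The Hessian is then $\beta^2\sigma(1-\sigma)(-v_i)(-v_i)^\top=\beta^2\sigma(1-\sigma)v_iv_i^\top$ evaluated at the flipped logit, so the same two bounds hold verbatim.

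Finally, for nonnegative weights $a_i$ summing to one, $\nabla^2F_a=\sum_ia_i\nabla^2\ell_i$ lies between $0$ and $\sum_ia_iL_iI\preceq\beta^2 I$. The same holds for $\tilde F_a$. Applying the integral argument again gives convexity and $\beta^2$-smoothness of both. The specific objectives are instances of this with the following weights:
\begin{itemize}
\item $a_i=1/N$ for $F$;
\item $a_i=\mathbf 1[i\in G]/(N-K)$ for $F_G$ and $\tilde F_{G,A}$;
\item $a_i=w_i^{(\tau)}/W_\tau$ for $F_\tau$ and $\tilde F_{\tau,A}$.
\end{itemize}
In each case the weights are nonnegative, fixed, and normalized.

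There is no real obstacle here. The only point needing care is tracking the sign change of $b_i$ under a flip, so that the observed loss is still of the affine-logistic form. Since $b_i$ enters only the logit and not the curvature bound, this does not affect the constants.
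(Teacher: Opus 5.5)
Your proposal is correct and follows essentially the same route as the paper. Both arguments use the closed-form Hessian with $\sigma(1-\sigma)\le 1/4$ and $\|v_iv_i^\top\|_{\mathrm{op}}=\|v_i\|_2^2\le 4$, the sign flip $(v_i,b_i)\mapsto(-v_i,-b_i)$ for $\tilde\ell_i$, and averaging of PSD, $\beta^2$-bounded Hessians under normalized nonnegative weights; your integral step and explicit weight choices for $F$, $F_G$, $F_\tau$ only spell out details the paper leaves implicit.
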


\begin{lemma}[Restricted strong convexity of the trimmed DPO objective]\label{lem:trimk-restricted-sc}
Under Assumptions~\ref{ass:log-linear-policy} and~\ref{ass:bounded-region}, the clean trimmed objective $F_G$ introduced in Section~\ref{sec:trim-k-robust-dpo} is $\mu_G$-strongly convex on $\mathbb B_R(\theta_{\mathrm{ref}})$ and $\beta^2$-smooth, where $\mu_G$ is defined in~\eqref{eq:trim-mu-G}.
\end{lemma}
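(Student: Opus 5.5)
The plan is to compute the Hessian of $F_G$ explicitly, lower-bound it uniformly on the ball $\mathbb B_R(\theta_{\mathrm{ref}})$, and then integrate that bound to obtain strong convexity. Smoothness follows directly from Lemma~\ref{lem:dpo-smooth-affine}. By~\eqref{eq:dpo-per-sample-hess},
\begin{equation*}
\nabla^2 F_G(\theta)=\frac{\beta^2}{|G|}\sum_{i\in G}\sigma(z_i(\theta))\bigl(1-\sigma(z_i(\theta))\bigr)v_iv_i^\top .
\end{equation*}
For $\theta\in\mathbb B_R(\theta_{\mathrm{ref}})$, each scalar factor is at least $\underline c_G(R)$ by the definition in~\eqref{eq:trim-cov-curvature}. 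Since every $v_iv_i^\top\succeq 0$, lowering each coefficient preserves the Loewner order, so
\begin{equation*}
\nabla^2 F_G(\theta)\succeq \beta^2\underline c_G(R)\,\Sigma_G .
\end{equation*}
Restricted to the common subspace $\mathcal S$ of Assumption~\ref{ass:bounded-region}, we have $\Sigma_G\succeq\lambda_{\min}(\Sigma_G)I$. Hence $u^\top\nabla^2F_G(\theta)u\ge\mu_G\|u\|_2^2$ for all $u\in\mathcal S$. Positivity of $\underline c_G(R)$ comes from continuity of $\theta\mapsto\sigma(z_i(\theta))(1-\sigma(z_i(\theta)))$, which is strictly positive, together with compactness of the ball and finiteness of $G$; the paper already notes this.

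Next I will convert the Hessian bound into the first-order strong-convexity inequalities. Take $\theta,\theta'\in\mathbb B_R(\theta_{\mathrm{ref}})$ with $\theta-\theta'\in\mathcal S$. The ball is convex, so the segment $\theta'+s(\theta-\theta')$ stays inside it. Integrating along the segment gives
\begin{equation*}
\langle\nabla F_G(\theta)-\nabla F_G(\theta'),\theta-\theta'\rangle=\int_0^1(\theta-\theta')^\top\nabla^2F_G(\theta'+s(\theta-\theta'))(\theta-\theta')\,ds\ge\mu_G\|\theta-\theta'\|_2^2 .
\end{equation*}
A second integration yields $F_G(\theta)\ge F_G(\theta')+\langle\nabla F_G(\theta'),\theta-\theta'\rangle+\tfrac{\mu_G}{2}\|\theta-\theta'\|_2^2$. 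For $\beta^2$-smoothness, I will simply invoke Lemma~\ref{lem:dpo-smooth-affine} with uniform weights $a_i=1/|G|$ on $G$. Alternatively, note $\sigma(1-\sigma)\le 1/4$ and $\|v_i\|_2\le 2$, so $\nabla^2F_G\preceq\beta^2\Sigma_G/4\preceq\beta^2 I$.

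The main obstacle is interpreting ``strongly convex'' when $\Sigma_G$ may be singular on $\mathbb R^d$. Strong convexity can then only hold along the subspace $\mathcal S$, and I will state the result in that restricted sense. This is consistent with how the lemma is used later. The gradient-descent iterates start at $\theta_0$ and move only by gradients, and each gradient lies in $\mathrm{span}\{v_i\}$, which sits inside $\mathcal S$ by assumption. So all pairwise differences of iterates lie in $\mathcal S$, and $\lambda_{\min}$ is understood on $\mathcal S$ exactly as the paper stipulates.
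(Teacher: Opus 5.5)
Your proposal is correct and follows the paper's proof. Like the paper, you compute $\nabla^2 F_G$, bound each logistic factor below by $\underline c_G(R)$ to get $\nabla^2F_G(\theta)\succeq\beta^2\underline c_G(R)\Sigma_G\succeq\mu_G I$ on the ball (with $\lambda_{\min}$ taken on the trained subspace), and take $\beta^2$-smoothness from Lemma~\ref{lem:dpo-smooth-affine}; your segment integration to first-order strong-convexity inequalities and the explicit restriction to $\mathcal S$ are details the paper leaves implicit, and you handle them correctly.
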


\begin{lemma}[Closed-form bound on the clean Trim-$K$ bias]\label{lem:trim-bias-bound}
Under Assumption~\ref{ass:log-linear-policy}, for every $\theta\in\mathbb R^d$,
\begin{equation*}
\|\nabla F_G(\theta)-\nabla F(\theta)\|_2
\;\le\;
\frac{1}{N}\!\left(K\,\delta(K)+\sum_{j=1}^K s_{(j)}\right).
\end{equation*}
Consequently,
\begin{equation*}
\mathsf B_G^{\mathrm{cln}}(R)
\;\le\;
\frac{K\,\delta(K)+\sum_{j=1}^K s_{(j)}}{N}
\;\le\;
\frac{2}{N}\sum_{j=1}^K s_{(j)}
\;\le\;
\frac{4\beta K}{N}.
\end{equation*}
Moreover, for the regularized clean bias $\mathsf B_{G,\lambda}^{\mathrm{cln}}(R)$, 
\begin{equation*}
\mathsf B_{G,\lambda}^{\mathrm{cln}}(R)
\;\le\;
\frac{K\,\delta(K)+\sum_{j=1}^K s_{(j)}}{N}+\lambda R.
\end{equation*}
\end{lemma}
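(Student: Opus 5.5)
Writing $T:=[N]\setminus G$ for the trimmed indices ($|T|=K$), the plan is to expand $\nabla F_G-\nabla F$ explicitly and bound it by the per-sample gradient norms. By \eqref{eq:dpo-per-sample-grad}, $\|g_i(\theta)\|_2=\beta(1-\sigma(z_i(\theta)))\|v_i\|_2\le s_i$ for every $\theta\in\mathbb R^d$. This is the only property of $g_i$ the argument needs, so the bound holds uniformly in $\theta$ and no use of $R$ is required.

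Since $|G|=N-K$, I would write
\begin{equation*}
\nabla F_G(\theta)-\nabla F(\theta)
=\Bigl(\frac{1}{N-K}-\frac{1}{N}\Bigr)\sum_{i\in G}g_i(\theta)-\frac{1}{N}\sum_{i\in T}g_i(\theta)
=\frac{K}{N(N-K)}\sum_{i\in G}g_i(\theta)-\frac{1}{N}\sum_{i\in T}g_i(\theta).
\end{equation*}
The two pieces are then bounded separately by the triangle inequality. For the kept part, $\|g_i\|_2\le s_i\le\delta(K)$ for $i\in G$, so
\begin{equation*}
\frac{K}{N(N-K)}\Bigl\|\sum_{i\in G}g_i\Bigr\|_2\le \frac{K}{N(N-K)}(N-K)\delta(K)=\frac{K\delta(K)}{N}.
\end{equation*}
For the trimmed part, $\frac1N\sum_{i\in T}\|g_i\|_2\le\frac1N\sum_{j=1}^K s_{(j)}$, because $T$ is exactly the top-$K$ ranks. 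Adding the two gives the first inequality. Taking the supremum over $\mathbb B_R(\theta_{\mathrm{ref}})$ then yields the bound on $\mathsf B_G^{\mathrm{cln}}(R)$.

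The remaining inequalities follow in turn:
\begin{itemize}
\item \emph{Second inequality.} $\delta(K)=s_{(K+1)}\le s_{(j)}$ for all $j\le K$, so $K\delta(K)\le\sum_{j=1}^K s_{(j)}$.
\item \emph{Third inequality.} By Assumption~\ref{ass:log-linear-policy}, $\|v_i\|_2\le\|\psi(x^i,y_w^i)\|_2+\|\psi(x^i,y_l^i)\|_2\le 2$, so $s_{(j)}\le 2\beta$ and $\frac2N\sum_{j\le K}s_{(j)}\le 4\beta K/N$.
\item \emph{Regularized bias.} Apply the triangle inequality to $\nabla F_G(\theta)+\lambda(\theta-\theta_{\mathrm{ref}})-\nabla F(\theta)$. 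On the ball, $\|\lambda(\theta-\theta_{\mathrm{ref}})\|_2\le\lambda R$, and the remaining part is controlled by the first inequality.
\end{itemize}

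There is no real obstacle here. The only step needing care is the algebraic rewriting of the normalization mismatch $1/(N-K)-1/N$. Bounding the kept part with the cutoff $\delta(K)$, rather than with $\max_i s_i$, is what produces the $K\delta(K)$ term.
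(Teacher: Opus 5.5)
Your proof is correct and follows essentially the same route as the paper: the split into kept and trimmed indices, the rewriting of $\frac{1}{N-K}-\frac{1}{N}$ as $\frac{K}{N(N-K)}$, the bound $\|g_i(\theta)\|_2\le s_i$, the cutoff $\delta(K)$ on $G$ and the top-$K$ sum on the trimmed set, followed by the same chain of inequalities and the triangle inequality for the $\lambda R$ term. Your observation that the first bound holds for all $\theta\in\mathbb R^d$ without using $R$ also matches the paper's statement.
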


\begin{lemma}[Closed-form bound on the clean Soft-Trim-$K$ bias]\label{lem:softtrim-bias-bound}
Define
\begin{equation}\label{eq:Ctau-clean-bound}
\mathsf C_\tau
:=
\frac{1}{N W_\tau}
\left[
\sum_{j=1}^K
\left|D_\tau-N\bigl(1-a_j^{(\tau)}\bigr)\right|s_{(j)}
+
D_\tau\sum_{j=K+1}^N s_{(j)}
\right].
\end{equation}
Under Assumption~\ref{ass:log-linear-policy}, for every $\theta\in\mathbb R^d$,
\begin{equation*}
\|\nabla F_\tau(\theta)-\nabla F(\theta)\|_2
\;\le\;
\mathsf C_\tau.
\end{equation*}
Consequently,
\begin{equation*}
\mathsf B_\tau^{\mathrm{cln}}(R)
\;\le\;
\mathsf C_\tau
\;\le\;
\frac{D_\tau\bar s+\sum_{j=1}^K(1-a_j^{(\tau)})s_{(j)}}{W_\tau}
\;\le\;
\frac{4\beta K}{W_\tau}.
\end{equation*}
Moreover, for the regularized clean bias $\mathsf B_{\tau,\lambda}^{\mathrm{cln}}(R)$, 
\begin{equation*}
\mathsf B_{\tau,\lambda}^{\mathrm{cln}}(R)
\;\le\;
\mathsf C_\tau+\lambda R.
\end{equation*}
As $\tau\downarrow 0$, $\mathsf C_\tau$ reduces to the exact hard-trim type bound
\begin{equation*}
\frac{1}{N(N-K)}
\left[
(N-K)\sum_{j=1}^K s_{(j)}
+
K\sum_{j=K+1}^N s_{(j)}
\right],
\end{equation*}
which is at most $\bigl(K\delta(K)+\sum_{j=1}^K s_{(j)}\bigr)/N$.
As $\tau\to\infty$, all weights converge to one, $D_\tau\to 0$, and $\mathsf C_\tau\to 0$.
\end{lemma}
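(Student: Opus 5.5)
The plan is to write the weighted-minus-uniform gradient as a single signed combination of per-sample gradients, bound each gradient by its score, and apply the triangle inequality.

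First, by~\eqref{eq:dpo-per-sample-grad}, $\nabla F_\tau(\theta)-\nabla F(\theta)=\sum_{i=1}^N\bigl(w_i^{(\tau)}/W_\tau-1/N\bigr)g_i(\theta)$. Since $|\sigma(z)-1|\le 1$, we have $\|g_i(\theta)\|_2\le\beta\|v_i\|_2=s_i$ for every $\theta\in\mathbb R^d$. Note also that $W_\tau=N-D_\tau$, because exactly $K$ samples carry weights $a_1^{(\tau)},\dots,a_K^{(\tau)}$ and the remaining $N-K$ carry weight one. The coefficients are therefore
\[
\frac{w_i^{(\tau)}}{W_\tau}-\frac1N=\frac{N w_i^{(\tau)}-W_\tau}{N W_\tau}.
\]
I split the indices by rank. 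For $r(i)\ge K+1$ the numerator is $N-W_\tau=D_\tau\ge 0$. For the sample at rank $j\le K$ it is $Na_j^{(\tau)}-N+D_\tau=D_\tau-N(1-a_j^{(\tau)})$, whose sign can vary, so I keep the absolute value. The triangle inequality, together with $\|g_i\|_2\le s_i$, then gives exactly $\|\nabla F_\tau(\theta)-\nabla F(\theta)\|_2\le\mathsf C_\tau$ for all $\theta$. Taking the supremum over the ball gives $\mathsf B_\tau^{\mathrm{cln}}(R)\le\mathsf C_\tau$.

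For the coarser bounds, I use $|D_\tau-N(1-a_j^{(\tau)})|\le D_\tau+N(1-a_j^{(\tau)})$. Then
\[
N W_\tau\,\mathsf C_\tau\le D_\tau\sum_{i=1}^N s_i+N\sum_{j=1}^K(1-a_j^{(\tau)})s_{(j)},
\]
and dividing by $NW_\tau$ yields $\bigl(D_\tau\bar s+\sum_{j=1}^K(1-a_j^{(\tau)})s_{(j)}\bigr)/W_\tau$. Since $\|v_i\|_2\le 2$ by Assumption~\ref{ass:log-linear-policy}, every score satisfies $s_i\le 2\beta$, and hence $\bar s\le 2\beta$. Combining this with $D_\tau\le K$ and $1-a_j^{(\tau)}\le 1$ gives a numerator of at most $2\beta K+2\beta K=4\beta K$. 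For the regularized bias, I add $\lambda(\theta-\theta_{\mathrm{ref}})$ inside the norm and use the triangle inequality with $\|\theta-\theta_{\mathrm{ref}}\|_2\le R$ on $\mathbb B_R(\theta_{\mathrm{ref}})$.

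For the limits, as $\tau\downarrow0$ each $a_j^{(\tau)}\to0$ because $K+1-j\ge1$. Hence $D_\tau\to K$ and $W_\tau\to N-K$, so $|D_\tau-N(1-a_j^{(\tau)})|\to N-K$, and substituting gives the stated hard-trim expression. Using $\sum_{j>K}s_{(j)}\le (N-K)\delta(K)$, that expression is at most $\bigl(\sum_{j\le K}s_{(j)}+K\delta(K)\bigr)/N$. As $\tau\to\infty$, $a_j^{(\tau)}\to1$, so $D_\tau\to0$; every coefficient in $\mathsf C_\tau$ tends to zero while $W_\tau\to N$, and therefore $\mathsf C_\tau\to0$.

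The only delicate point is the sign bookkeeping of the top-$K$ coefficients, namely $D_\tau$ versus $N(1-a_j^{(\tau)})$, and the identity $W_\tau=N-D_\tau$. Both depend on the ranking assigning exactly one sample to each rank $j\le K$, which holds under any fixed tie-breaking.
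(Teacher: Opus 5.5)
Your proposal is correct and follows the paper's proof essentially step for step. Both use the same coefficient identity $w_i^{(\tau)}/W_\tau-1/N=(Nw_i^{(\tau)}-W_\tau)/(NW_\tau)$ with $W_\tau=N-D_\tau$, the triangle inequality with $\|g_i(\theta)\|_2\le s_i$, the relaxation $|D_\tau-N(1-a_j^{(\tau)})|\le D_\tau+N(1-a_j^{(\tau)})$, and the same limit arguments. The paper also notes the alternative decomposition $\nabla F_\tau-\nabla F=W_\tau^{-1}\bigl[D_\tau\nabla F-\sum_{j\le K}(1-a_j^{(\tau)})g_{(j)}\bigr]$, which gives the coarser bound directly, but your route through $\mathsf C_\tau$ already suffices.
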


\begin{lemma}[Non-expansiveness of gradient descent for convex smooth objectives]\label{lem:gd-nonexpansive}
Let $F:\mathbb R^d\to\mathbb R$ be convex and $L$-smooth. Define $T(\theta):=\theta-\eta\nabla F(\theta)$. If $0<\eta\le 2/L$, then $T$ is non-expansive:
\begin{equation*}
\|T(x)-T(y)\|_2\le \|x-y\|_2,
\qquad \forall x,y\in\mathbb R^d.
\end{equation*}
\end{lemma}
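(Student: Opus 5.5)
The plan is to use the co-coercivity of gradients of convex $L$-smooth functions (the Baillon--Haddad inequality), which holds globally because $F$ is convex and $L$-smooth on all of $\mathbb R^d$:
\begin{equation*}
\langle \nabla F(x)-\nabla F(y),\,x-y\rangle \;\ge\; \frac{1}{L}\|\nabla F(x)-\nabla F(y)\|_2^2,\qquad \forall x,y\in\mathbb R^d.
\end{equation*}
I would either cite this standard fact or derive it. To derive it, fix $y$ and apply the descent lemma to the auxiliary convex $L$-smooth function $\phi_y(z):=F(z)-\langle \nabla F(y),z\rangle$. Its minimizer is $y$, so $\phi_y(y)\le \phi_y(x)-\frac{1}{2L}\|\nabla\phi_y(x)\|_2^2$. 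Swapping the roles of $x$ and $y$ and summing the two inequalities gives co-coercivity.

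With co-coercivity in hand, the proof is a direct expansion. Write $u:=x-y$ and $g:=\nabla F(x)-\nabla F(y)$. Then
\begin{equation*}
\|T(x)-T(y)\|_2^2=\|u\|_2^2-2\eta\langle g,u\rangle+\eta^2\|g\|_2^2
\le \|u\|_2^2-\eta\Bigl(\frac{2}{L}-\eta\Bigr)\|g\|_2^2 .
\end{equation*}
Since $0<\eta\le 2/L$, the bracketed factor is nonnegative, so $\|T(x)-T(y)\|_2\le\|x-y\|_2$. Taking square roots completes the argument.

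The only substantive step is establishing co-coercivity, specifically the descent-lemma argument applied to $\phi_y$. It is standard, and I expect no real obstacle. The key point is that it needs convexity and smoothness on all of $\mathbb R^d$, which holds here by Lemma~\ref{lem:dpo-smooth-affine}. It does not require the bounded region of Assumption~\ref{ass:bounded-region}.
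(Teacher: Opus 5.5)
Your proof is correct and follows the same route as the paper: you invoke the $1/L$-cocoercivity of $\nabla F$ (Baillon--Haddad), expand $\|T(x)-T(y)\|_2^2$, and use $\eta^2-2\eta/L\le 0$ for $0<\eta\le 2/L$. The only difference is that the paper simply cites cocoercivity, whereas you also sketch the standard derivation via the descent lemma applied to $\phi_y$, and that derivation is valid.
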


\begin{lemma}[Time-uniform stability under objective perturbation]\label{lem:sc-time-uniform}
Let $F_0:\mathbb R^d\to\mathbb R$ be the target objective and let $H:\mathbb R^d\to\mathbb R$ be the objective optimized by the attacked run.
Assume $H$ is $\mu$-strongly convex and $L$-smooth on a convex region containing the clean and attacked iterates, and assume
\begin{equation*}
\sup_{\theta\ \text{in the region}}\|\nabla H(\theta)-\nabla F_0(\theta)\|_2\le B.
\end{equation*}
Run full-batch gradient descent from the same initialization with step size $0<\eta\le 1/L$:
\begin{equation*}
\theta_{t+1}^{\mathrm{cln}}=\theta_t^{\mathrm{cln}}-\eta\nabla F_0(\theta_t^{\mathrm{cln}}),
\qquad
\theta_{t+1}^{\mathrm{atk}}=\theta_t^{\mathrm{atk}}-\eta\nabla H(\theta_t^{\mathrm{atk}}).
\end{equation*}
Then for every $T\ge 1$,
\begin{equation*}
\|\theta_T^{\mathrm{atk}}-\theta_T^{\mathrm{cln}}\|_2
\le
\frac{B}{\mu}\big(1-(1-\eta\mu)^T\big)
\le
\frac{B}{\mu}.
\end{equation*}
\end{lemma}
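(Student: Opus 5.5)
The plan is to track the error $e_t:=\theta_t^{\mathrm{atk}}-\theta_t^{\mathrm{cln}}$ with a one-step recursion that adds in the gradient mismatch at the clean iterate. Since $\theta_0^{\mathrm{atk}}=\theta_0^{\mathrm{cln}}$, we have $e_0=0$. Write $T_H(\theta):=\theta-\eta\nabla H(\theta)$. Then
\begin{equation*}
e_{t+1}=\bigl(T_H(\theta_t^{\mathrm{atk}})-T_H(\theta_t^{\mathrm{cln}})\bigr)-\eta\bigl(\nabla H(\theta_t^{\mathrm{cln}})-\nabla F_0(\theta_t^{\mathrm{cln}})\bigr).
\end{equation*}
The key design choice is that contraction is applied only to the map $T_H$. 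The map $H$ is the one assumed strongly convex, and both iterates lie in the convex region. The target $F_0$ enters only through the bias term, which the hypothesis bounds by $B$ because $\theta_t^{\mathrm{cln}}$ is in the region.

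Next I will show that $T_H$ is a $(1-\eta\mu)$-contraction on the region when $0<\eta\le 1/L$. Fix $x,y$ in the region and set $u:=x-y$, $g:=\nabla H(x)-\nabla H(y)$. Expanding gives $\|u-\eta g\|^2=\|u\|^2-2\eta\langle g,u\rangle+\eta^2\|g\|^2$. I will then use the standard coercivity inequality for $\mu$-strongly convex, $L$-smooth functions:
\begin{equation*}
\langle g,u\rangle\ge \tfrac{\mu L}{\mu+L}\|u\|^2+\tfrac{1}{\mu+L}\|g\|^2.
\end{equation*}
A simpler route is also available: integrate the Hessian along the segment, which stays in the region by convexity. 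This writes $g=\bar M u$ with $\mu I\preceq\bar M\preceq LI$, where we work on the common subspace of Assumption~\ref{ass:bounded-region}, which contains all update directions and hence $u$. Then $\|(I-\eta\bar M)u\|\le \max(|1-\eta\mu|,|1-\eta L|)\|u\|=(1-\eta\mu)\|u\|$, because $\eta\le 1/L$ forces $0\le 1-\eta L\le 1-\eta\mu$. I prefer the Hessian-integration route. The logistic losses here are $C^2$, and it avoids needing the coercivity inequality in a merely local form.

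Combining the contraction with the bias bound gives $\|e_{t+1}\|\le(1-\eta\mu)\|e_t\|+\eta B$. Unrolling from $e_0=0$ gives
\begin{equation*}
\|e_T\|\le \eta B\sum_{k=0}^{T-1}(1-\eta\mu)^k=\tfrac{B}{\mu}\bigl(1-(1-\eta\mu)^T\bigr).
\end{equation*}
Since $\eta\mu\le\mu/L\le1$, the geometric factor lies in $[0,1)$, and the second inequality $\le B/\mu$ follows.

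The main obstacle is justifying the contraction locally. Strong convexity holds only on the region and only along the trained subspace. I therefore have to check three things. First, the segment between the two iterates stays in the ball, which holds by convexity of $\mathbb B_R(\theta_{\mathrm{ref}})$. Second, $e_t$ lies in the subspace where $\lambda_{\min}$ is measured. This holds by induction, since every gradient $\nabla\ell_i$ is a multiple of $v_i$, and the assumption posits a common subspace containing all update directions. Third, the lower curvature bound $\mu\le1/\eta$ is needed so that $1-\eta\mu\ge 0$, and this is automatic from $\mu\le L$.
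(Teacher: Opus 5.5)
Your proposal is correct and follows the paper's proof: you use the same splitting of $e_{t+1}$ through $T_H(\theta_t^{\mathrm{cln}})$, the same $(1-\eta\mu)$-contraction of $T_H$, and the same unrolling of $\|e_{t+1}\|_2\le(1-\eta\mu)\|e_t\|_2+\eta B$ from $e_0=0$. The only difference is that the paper cites the contraction of the gradient map from a primer, whereas you verify it by integrating the Hessian along the segment between the iterates; that is sound, and slightly more careful, since strong convexity is assumed only on the region.
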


\section{Methods: Proofs of Theoretical Results}\label{app:proofs}

\subsection{Proof of Lemma~\ref{lem:dpo-smooth-affine}}
\begin{proof}
Fix a sample $i$. Since $b_i$ does not depend on $\theta$, $\nabla_\theta z_i(\theta)=\beta v_i$. Differentiating,
\begin{equation*}
\nabla \ell_i(\theta)= -\big(1-\sigma(z_i(\theta))\big)\,\beta v_i,
\qquad
\nabla^2 \ell_i(\theta)= \beta^2 \sigma(z_i(\theta))\big(1-\sigma(z_i(\theta))\big)\, v_i v_i^\top.
\end{equation*}
Thus, using $\sigma(z_i(\theta))(1-\sigma(z_i(\theta)))\le 1/4$ and $\|v_i v_i^\top\|_{\mathrm{op}}=\|v_i\|_2^2$,
\begin{equation*}
\|\nabla^2 \ell_i(\theta)\|_{\mathrm{op}}
\le \frac{\beta^2}{4}\|v_i\|_2^2.
\end{equation*}
Under Assumption~\ref{ass:log-linear-policy},
$\|v_i\|_2\le \|\psi(x^i,y_w^i)\|_2+\|\psi(x^i,y_l^i)\|_2\le 2$, hence
$\|\nabla^2 \ell_i(\theta)\|_{\mathrm{op}}\le \beta^2$ for all $\theta$.
A uniform Hessian bound implies gradient Lipschitzness:
\begin{equation*}
\|\nabla \ell_i(\theta)-\nabla \ell_i(\theta')\|_2
\le \sup_{\xi}\|\nabla^2 \ell_i(\xi)\|_{\mathrm{op}}\,\|\theta-\theta'\|_2
\le \beta^2\|\theta-\theta'\|_2.
\end{equation*}
The Hessian is positive semidefinite, so $\ell_i$ is convex. For the observed orientation, either the pair is unchanged or $(v_i,b_i,z_i)$ becomes $(-v_i,-b_i,-z_i)$. The same Hessian bound and convexity conclusion therefore hold for $\tilde\ell_i$.

Finally, for any fixed normalized nonnegative weighted average $F_a(\theta)=\sum_i a_i\ell_i(\theta)$,
\begin{equation*}
\|\nabla^2 F_a(\theta)\|_{\mathrm{op}}
=\Big\|\sum_i a_i\nabla^2 \ell_i(\theta)\Big\|_{\mathrm{op}}
\le \sum_i a_i\|\nabla^2 \ell_i(\theta)\|_{\mathrm{op}}
\le \beta^2,
\end{equation*}
and $\nabla^2F_a(\theta)\succeq 0$. The identical argument applies to $\tilde F_a$ using the observed losses $\tilde\ell_i$. This proves the stated claims for the fixed normalized clean and attacked empirical objectives.
\end{proof}

\subsection{Proof of Lemma~\ref{lem:trimk-restricted-sc}}
\begin{proof}
For $\theta\in\mathbb B_R(\theta_{\mathrm{ref}})$, the Hessian of the clean trimmed objective is
\begin{equation*}
\nabla^2 F_G(\theta)
=
\frac{1}{|G|}\sum_{i\in G}
\beta^2\sigma(z_i(\theta))\bigl(1-\sigma(z_i(\theta))\bigr)v_i v_i^\top.
\end{equation*}
By the definition of $\underline c_G(R)$,
\begin{equation*}
\nabla^2 F_G(\theta)
\succeq
\beta^2\underline c_G(R)\frac{1}{|G|}\sum_{i\in G}v_i v_i^\top
=
\beta^2\underline c_G(R)\Sigma_G
\succeq
\mu_G I.
\end{equation*}
Therefore $F_G$ is $\mu_G$-strongly convex on the bounded region. Its $\beta^2$-smoothness follows from Lemma~\ref{lem:dpo-smooth-affine}.
\end{proof}

\subsection{Proof of Lemma~\ref{lem:trim-bias-bound}}
\begin{proof}
Let $T:=[N]\setminus G$ denote the trimmed set, so $|T|=K$ and $|G|=N-K$. Splitting $\sum_{i=1}^N g_i(\theta)=\sum_{i\in G}g_i(\theta)+\sum_{i\in T}g_i(\theta)$, we obtain
\begin{align*}
\nabla F_G(\theta)-\nabla F(\theta)
&=
\frac{1}{N-K}\sum_{i\in G}g_i(\theta)
-\frac{1}{N}\sum_{i=1}^N g_i(\theta)\\
&=
\left(\frac{1}{N-K}-\frac{1}{N}\right)\sum_{i\in G}g_i(\theta)
-\frac{1}{N}\sum_{i\in T}g_i(\theta)\\
&=
\frac{K}{N(N-K)}\sum_{i\in G}g_i(\theta)
-\frac{1}{N}\sum_{i\in T}g_i(\theta).
\end{align*}
By the closed-form gradient~\eqref{eq:dpo-per-sample-grad} and $|\sigma(z)-1|\le 1$, we have $\|g_i(\theta)\|_2\le \beta\|v_i\|_2=s_i$ for every $\theta$. Triangle inequality gives
\begin{align*}
\|\nabla F_G(\theta)-\nabla F(\theta)\|_2
&\le
\frac{K}{N(N-K)}\sum_{i\in G}s_i+\frac{1}{N}\sum_{i\in T}s_i.
\end{align*}
By construction of $G$, every $i\in G$ satisfies $s_i\le \delta(K)$, hence $\sum_{i\in G}s_i\le (N-K)\delta(K)$. Also, $T$ contains exactly the $K$ largest scores, so $\sum_{i\in T}s_i=\sum_{j=1}^K s_{(j)}$. Substituting,
\begin{equation*}
\|\nabla F_G(\theta)-\nabla F(\theta)\|_2
\le
\frac{K\,\delta(K)}{N}+\frac{1}{N}\sum_{j=1}^K s_{(j)}
=
\frac{K\,\delta(K)+\sum_{j=1}^K s_{(j)}}{N}.
\end{equation*}
Taking the supremum over $\theta\in\mathbb B_R(\theta_{\mathrm{ref}})$ gives the bound on $\mathsf B_G^{\mathrm{cln}}(R)$.
Since $\delta(K)=s_{(K+1)}\le s_{(j)}$ for $j\le K$, $K\delta(K)\le \sum_{j=1}^K s_{(j)}$, yielding the simplified $\frac{2}{N}\sum_{j=1}^K s_{(j)}$. Each $s_i=\beta\|v_i\|_2\le 2\beta$ under Assumption~\ref{ass:log-linear-policy}, hence $\sum_{j=1}^K s_{(j)}\le 2\beta K$, giving the universal bound $\frac{4\beta K}{N}$.
For the regularized bias, note
\begin{equation*}
\nabla F_G(\theta)+\lambda(\theta-\theta_{\mathrm{ref}})-\nabla F(\theta)
=
(\nabla F_G(\theta)-\nabla F(\theta))+\lambda(\theta-\theta_{\mathrm{ref}}),
\end{equation*}
and $\|\theta-\theta_{\mathrm{ref}}\|_2\le R$ on $\mathbb B_R(\theta_{\mathrm{ref}})$, so the regularized bound follows by the triangle inequality.
\end{proof}

\subsection{Proof of Lemma~\ref{lem:softtrim-bias-bound}}
\begin{proof}
Write the samples in descending score order, so that the top-$K$ weights are $a_j^{(\tau)}$ for $j\le K$ and the remaining weights are one.
Recall from Section~\ref{sec:soft-trim-k-robust-dpo} that $D_\tau=K-\sum_{j=1}^K a_j^{(\tau)}=N-W_\tau$. Since
\begin{equation*}
\nabla F_\tau(\theta)-\nabla F(\theta)
=
\sum_{i=1}^N\left(\frac{w_i^{(\tau)}}{W_\tau}-\frac{1}{N}\right)g_i(\theta),
\end{equation*}
and $\|g_i(\theta)\|_2\le s_i$, the triangle inequality gives
\begin{equation*}
\|\nabla F_\tau(\theta)-\nabla F(\theta)\|_2
\le
\sum_{i=1}^N
\left|\frac{w_i^{(\tau)}}{W_\tau}-\frac{1}{N}\right|s_i .
\end{equation*}
For a top-$K$ sample with rank $j$, the absolute coefficient is
\begin{equation*}
\left|\frac{a_j^{(\tau)}}{W_\tau}-\frac{1}{N}\right|
=
\frac{\left|N a_j^{(\tau)}-W_\tau\right|}{N W_\tau}
=
\frac{\left|D_\tau-N(1-a_j^{(\tau)})\right|}{N W_\tau}.
\end{equation*}
For a sample with rank $j>K$, the coefficient is
\begin{equation*}
\left|\frac{1}{W_\tau}-\frac{1}{N}\right|
=
\frac{D_\tau}{N W_\tau}.
\end{equation*}
Combining these two displays proves
\begin{equation*}
\|\nabla F_\tau(\theta)-\nabla F(\theta)\|_2
\le
\mathsf C_\tau .
\end{equation*}
Taking the supremum over $\theta\in\mathbb B_R(\theta_{\mathrm{ref}})$ gives $\mathsf B_\tau^{\mathrm{cln}}(R)\le \mathsf C_\tau$.
Also, for every $j\le K$,
\begin{equation*}
\left|D_\tau-N(1-a_j^{(\tau)})\right|
\le
D_\tau+N(1-a_j^{(\tau)}),
\end{equation*}
which implies
\begin{equation*}
\mathsf C_\tau
\le
\frac{D_\tau\bar s+\sum_{j=1}^K(1-a_j^{(\tau)})s_{(j)}}{W_\tau}.
\end{equation*}

The same simpler upper bound also follows directly from the alternative decomposition
\begin{equation*}
\nabla F_\tau(\theta)-\nabla F(\theta)
=
\frac{1}{W_\tau}
\left[
D_\tau\nabla F(\theta)
-
\sum_{j=1}^K(1-a_j^{(\tau)})g_{(j)}(\theta)
\right],
\end{equation*}
where $g_{(j)}$ denotes the gradient of the sample with the $j$-th largest score.
Using $\|\nabla F(\theta)\|_2\le \bar s$ and $\|g_{(j)}(\theta)\|_2\le s_{(j)}$ gives
\begin{equation*}
\mathsf B_\tau^{\mathrm{cln}}(R)
\le
\frac{D_\tau\bar s+\sum_{j=1}^K(1-a_j^{(\tau)})s_{(j)}}{W_\tau}.
\end{equation*}
Finally, $\bar s\le 2\beta$, $D_\tau=\sum_{j=1}^K(1-a_j^{(\tau)})\le K$, and $s_{(j)}\le 2\beta$ imply the universal bound $4\beta K/W_\tau$.
The regularized bound follows by adding $\lambda(\theta-\theta_{\mathrm{ref}})$ and using $\|\theta-\theta_{\mathrm{ref}}\|_2\le R$.
The two limiting statements follow by taking $a_j^{(\tau)}\to 0$ as $\tau\downarrow0$ and $a_j^{(\tau)}\to1$ as $\tau\to\infty$ in the definition of $\mathsf C_\tau$.
\end{proof}

\subsection{Proof of Lemma~\ref{lem:gd-nonexpansive}}

\begin{proof}
Let $d:=x-y$ and $\Delta g:=\nabla F(x)-\nabla F(y)$. 
We first recall that for convex $L$-smooth $F$,
$\nabla F$ is $1/L$-cocoercive~\citep{bauschke2009baillon}:
\begin{equation}\label{eq:cocoercive}
\langle d,\Delta g\rangle \;\ge\; \frac{1}{L}\|\Delta g\|_2^2.
\end{equation}

Now expand the distance after one GD step:
\begin{equation*}
\|T(x)-T(y)\|_2^2
=
\|d-\eta \Delta g\|_2^2
=
\|d\|_2^2 - 2\eta\langle d,\Delta g\rangle + \eta^2\|\Delta g\|_2^2.
\end{equation*}
Applying~\eqref{eq:cocoercive} yields
\begin{equation*}
\|T(x)-T(y)\|_2^2
\le
\|d\|_2^2 + \Big(\eta^2-\frac{2\eta}{L}\Big)\|\Delta g\|_2^2.
\end{equation*}
If $0<\eta\le 2/L$, then $\eta^2-\frac{2\eta}{L}\le 0$, hence $\|T(x)-T(y)\|_2^2\le \|d\|_2^2$,
which implies $\|T(x)-T(y)\|_2\le \|x-y\|_2$.
\end{proof}

\subsection{Proof of Lemma~\ref{lem:sc-time-uniform}}
\begin{proof}
Let $T_H(\theta):=\theta-\eta\nabla H(\theta)$ and $T_0(\theta):=\theta-\eta\nabla F_0(\theta)$.
Since $H$ is $\mu$-strongly convex and $L$-smooth, and $0<\eta\le 1/L$, the gradient-descent map $T_H$ is a contraction~\citep{ryu2016primer}:
\begin{equation*}
\|T_H(x)-T_H(y)\|_2\le (1-\eta\mu)\|x-y\|_2 .
\end{equation*}
Therefore, with $\Theta_t:=\theta_t^{\mathrm{atk}}-\theta_t^{\mathrm{cln}}$,
\begin{align*}
\|\Theta_{t+1}\|_2
&=
\|T_H(\theta_t^{\mathrm{atk}})-T_0(\theta_t^{\mathrm{cln}})\|_2\\
&\le
\|T_H(\theta_t^{\mathrm{atk}})-T_H(\theta_t^{\mathrm{cln}})\|_2
+
\eta\|\nabla H(\theta_t^{\mathrm{cln}})-\nabla F_0(\theta_t^{\mathrm{cln}})\|_2\\
&\le
(1-\eta\mu)\|\Theta_t\|_2+\eta B.
\end{align*}
Unrolling this recursion from $\Theta_0=0$ gives
\begin{equation*}
\|\Theta_T\|_2
\le
\eta B\sum_{t=0}^{T-1}(1-\eta\mu)^{T-1-t}
=
\frac{B}{\mu}\big(1-(1-\eta\mu)^T\big)
\le
\frac{B}{\mu}.
\end{equation*}
\end{proof}

\subsection{Proof of Lemma~\ref{lem:constant-shift}}
We include the proof in our notation, following the gradient-shift analysis in~\citet{yang2026efficient}.
\begin{proof}
For sample $i$, swapping $(y_w^i,y_l^i)$ changes $(v_i,b_i)$ to $(-v_i,-b_i)$ and changes the logit to $z_i^{\mathrm{flip}}(\theta)=-z_i(\theta)$.
Using $\sigma(-z)=1-\sigma(z)$ and $g_i(\theta)=\beta(\sigma(z_i(\theta))-1)v_i$, we obtain
\begin{equation*}
g_i^{\mathrm{flip}}(\theta)
=
\beta(\sigma(-z_i(\theta))-1)(-v_i)
=
\beta\sigma(z_i(\theta))v_i.
\end{equation*}
Therefore $g_i^{\mathrm{flip}}(\theta)-g_i(\theta)=\beta v_i$.
\end{proof}

\subsection{Proof of Theorem~\ref{thm:K-robust-grad}}
\begin{proof}
By the decomposition before the theorem,
\begin{equation*}
\nabla \tilde F_{G,A}(\theta)-\nabla F(\theta)
=
\big(\nabla F_G(\theta)-\nabla F(\theta)\big)
+
\frac{1}{|G|}\sum_{i\in A\cap G}\beta v_i .
\end{equation*}
The first term is bounded by $\mathsf B_G^{\mathrm{cln}}(R)$ by definition.
For the second term,
\begin{equation*}
\left\|\frac{1}{|G|}\sum_{i\in A\cap G}\beta v_i\right\|_2
\le \frac{1}{|G|}\sum_{i\in A\cap G}\|\beta v_i\|_2
= \frac{1}{|G|}\sum_{i\in A\cap G}s_i
\le \frac{|A\cap G|}{|G|}\max_{i\in G}s_i
\le \frac{K}{N-K}\delta(K).
\end{equation*}
Combining the two bounds proves the claim.
\end{proof}

\subsection{Proof of Theorem~\ref{thm:trim-K-iter-stability}}
\begin{proof}
The Hessian of $\tilde F_{G,A}$ is the same as the Hessian of $F_G$, because label swaps replace $(v_i,z_i)$ by $(-v_i,-z_i)$ and leave
$\sigma(z_i)(1-\sigma(z_i))v_i v_i^\top$ unchanged.
Thus, by Lemma~\ref{lem:trimk-restricted-sc}, $\tilde F_{G,A}$ is $\mu_G$-strongly convex and $\beta^2$-smooth on the bounded region.
Theorem~\ref{thm:K-robust-grad} gives the uniform perturbation bound
\begin{equation*}
\sup_{\theta\in\mathbb B_R(\theta_{\mathrm{ref}})}
\|\nabla\tilde F_{G,A}(\theta)-\nabla F(\theta)\|_2
\le
\mathsf B_G^{\mathrm{cln}}(R)+\frac{K}{N-K}\delta(K).
\end{equation*}
Applying Lemma~\ref{lem:sc-time-uniform} with $F_0=F$, $H=\tilde F_{G,A}$, $\mu=\mu_G$, $L=\beta^2$, and
\begin{equation*}
B=\mathsf B_G^{\mathrm{cln}}(R)+\frac{K}{N-K}\delta(K)
\end{equation*}
proves the stated bound.
\end{proof}

\subsection{Proof of Corollary~\ref{cor:trim-K-policy-kl}}
\begin{proof}
Fix $x$ and define the log-partition
\begin{equation*}
A_x(\theta):=\log\sum_{y}\exp\!\big(\langle \theta,\psi(x,y)\rangle\big).
\end{equation*}
Under Assumption~\ref{ass:log-linear-policy} ($\|\psi(x,y)\|_2\le 1$), we have
$\|\nabla^2 A_x(\theta)\|_{\mathrm{op}}\le 1$ for all $\theta$, hence $A_x$ is $1$-smooth.
Moreover, $\pi_\theta(\cdot|x)$ is the exponential family induced by $A_x$, and the conditional KL admits the standard Bregman form
\begin{equation*}
D_{\mathrm{KL}}\!\big(\pi_{\theta}(\cdot|x)\,\|\,\pi_{\theta'}(\cdot|x)\big)
=
A_x(\theta')-A_x(\theta)-\langle \nabla A_x(\theta),\theta'-\theta\rangle.
\end{equation*}
By $1$-smoothness of $A_x$, the descent inequality implies that for all $\theta,\theta'$,
\begin{equation*}
A_x(\theta') \le A_x(\theta) + \langle \nabla A_x(\theta),\,\theta'-\theta\rangle
+ \frac{1}{2}\|\theta'-\theta\|_2^2.
\end{equation*}
Rearranging yields the Bregman bound
\begin{equation*}
A_x(\theta')-A_x(\theta)-\langle \nabla A_x(\theta),\,\theta'-\theta\rangle
\le \frac{1}{2}\|\theta'-\theta\|_2^2.
\end{equation*}
Thus
\begin{equation*}
D_{\mathrm{KL}}\!\big(\pi_{\theta}(\cdot|x)\,\|\,\pi_{\theta'}(\cdot|x)\big)
\le \frac12\|\theta-\theta'\|_2^2.
\end{equation*}
Substituting $(\theta,\theta')=(\theta_T^{\mathrm{atk}},\theta_T^{\mathrm{cln}})$ proves the first claim, and taking expectation over $x$ preserves the inequality.
Finally, apply Theorem~\ref{thm:trim-K-iter-stability} to bound $\|\theta_T^{\mathrm{atk}}-\theta_T^{\mathrm{cln}}\|_2$.
\end{proof}

\subsection{Proof of Theorem~\ref{thm:softtrim-grad}}
\begin{proof}
For any attack set $A$ with $|A|\le K$,
\begin{equation*}
\nabla\tilde F_{\tau,A}(\theta)-\nabla F(\theta)
=
\big(\nabla F_\tau(\theta)-\nabla F(\theta)\big)
+
\frac{1}{W_\tau}\sum_{i\in A}w_i^{(\tau)}\beta v_i .
\end{equation*}
The first term is bounded by $\mathsf B_\tau^{\mathrm{cln}}(R)$ by definition.
For the second term, using $\|\beta v_i\|_2=s_i$ gives
\begin{equation*}
\left\|\frac{1}{W_\tau}\sum_{i\in A}w_i^{(\tau)}\beta v_i\right\|_2
\le
\frac{1}{W_\tau}\sum_{i\in A}w_i^{(\tau)}s_i
=
\frac{1}{W_\tau}\sum_{i\in A}\rho_i^{(\tau)}
\le
\frac{\Gamma_\tau(K)}{W_\tau},
\end{equation*}
where the last inequality follows from the definition of $\Gamma_\tau(K)$ as the largest possible sum of $K$ weighted impacts.
This proves~\eqref{eq:softtrim-exact-bound}.

For the exponential weights, every original top-$K$ sample has weight at most $\varepsilon_\tau=e^{-1/\tau}$, and every sample outside the original top-$K$ has score at most $\delta(K)$.
Therefore, for any $A$ with $|A|\le K$,
\begin{equation*}
\sum_{i\in A}\rho_i^{(\tau)}
\le
\varepsilon_\tau\sum_{j=1}^K s_{(j)}+K\delta(K).
\end{equation*}
Taking the maximum over $A$ proves the upper-bound for $\Gamma_\tau(K)$. 
\end{proof}

\subsection{Proof of Theorem~\ref{thm:softtrim-iter-stability}}
\begin{proof}
The Hessian of $\tilde F_{\tau,A}$ is the same as the Hessian of $F_\tau$, because label swaps leave each matrix
$\sigma(z_i)(1-\sigma(z_i))v_i v_i^\top$ unchanged.
The weighted clean objective satisfies
\begin{equation*}
\nabla^2F_\tau(\theta)
=
\frac{1}{W_\tau}\sum_{i=1}^N
w_i^{(\tau)}\beta^2\sigma(z_i(\theta))\bigl(1-\sigma(z_i(\theta))\bigr)v_i v_i^\top.
\end{equation*}
By the definitions of $\underline c_\tau(R)$ and $\Sigma_\tau$,
\begin{equation*}
\nabla^2F_\tau(\theta)
\succeq
\beta^2\underline c_\tau(R)\Sigma_\tau
\succeq
\mu_\tau I
\end{equation*}
for all $\theta\in\mathbb B_R(\theta_{\mathrm{ref}})$.
Thus $\tilde F_{\tau,A}$ is $\mu_\tau$-strongly convex on the bounded region and is $\beta^2$-smooth by Lemma~\ref{lem:dpo-smooth-affine}.
Theorem~\ref{thm:softtrim-grad} gives
\begin{equation*}
\sup_{\theta\in\mathbb B_R(\theta_{\mathrm{ref}})}
\|\nabla\tilde F_{\tau,A}(\theta)-\nabla F(\theta)\|_2
\le
\mathsf B_\tau^{\mathrm{cln}}(R)+\frac{\Gamma_\tau(K)}{W_\tau}.
\end{equation*}
Applying Lemma~\ref{lem:sc-time-uniform} with $F_0=F$, $H=\tilde F_{\tau,A}$, $\mu=\mu_\tau$, $L=\beta^2$, and
\begin{equation*}
B=\mathsf B_\tau^{\mathrm{cln}}(R)+\frac{\Gamma_\tau(K)}{W_\tau}
\end{equation*}
proves the first displayed bound. The second follows by substituting the coarse bound~\eqref{eq:softtrim-coarse-bound}.
\end{proof}

\subsection{Proof of Corollary~\ref{cor:soft-trim-K-policy-kl}}
\begin{proof}
The KL-to-parameter conversion is the same as in the proof of Corollary~\ref{cor:trim-K-policy-kl}. Applying Theorem~\ref{thm:softtrim-iter-stability} gives the stated bound.
\end{proof}

\subsection{Proof of Proposition~\ref{prop:reg-dpo-curvature}}
\begin{proof}
On the bounded region, $F_{\mathfrak R}$ is $\mu_{\mathfrak R}$-strongly convex and $\beta^2$-smooth by assumption. 
Adding the quadratic regularizer adds Hessian $\lambda I$. Hence
\begin{equation*}
\nabla^2 F_{\mathfrak R,\lambda}(\theta)=\nabla^2 F_{\mathfrak R}(\theta)+\lambda I \succeq (\mu_{\mathfrak R}+\lambda)I,
\end{equation*}
and
\begin{equation*}
\|\nabla^2 F_{\mathfrak R,\lambda}(\theta)\|_{\mathrm{op}}
\le
\|\nabla^2 F_{\mathfrak R}(\theta)\|_{\mathrm{op}}+\lambda
\le
\beta^2+\lambda.
\end{equation*}
This proves $(\mu_{\mathfrak R}+\lambda)$-strong convexity and $(\beta^2+\lambda)$-smoothness.
\end{proof}

\subsection{Proof of Corollary~\ref{cor:reg-robust-dpo}}\label{app:reg-robust-proof}
\begin{proof}
Fix $\mathfrak R\in\{G,\tau\}$ and an admissible attack set $A$.
Assumption~\ref{ass:bounded-region} supplies the bounded training region and curvature on the directions updated by both runs.
By Lemma~\ref{lem:constant-shift}, label flips change the gradient of either fixed robust objective by a constant vector.
Consequently, $\tilde F_{\mathfrak R,\lambda,A}$ has the same Hessian as $F_{\mathfrak R,\lambda}$ and, by Proposition~\ref{prop:reg-dpo-curvature}, is $(\mu_{\mathfrak R}+\lambda)$-strongly convex and $(\beta^2+\lambda)$-smooth on that region, in the trained coordinates of Assumption~\ref{ass:bounded-region}.
The regularizer cancels from the attacked-to-clean robust difference, giving
\begin{equation*}
\nabla\tilde F_{\mathfrak R,\lambda,A}(\theta)-\nabla F(\theta)
=\bigl(\nabla F_{\mathfrak R,\lambda}(\theta)-\nabla F(\theta)\bigr) 
+\bigl(\nabla\tilde F_{\mathfrak R,A}(\theta)-\nabla F_{\mathfrak R}(\theta)\bigr).
\end{equation*}
The first term has norm at most $\mathsf B_{\mathfrak R,\lambda}^{\mathrm{cln}}(R)$.
The second has norm at most $\Delta_{\mathfrak R}-\mathsf B_{\mathfrak R}^{\mathrm{cln}}(R)$: this is $K\delta(K)/(N-K)$ by the proof of Theorem~\ref{thm:K-robust-grad} when $\mathfrak R=G$, and $\Gamma_\tau(K)/W_\tau$ by the proof of Theorem~\ref{thm:softtrim-grad} when $\mathfrak R=\tau$.
The common initialization and Assumption~\ref{ass:bounded-region} place both trajectories in the same affine training space. Restricting to its coordinates preserves the updates and cannot increase the gradient-perturbation bound.
Apply Lemma~\ref{lem:sc-time-uniform} in those coordinates with $F_0=F$, $H=\tilde F_{\mathfrak R,\lambda,A}$, $\mu=\mu_{\mathfrak R}+\lambda$, $L=\beta^2+\lambda$, and
\begin{equation*}
B=\mathsf B_{\mathfrak R,\lambda}^{\mathrm{cln}}(R)+\Delta_{\mathfrak R}-\mathsf B_{\mathfrak R}^{\mathrm{cln}}(R)
\end{equation*}
to obtain the first inequality.
For the second, use $\mathsf B_{\mathfrak R,\lambda}^{\mathrm{cln}}(R)\le\mathsf B_{\mathfrak R}^{\mathrm{cln}}(R)+\lambda R$ and $\Delta_{\mathfrak R}\le\bar\Delta_{\mathfrak R}$.
The latter follows from Lemma~\ref{lem:trim-bias-bound} for $G$, and Theorem~\ref{thm:softtrim-grad} together with Lemma~\ref{lem:softtrim-bias-bound} for $\tau$.
\end{proof}

\subsection{Proof of Corollary~\ref{cor:reg-policy-kl}}\label{app:reg-policy-proof}
\begin{proof}
Fix either $\mathfrak R\in\{G,\tau\}$.
As in the proof of Corollary~\ref{cor:soft-trim-K-policy-kl}, Assumption~\ref{ass:log-linear-policy} implies that the log-partition function is $1$-smooth.
Its Bregman-divergence representation of conditional KL therefore gives, for every prompt $x$,
\begin{equation*}
D_{\mathrm{KL}}\!\big(\pi_{\theta_T^{\mathrm{atk}}}(\cdot|x)\,\|\,\pi_{\theta_T^{\mathrm{cln}}}(\cdot|x)\big)
\le \frac12\|\theta_T^{\mathrm{atk}}-\theta_T^{\mathrm{cln}}\|_2^2.
\end{equation*}
The parameter-distance term is independent of $x$. Taking expectation over prompts and applying Corollary~\ref{cor:reg-robust-dpo} yields
\begin{equation*}
\mathbb E_x\!\left[
D_{\mathrm{KL}}\!\big(\pi_{\theta_T^{\mathrm{atk}}}(\cdot|x)\,\|\,\pi_{\theta_T^{\mathrm{cln}}}(\cdot|x)\big)
\right]
\le \frac12\|\theta_T^{\mathrm{atk}}-\theta_T^{\mathrm{cln}}\|_2^2
\le \frac12\left(\frac{\bar\Delta_{\mathfrak R}+\lambda R}{\mu_{\mathfrak R}+\lambda}\right)^2,
\end{equation*}
which proves~\eqref{eq:reg-policy-kl}.
\end{proof}

\section{Experiments: Additional details for the SHP simulation}\label{app:shp_simulation_details}

\paragraph{Data and feature construction.}
The SHP simulation uses $200{,}000$ decoded preference pairs from the SHP training split (seed $0$), followed by an $80/20$ train/test split that yields $160{,}000$ training pairs and $40{,}000$ held-out test pairs.
For each example, we use the feature difference $v_i$ defined in~\eqref{eq:dpo-logit-affine}. 
We concatenate the prompt and response with two newline characters, tokenize with a maximum length of $256$ tokens, and extract the final-layer CLS representation from \texttt{sentence-transformers/all-MiniLM-L6-v2}. 
We $\ell_2$-normalize this vector to obtain $\psi(x,y)\in\mathbb{R}^{384}$, so $\|v_i\|_2\le 2$. 
This instantiates the log-linear DPO parameterization.
SHP is a public Reddit-derived preference dataset. We cite its associated paper and follow the dataset-card terms.
The embedding model \texttt{sentence-transformers/all-MiniLM-L6-v2} is released under the Apache-2.0 license.

\paragraph{Reference policy.}
The reference parameter $\theta_{\mathrm{ref}}\in\mathbb{R}^{384}$ is trained \emph{independently of the DPO runs} on the same training split using a pairwise logistic ranking loss
\begin{equation*}
    \mathcal L_{\mathrm{ref}}(\theta)
    \;=\;\mathbb E_i\!\left[\log\!\big(1+\exp(-\langle\theta,v_i\rangle)\big)\right]
    \;+\;\tfrac{\lambda_{\mathrm{ref}}}{2}\,\|\theta\|_2^2,
\end{equation*}
with $\lambda_{\mathrm{ref}}=10^{-3}$, $5{,}000$ mini-batch SGD steps, learning rate $0.25$, and batch size $100{,}000$.
The resulting parameter has $\|\theta_{\mathrm{ref}}\|_2\approx 2.26$.
The reference parameter defines the reference-policy log-ratio in the DPO loss, provides the common initialization, and anchors the quadratic regularizer. It is not treated as a near-optimal policy.
All DPO and robust-DPO runs are initialized at $\theta_0=\theta_{\mathrm{ref}}$.

\paragraph{Optimization and attack configuration.}
All models are trained by SGD with learning rate $0.25$, batch size $25{,}600$, and $100{,}000$ optimization steps, with full-batch evaluation every $100$ steps.
We set $\beta=1$ and attack fraction $0.3$, giving a nominal attacked-scenario budget of $K=\lfloor 0.3\cdot 160{,}000\rfloor = 48{,}000$ flipped training pairs.
Four attack settings are considered:
\begin{itemize}
    \item \textbf{No attack}: the training data is clean.
    \item \textbf{Random corruption}: $K$ training pairs are randomly selected and flipped.
    \item \textbf{Top-impact attack}: $K$ training pairs with the largest impact scores $s_i=\beta\|v_i\|_2$ are flipped.
    \item \textbf{Targeted BMP-A attack}~\citep{yang2026efficient}: $K$ training pairs are selected by BMP-A, a gradient-space label-flip attack that drives DPO toward the target policy $\theta^\dagger = 2\theta_{\mathrm{ref}}-\theta^\star$.
    The detailed procedure and construction of this adversarial attack are provided in Appendix~\ref{app:bmp}.
\end{itemize}
For the attacked scenarios, \texttt{TRIM} and \texttt{TRIM\_REG} remove the top-$K$ examples under the observed impact ranking ($K=48{,}000$, leaving $112{,}000$ training pairs). The REG variants use $\lambda=10^{-3}$.
For the no-attack control, we set $K=0$ while keeping $\lambda=10^{-3}$ for the REG controls. The unregularized proposed defenses then reduce to standard DPO, and the REG variants reduce to full-data regularized DPO.
For \texttt{SOFT\_TRIM} and \texttt{SOFT\_TRIM\_REG}, the full training set is retained with exponential weights at $\tau=20{,}000$. When $K=0$, all weights are one.
We also run a clean-data wrong-$K$ stress test in which the realized attack budget is zero but the defender uses the nominal attacked-scenario budget $K=48{,}000$. 
This measures the clean-data cost of overestimating $K$.

\paragraph{Compute resources.}
The reported runs used Apple Silicon M4 Max with MPS for sequential preprocessing, clean-reference training, and targeted-attack construction. The remaining algorithm cases were dispatched to CPU worker processes.
The logged run used 64GB unified memory, 12 performance cores, 4 efficiency cores, 14 parallel workers, and finished the full no-attack, random, top-impact, and targeted-BMP-A sweep in about two hours.

\paragraph{Compared models and legend names.}
We use the shorthand defined in Section~\ref{sec:experiments}: \texttt{DPO}, \texttt{DPO\_REG}, \texttt{TRIM}, \texttt{TRIM\_REG}, \texttt{SOFT\_TRIM}, and \texttt{SOFT\_TRIM\_REG}. The additional baselines are \texttt{DR\_DPO}, \texttt{NR\_DPO}, \texttt{WDPO}, and \texttt{KLDPO} (Appendix~\ref{app:baselines}).
In the conservative wrong-$K$ scenario, ``DPO (trimmed data)'' denotes DPO trained on the retained clean subset. In the ordinary no-attack scenario ($K=0$), no pairs are removed and the legend uses \texttt{DPO}.
The matched full-data \texttt{DPO\_REG} control uses the same observed labels and $\lambda=10^{-3}$ without trimming or downweighting (Appendix~\ref{app:matched-regularization}).
The clean full-set DPO reference is displayed as $\pi^\star$ in policy KL and as $\theta^\star$ in parameter $L_2$ plots.

\paragraph{Evaluation metrics.}
Our primary metric is the \emph{candidate-set policy distance}, defined as the average categorical KL between two induced finite-action policies on held-out prompts.
For each evaluation prompt $x$, we construct an indexed candidate list of size $M=100$:
\begin{equation*}
\mathcal Y(x)=(y_w,y_l,y^{(1)}_{\mathrm{neg}},\ldots,y^{(M-2)}_{\mathrm{neg}}).
\end{equation*}
The response pool contains both responses from each of the $160{,}000$ training pairs, with duplicate and empty strings removed. We sample $M-2=98$ additional responses with replacement using seed $123$, excluding strings equal to $y_w$ or $y_l$. Repeated sampled responses remain separate candidate entries. Writing $\mathcal Y_j(x)$ for entry $j$, the evaluated categorical policy is
\begin{equation*}
\pi_\theta(j\mid x;\mathcal Y(x))
=
\frac{\exp(\langle \theta,\psi(x,\mathcal Y_j(x))\rangle)}
{\sum_{k=1}^{M} \exp(\langle \theta,\psi(x,\mathcal Y_k(x))\rangle)}.
\end{equation*}
The same candidate lists are fixed across all compared methods and checkpoints.
We then report
\begin{equation*}
\mathrm{candKL}(\theta)
\;:=\;
\mathbb E_x\!\left[
\mathrm{KL}\!\left(
\pi^\star(\cdot\mid x;\mathcal Y(x))
\,\middle\|\,
\pi_{\theta}(\cdot\mid x;\mathcal Y(x))
\right)
\right]
\end{equation*}
averaged over $5{,}000$ held-out prompts.
We also record parameter distance $\|\theta-\theta^\star\|_2$,
but $\mathrm{candKL}$ is the primary metric as it measures the downstream policy deviation on a richer action space, 
and it reflects the effect of attack and defense more directly.

Candidate-set KL measures deviation from the clean-DPO policy on the fixed evaluation candidates. It does not directly measure absolute response quality.

\section{Experiments: BMP-A Targeted Attack: Full Construction}\label{app:bmp}

This appendix explains how we instantiate the Binary Matching Pursuit Attack (BMP-A) from~\citet{yang2026efficient} in our robust DPO experiments.
BMP-A is used only to generate the attacked training set. 
All robust defense methods then train on this same attacked dataset or on the corresponding trimmed subset. 
BMP-A selects flips to match an attacker-chosen target gradient, whereas top-impact selects by magnitude and random corruption selects uniformly. 
Their realized policy effects depend on the target, budget, and data geometry.

\paragraph{Gradient-space reduction.}
Recall the feature difference $v_i$ and affine logit $z_i(\theta)$ from~\eqref{eq:dpo-logit-affine}.
In our SHP simulation, the reference policy is parameterized by $\theta_{\mathrm{ref}}$, so $b_i=\langle\theta_{\mathrm{ref}},v_i\rangle$ and the logit specializes to
\begin{equation*}
    z_i(\theta)=\beta\langle \theta-\theta_{\mathrm{ref}},v_i\rangle .
\end{equation*}
By Lemma~\ref{lem:constant-shift}, flipping pair $i$ changes the clean per-sample gradient $g_i(\theta)$ by the fixed vector $\beta v_i$.
Thus a support $A$ changes the unnormalized gradient sum by $\sum_{i\in A}\beta v_i$, or the gradient of the averaged objective $F$ by $N^{-1}\sum_{i\in A}\beta v_i$.
This is the key observation used in~\citet{yang2026efficient}: targeted label flipping can be formulated as a sparse binary approximation problem in gradient space.

\paragraph{Target policy.}
We choose the adversarial target as the reflection of the clean DPO solution across the reference parameter,
\begin{equation*}
    \theta^\dagger = 2\theta_{\mathrm{ref}}-\theta^\star.
\end{equation*}
This target asks the attacked optimizer to move away from $\theta_{\mathrm{ref}}$ by roughly the same distance as the clean optimizer, but in the opposite direction.
Define the unnormalized clean-gradient sum and its cancelling target vector by
\begin{equation*}
    g^\dagger := \sum_{i=1}^N g_i(\theta^\dagger),
    \qquad
    y := -g^\dagger.
\end{equation*}
If the attacker can find a flip set $A$ such that $\sum_{i\in A}\beta v_i\approx y$, then the attacked DPO gradient at $\theta^\dagger$ is close to zero.
Thus $\theta^\dagger$ becomes an approximate stationary point of the attacked objective.

\paragraph{Sparse binary attack problem.}
Let
\begin{equation*}
    V_{\mathrm{atk}} := [\beta v_1,\ldots,\beta v_N]\in\mathbb R^{d\times N}.
\end{equation*}
The attacker seeks a binary selector $x\in\{0,1\}^N$ with $\|x\|_0\le K$ such that
\begin{equation}\label{eq:bmp-attack-objective}
    V_{\mathrm{atk}}x \approx y.
\end{equation}
Here $x_i=1$ means that pair $i$ is flipped.
This is the same gradient-space attack formulation as in~\citet{yang2026efficient}, specialized to the target $\theta^\dagger$ used in our experiments.

\paragraph{BMP-A selection rule.}
BMP-A is a greedy solver for~\eqref{eq:bmp-attack-objective}.
Starting from residual $r=y$ and empty support $A$, it repeatedly selects an unflipped pair whose normalized column direction has the largest correlation with the current residual:
\begin{equation*}
    i_t\in\arg\max_{i\in[N]\setminus A}
    \frac{|\langle V_{\mathrm{atk},i},r\rangle|}{\max\{\|V_{\mathrm{atk},i}\|_2,10^{-12}\}}.
\end{equation*}
The normalization is important because the columns $\beta v_i$ are not unit-norm.
The numerical floor matches the implementation and gives zero score to zero-impact columns.
After choosing $i_t$, BMP-A sets $x_{i_t}=1$ and updates
\begin{equation*}
    r\leftarrow r-V_{\mathrm{atk},i_t},
\end{equation*}
so that the invariant $r=y-V_{\mathrm{atk}}x$ is maintained.
The procedure stops when the budget $K$ is exhausted or when the residual norm is below the tolerance.
Algorithm~\ref{alg:bmp-a} gives the exact procedure used in our experiments.

\begin{algorithm}[ht]
\caption{Binary Matching Pursuit Attack (BMP-A) used for targeted label flipping}\label{alg:bmp-a}
\begin{algorithmic}[1]
    \Require Attack matrix $V_{\mathrm{atk}}=[\beta v_1,\ldots,\beta v_N]$, target $y=-g^\dagger$, budget $K$, tolerance $\varepsilon$
    \Ensure Binary flip vector $x\in\{0,1\}^N$, flip set $A$
    \State $r\gets y$, \quad $x\gets 0$, \quad $A\gets\emptyset$
    \For{$t=1,2,\ldots,K$}
        \State $i_t\in\arg\max\limits_{i\in[N]\setminus A}\dfrac{|\langle V_{\mathrm{atk},i},r\rangle|}{\|V_{\mathrm{atk},i}\|_2}$
        \State $A\gets A\cup\{i_t\}$, \quad $x_{i_t}\gets 1$
        \State $r\gets r-V_{\mathrm{atk},i_t}$
        \If{$\|r\|_2\le\varepsilon$}
            \State \textbf{break}
        \EndIf
    \EndFor
\end{algorithmic}
\end{algorithm}

\paragraph{Forming the attacked dataset.}
After BMP-A returns the support $A$, we form $\tilde{\mathcal D}(A)$ using the label-flip rule in Section~\ref{sec:threat-model}. This attacked dataset is fixed before defense training and shared by all methods.

\paragraph{Target-strength line search.}
A fixed budget may not be sufficient to reach the full opposite target. 
Therefore, our implementation also considers intermediate targets
\begin{equation*}
    \theta^\dagger(\alpha)
    =\theta_{\mathrm{ref}}+\alpha(\theta^\dagger-\theta_{\mathrm{ref}}),
    \qquad \alpha\in(0,1].
\end{equation*}
For each $\alpha$, we compute $g^\dagger(\alpha)=\sum_i g_i(\theta^\dagger(\alpha))$, set $y(\alpha)=-g^\dagger(\alpha)$, and run BMP-A with budget $K$.
We scan $\alpha$ from large to small over $\{1.0,0.9,\ldots,0.1\}$ and use the first value satisfying
\begin{equation*}
    \frac{\|y(\alpha)-V_{\mathrm{atk}}x\|_2}{\|y(\alpha)\|_2}\le 0.15.
\end{equation*}
If no value meets this criterion, we use the $\alpha$ with the smallest relative residual. 
This selects the largest tested $\alpha$ meeting the residual criterion when one is available.

\paragraph{Realized BMP-A instance.}
In our SHP experiment, $N=160{,}000$, $d=384$, $K=48{,}000$, $\beta=1$, and the residual tolerance is $\varepsilon=10^{-3}$. 
The line search accepts $\alpha=1.0$, so the full target $\theta^\dagger=2\theta_{\mathrm{ref}}-\theta^\star$ is used.
The final BMP-A support has size $48{,}000$, the relative gradient-space residual is $0.094$, and
\begin{equation*}
    \|\theta^\dagger-\theta^\star\|_2\approx 40.3.
\end{equation*}
Training standard DPO on this attacked dataset yields a final parameter distance around $30.7$ from $\theta^\star$. The constructed attack therefore causes a substantial deviation from clean DPO, providing a targeted-attack setting for evaluating the defenses.

\section{Experiments: No-Attack and Random-Corruption Results}\label{app:noatk-random}

This appendix reports results for the no-attack, clean-data wrong-$K$, and random-corruption scenarios, and provides parameter-distance ($\|\theta-\theta^\star\|_2$) curves for all four attack scenarios.
The figures in this section compare standard DPO with the four proposed defenses (\texttt{TRIM}, \texttt{TRIM\_REG}, \texttt{SOFT\_TRIM}, and \texttt{SOFT\_TRIM\_REG}). Comparisons with the four additional baselines appear in Appendix~\ref{app:baselines}.

\subsection{No-attack scenario}

When the training data are clean, the realized attack budget is $K=0$.
With no samples to trim and all soft-trim weights equal to one, \texttt{TRIM} and \texttt{SOFT\_TRIM} reduce exactly to standard DPO.
The \texttt{\_REG} controls keep $\lambda=10^{-3}$, so \texttt{TRIM\_REG} and \texttt{SOFT\_TRIM\_REG} reduce to full-data regularized DPO and retain a clean-data offset from the unregularized reference $\pi^\star$.
Figure~\ref{fig:app-noattack-candkl} confirms that both unregularized defenses retain the clean-DPO solution when $K=0$, with final $\mathrm{candKL}=0$ up to numerical tolerance. The REG controls isolate the deviation introduced by the quadratic regularizer.

\begin{figure}[ht]
\centering
\includegraphics[width=0.49\textwidth]{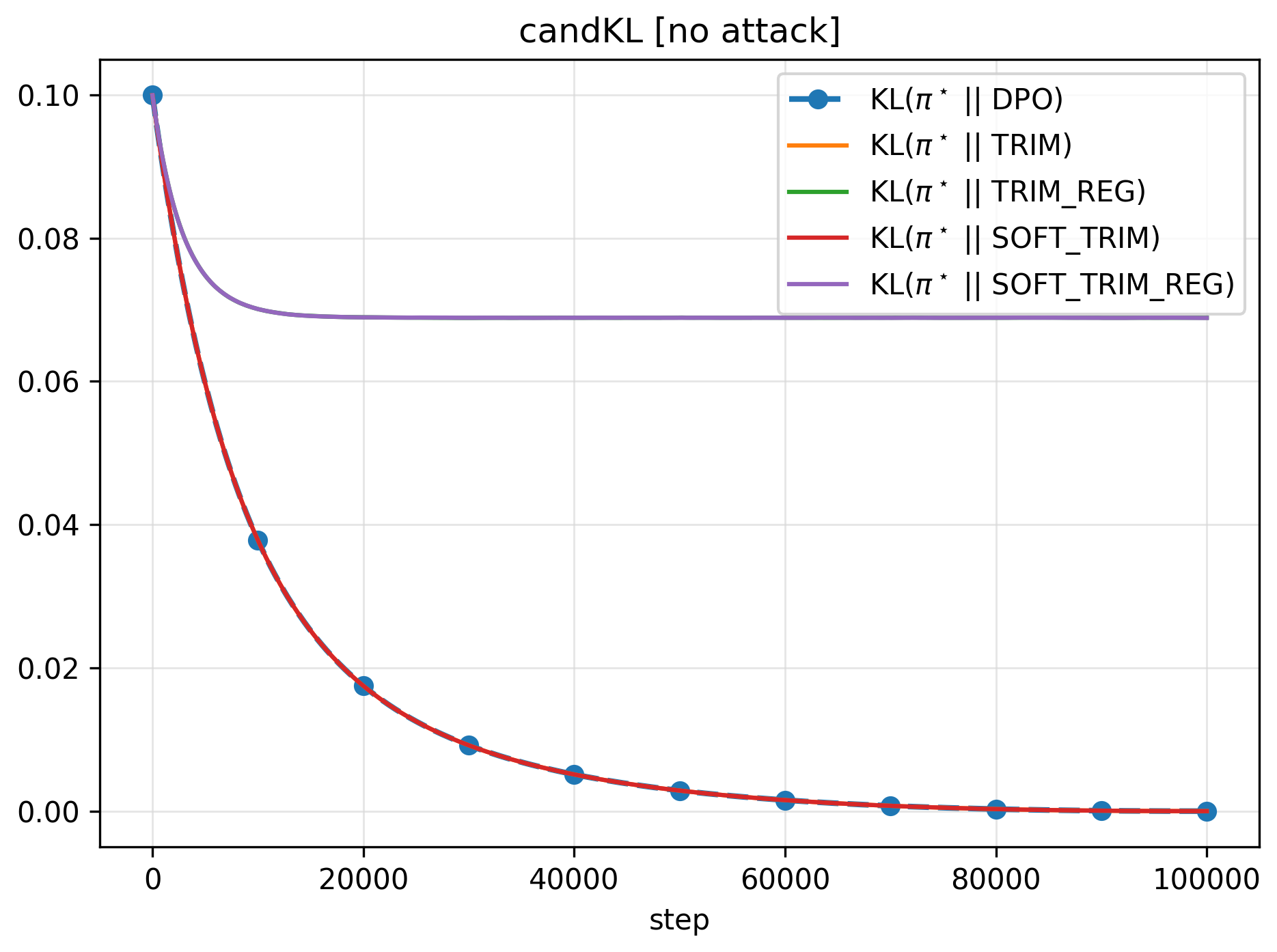}
\includegraphics[width=0.49\textwidth]{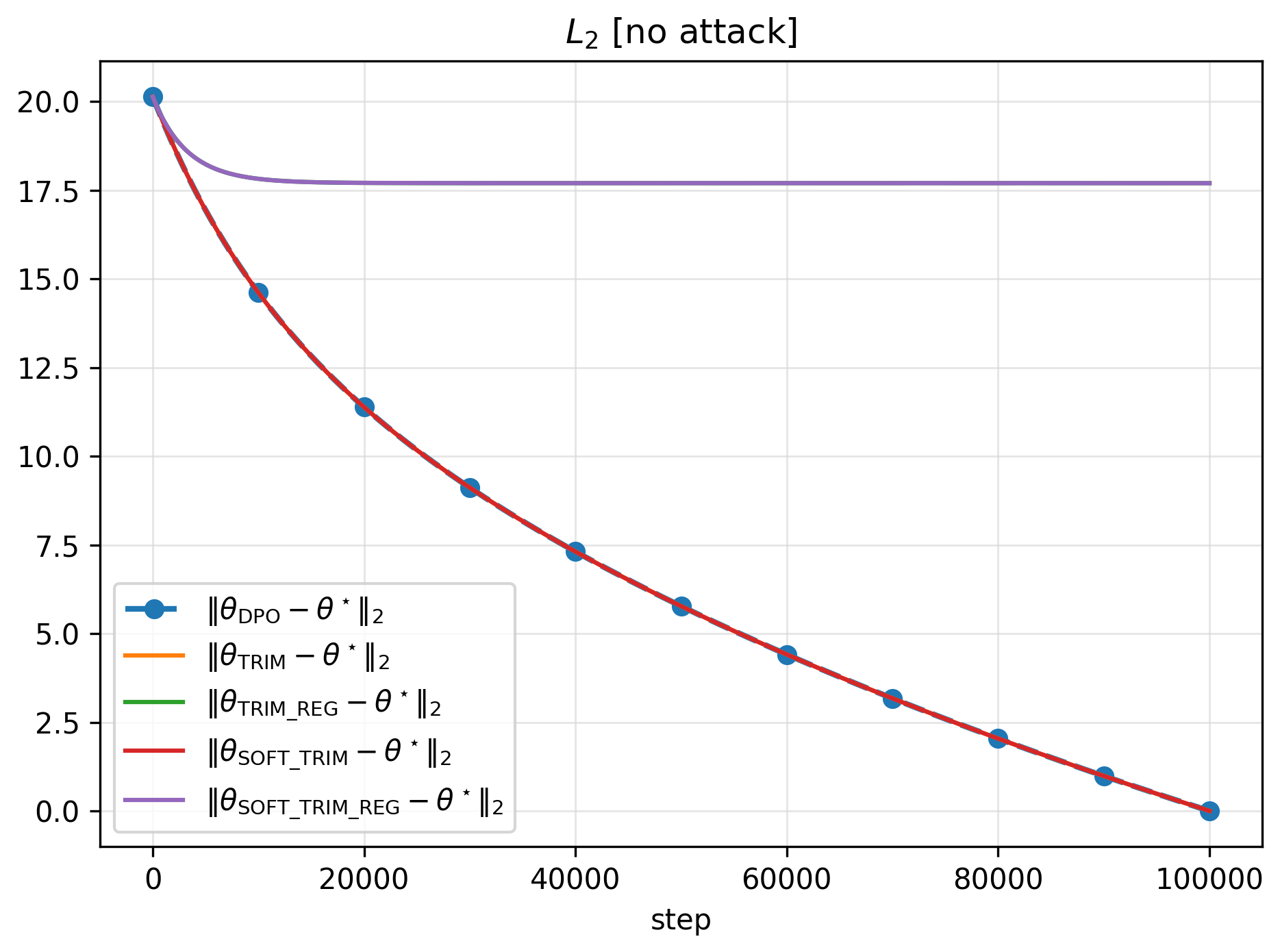}
\caption{No-attack scenario. $\mathrm{candKL}$, plotted as $\mathrm{KL}(\pi^\star\|\cdot)$ (left), and $\|\theta-\theta^\star\|_2$ (right). The unregularized proposed defenses use $K=0$, while the REG variants keep $\lambda=10^{-3}$.}
\label{fig:app-noattack-candkl}
\end{figure}

\subsection{No-attack scenario with conservative wrong $K$}

We also train on clean data while setting the trimming budget to $K=48{,}000$. Hard trimming removes $30\%$ of the clean training set, giving final candKL $0.018$ for \texttt{TRIM}. Soft trimming retains all pairs and reaches candKL $0.007$.

Figure~\ref{fig:app-noattack-wrongk-candkl} illustrates the clean-data cost of overestimating $K$. \texttt{SOFT\_TRIM} incurs a smaller deviation than \texttt{TRIM} in this setting, consistent with its less aggressive downweighting rule.

\begin{figure}[ht]
\centering
\includegraphics[width=0.49\textwidth]{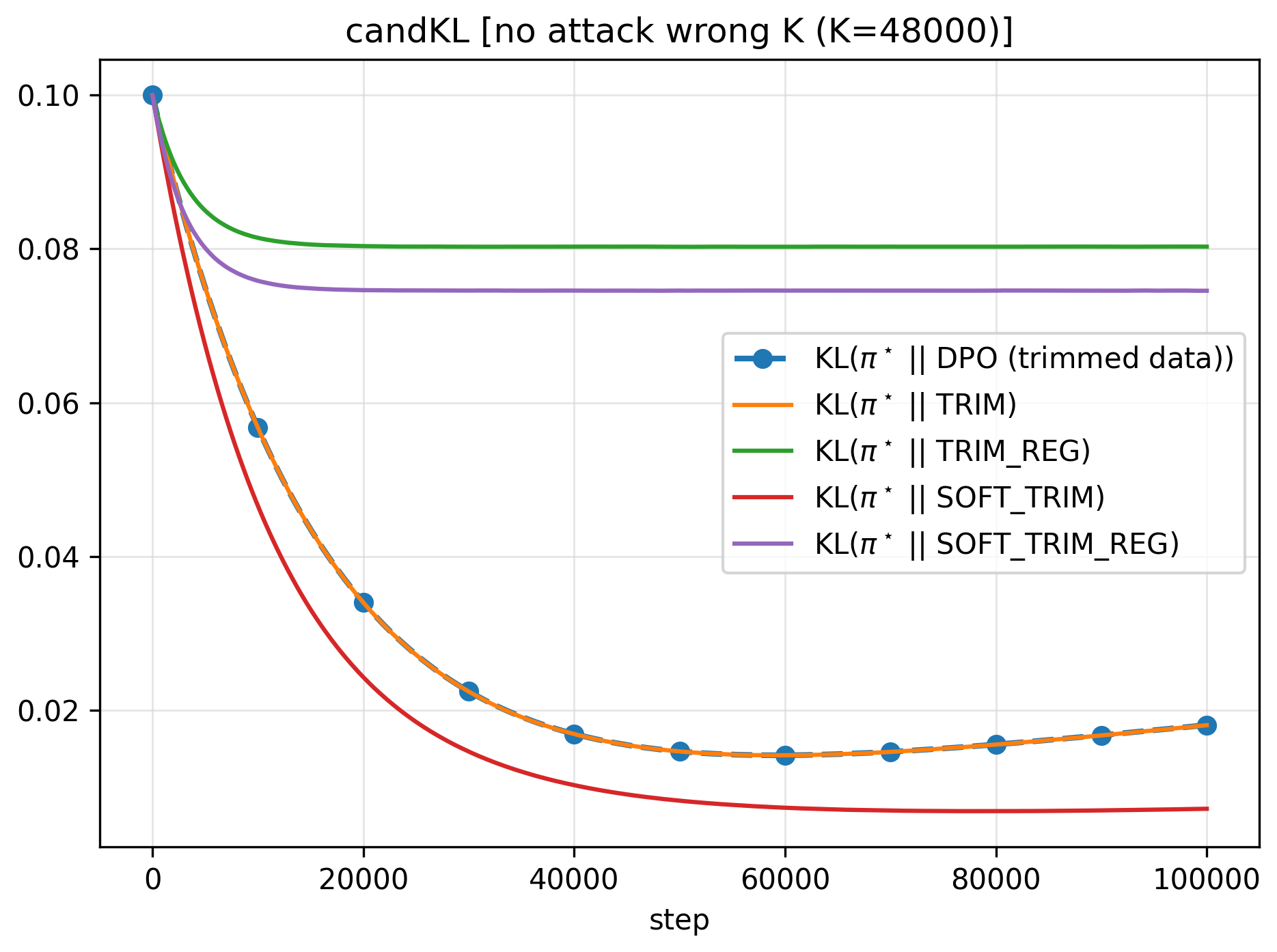}
\includegraphics[width=0.49\textwidth]{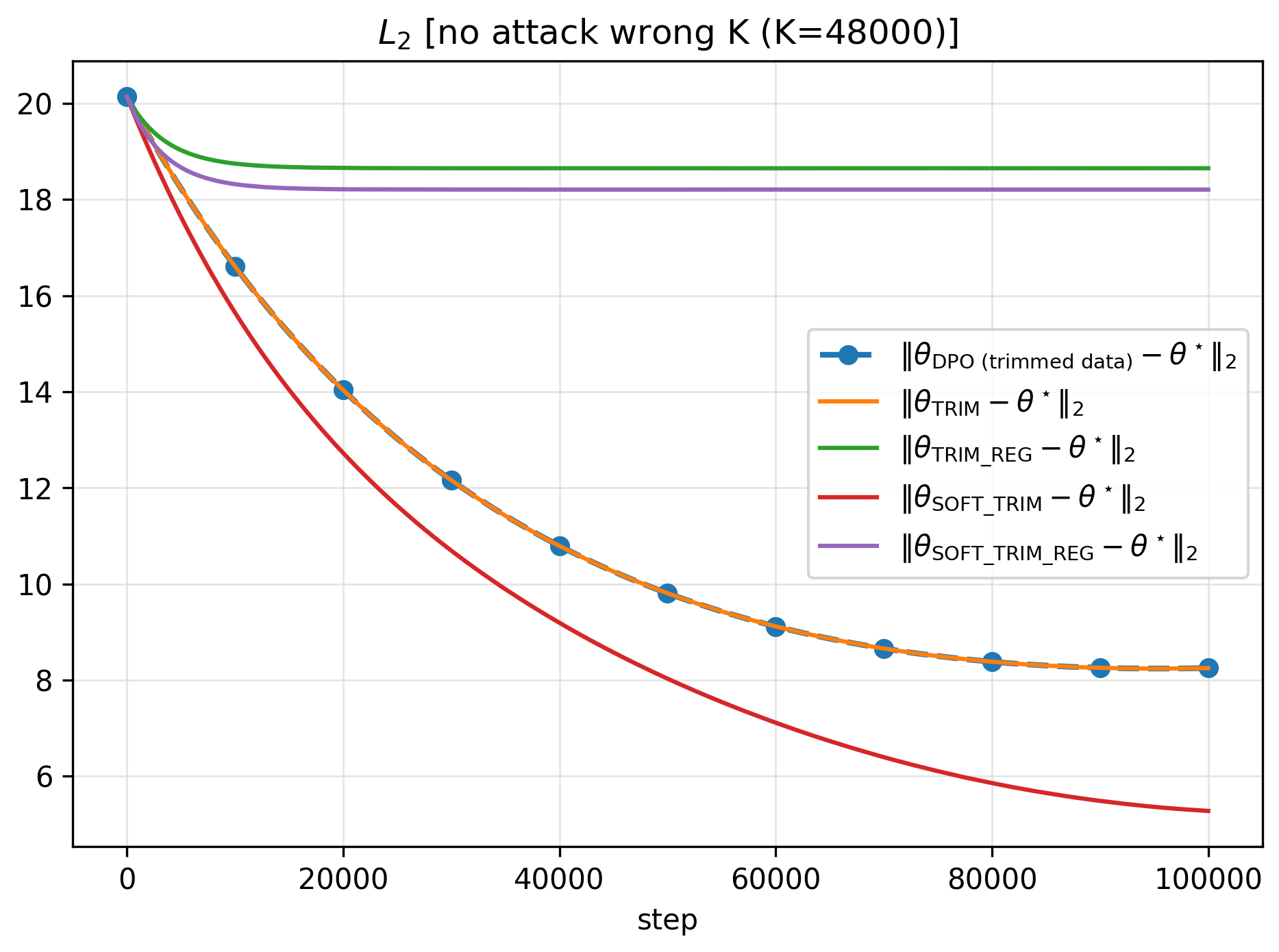}
\caption{No-attack scenario with conservative wrong $K$. $\mathrm{candKL}$, plotted as $\mathrm{KL}(\pi^\star\|\cdot)$ (left), and $\|\theta-\theta^\star\|_2$ (right). The true attack budget is zero, but the defender uses $K=48{,}000$, so hard trimming discards clean high-impact pairs and soft trimming keeps a smaller clean-data bias.}
\label{fig:app-noattack-wrongk-candkl}
\end{figure}

\subsection{Random-corruption scenario}

Under $30\%$ random label flips, only $\approx30\%$ of the top-$K$ impact samples are actually corrupted, so \texttt{TRIM}'s precision and recall for identifying attacked samples are both $\approx0.30$.
Across ten random seeds, standard \texttt{DPO} reaches $\mathrm{candKL}=0.041\pm0.001$, compared with $0.039\pm0.003$ for \texttt{TRIM} and $0.037\pm0.002$ for \texttt{SOFT\_TRIM}. The regularized variants have higher final $\mathrm{candKL}$, with $0.0918\pm0.0004$ for \texttt{TRIM\_REG} and $0.0892\pm0.0003$ for \texttt{SOFT\_TRIM\_REG}.
The $\pm$ values are empirical standard deviations across random attack seeds.
Figure~\ref{fig:app-random-candkl} shows modest reductions in mean candKL for the unregularized defenses, even though the impact ranking does not preferentially select randomly flipped pairs. Regularization does not improve this comparison, consistent with the clean-data offset observed in the no-attack control.

\begin{figure}[ht]
\centering
\includegraphics[width=0.49\textwidth]{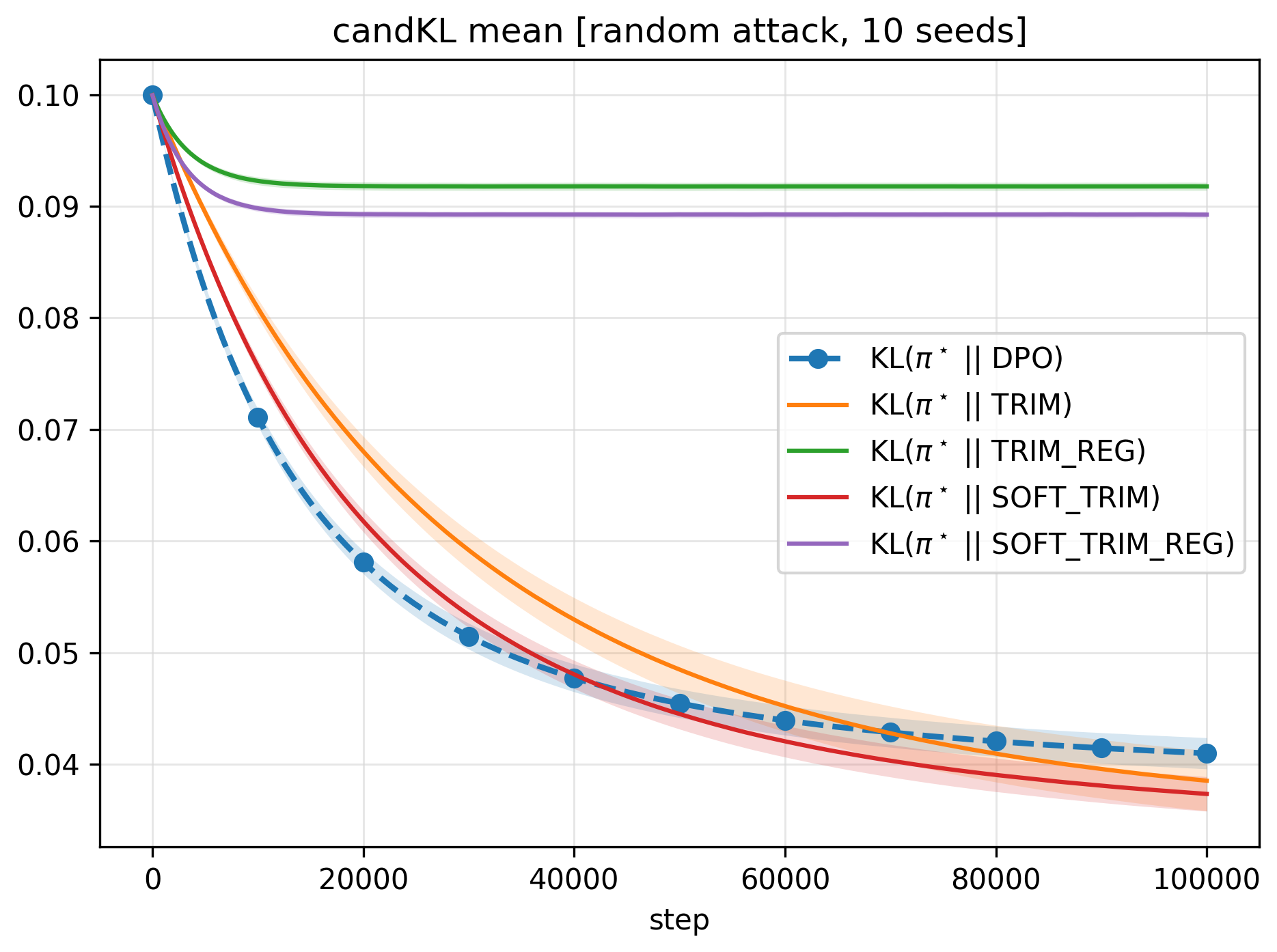}
\includegraphics[width=0.49\textwidth]{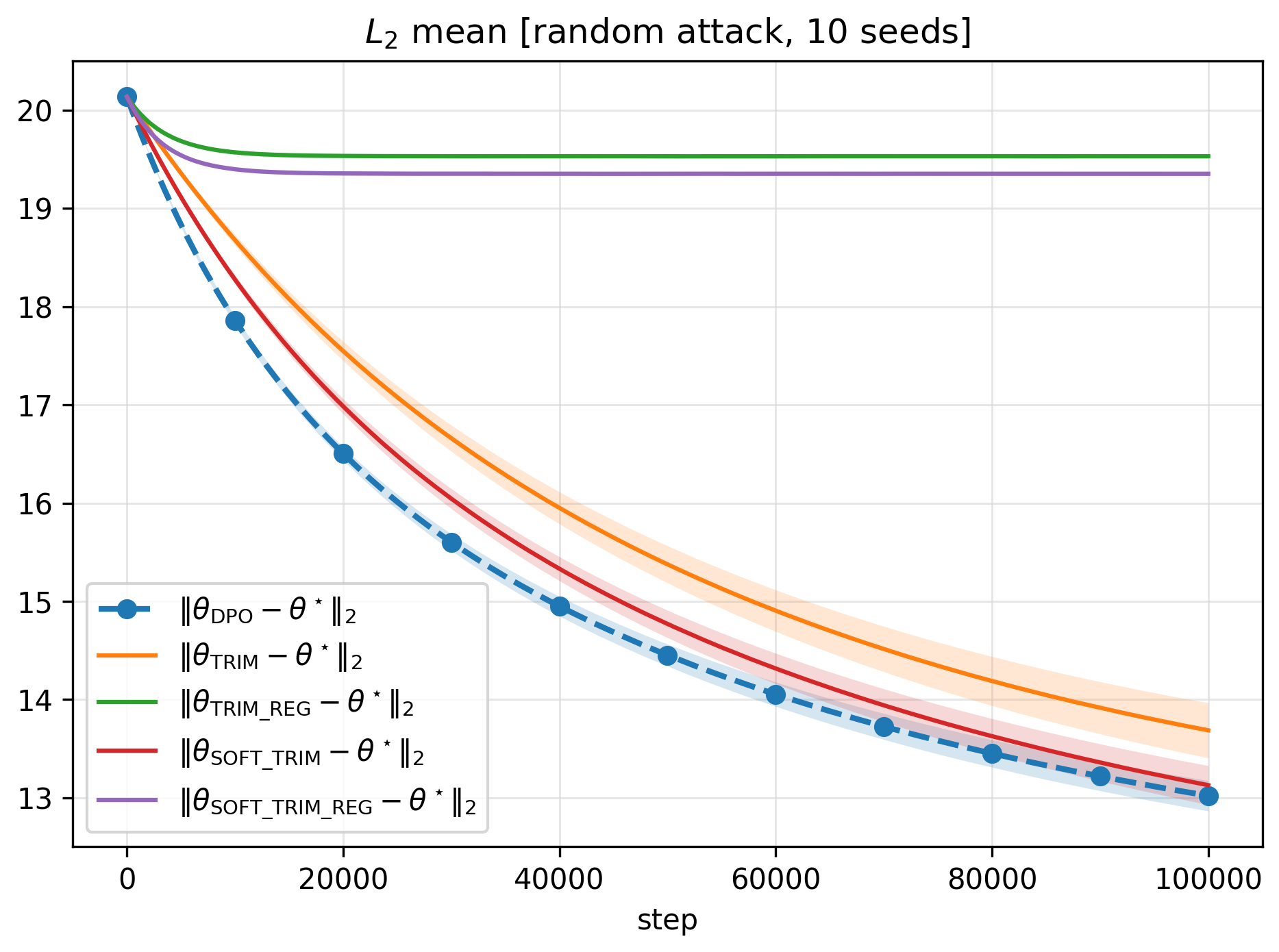}
\caption{Random-corruption scenario ($30\%$ of pairs flipped uniformly at random), averaged over ten seeds. Left: candKL from clean DPO. Right: parameter distance from $\theta^\star$.}
\label{fig:app-random-candkl}
\end{figure}

\subsection{Parameter-distance curves for top-impact and targeted attacks}

For completeness, Figures~\ref{fig:app-topimpact-theta} and~\ref{fig:app-targeted-theta} show the $\|\theta-\theta^\star\|_2$ trajectories for the top-impact and targeted attacks.
Under the top-impact attack, \texttt{TRIM} drives $\theta$-distance to $8.25$ (identical to \texttt{TRIM} on clean data with the same $K$), while attacked \texttt{DPO} reaches $25.2$.
Under the targeted BMP-A attack, \texttt{TRIM\_REG} ($\|\theta-\theta^\star\|_2=21.2$) and \texttt{SOFT\_TRIM\_REG} ($21.5$) are substantially less deflected than unregularized \texttt{TRIM} ($32.3$) and \texttt{DPO} ($30.7$), consistent with the $\mathrm{candKL}$ results.

The parameter-distance comparisons support the same conclusions as candKL: hard trimming is effective against the top-impact attack, while the REG variants reduce deviation from clean DPO under BMP-A.

\begin{figure}[ht]
\centering
\includegraphics[width=0.49\textwidth]{figures/attack_rdpo_shp_figure/topimpact_attack/DPO-TRIM_DPO-TRIM_DPO_REG-SOFT_TRIM_DPO-SOFT_TRIM_DPO_REG/plot_candKL_cases.png}
\includegraphics[width=0.49\textwidth]{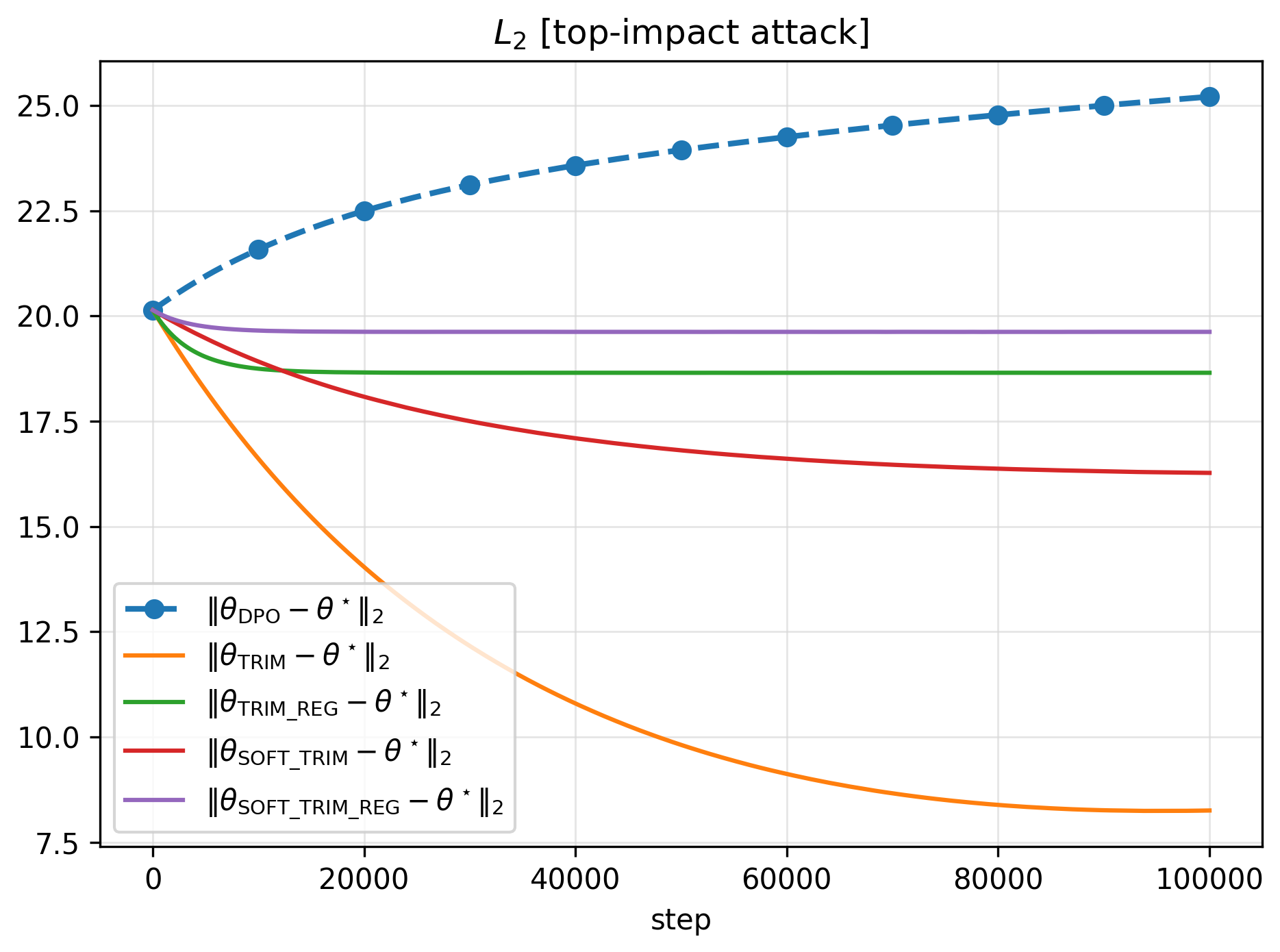}
\caption{Top-impact attack. $\mathrm{candKL}$, plotted as $\mathrm{KL}(\pi^\star\|\cdot)$ (left), and $\|\theta-\theta^\star\|_2$ (right). \texttt{TRIM} matches its clean-data counterpart with the same $K$.}
\label{fig:app-topimpact-theta}
\end{figure}

\begin{figure}[ht]
\centering
\includegraphics[width=0.49\textwidth]{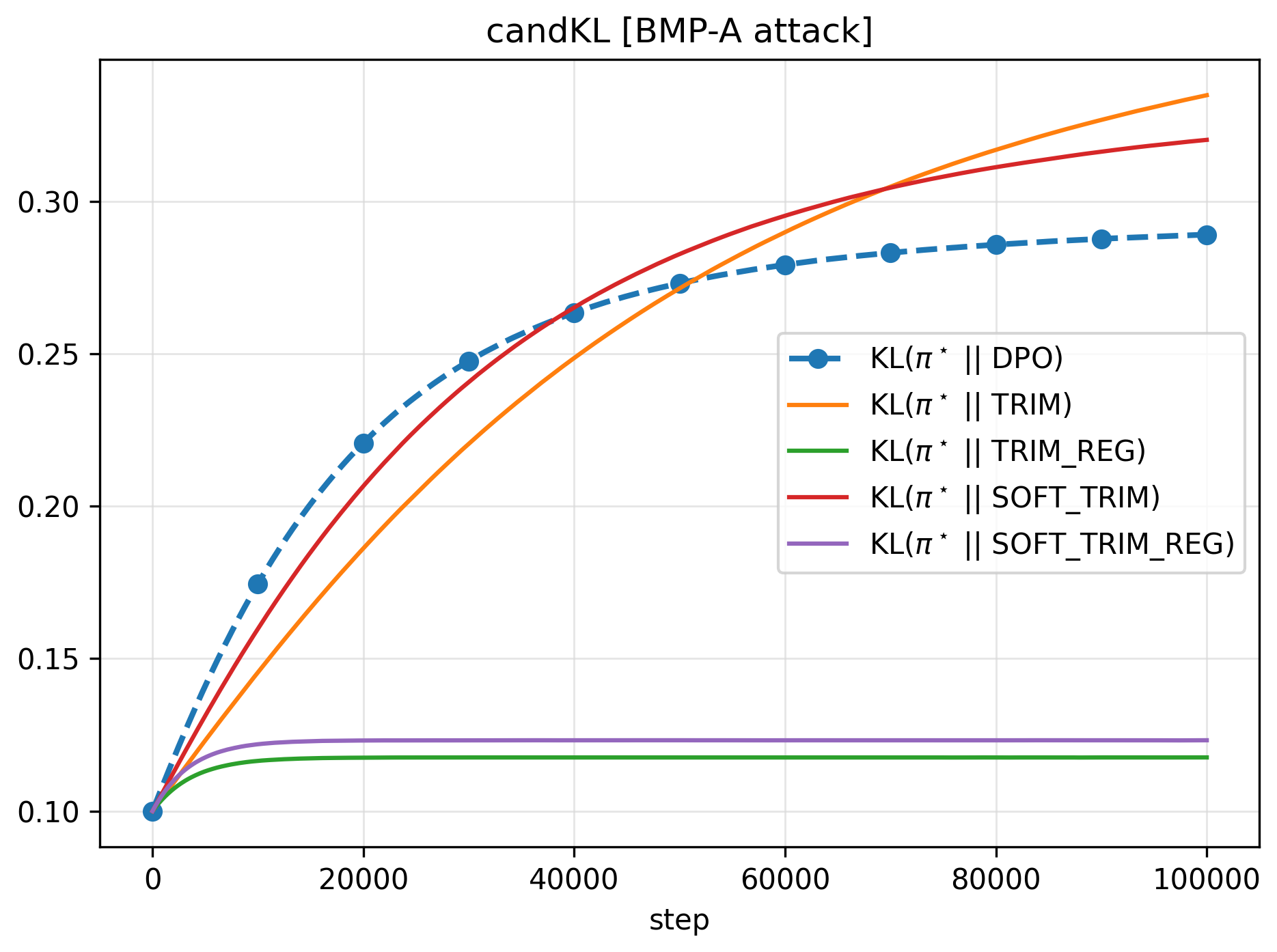}
\includegraphics[width=0.49\textwidth]{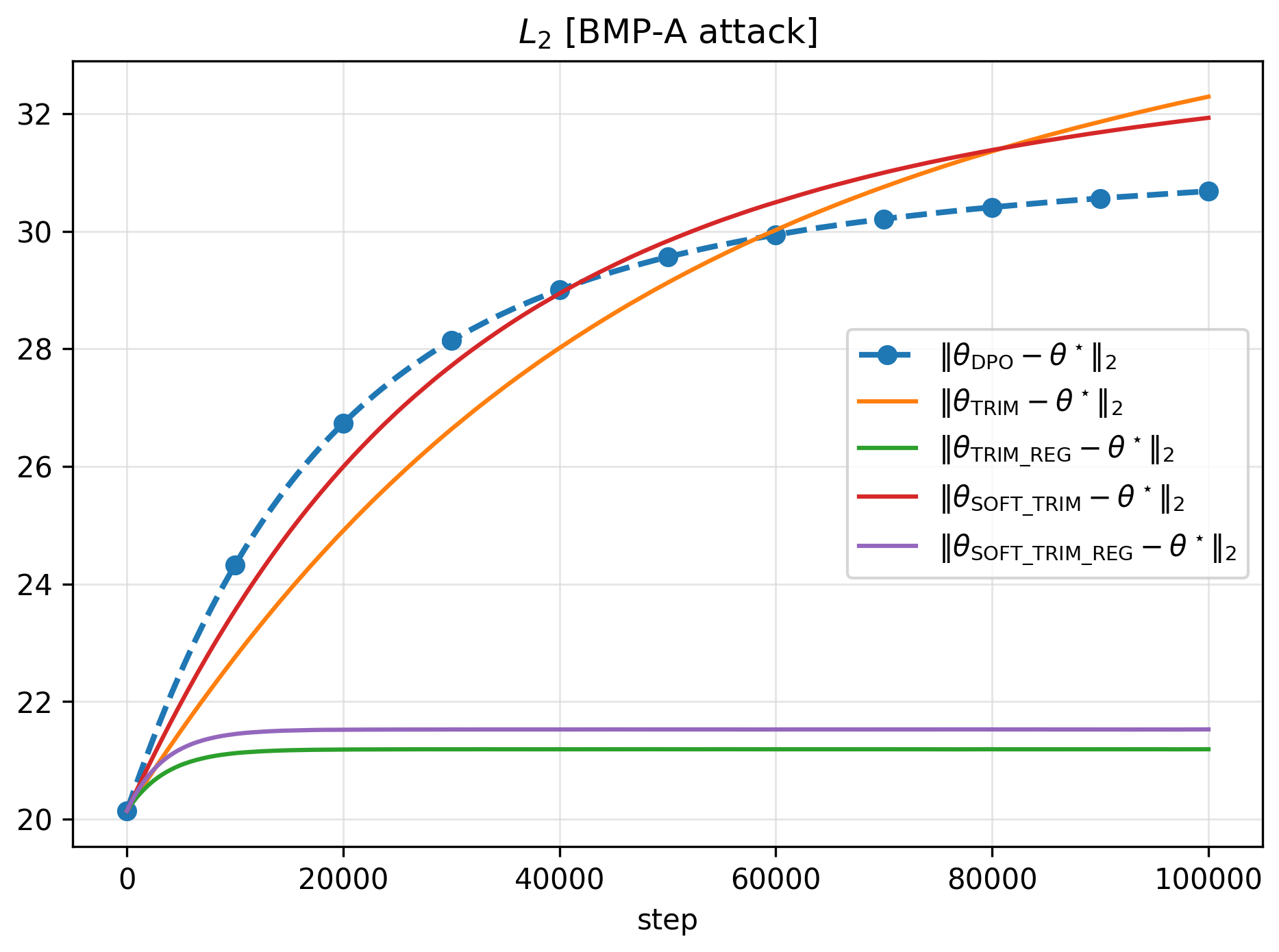}
\caption{Targeted BMP-A attack. $\mathrm{candKL}$, plotted as $\mathrm{KL}(\pi^\star\|\cdot)$ (left), and $\|\theta-\theta^\star\|_2$ (right). Both REG variants reduce deviation from clean DPO.}
\label{fig:app-targeted-theta}
\end{figure}

\section{Experiments: Comparison with \texttt{DR\_DPO}, \texttt{NR\_DPO}, \texttt{WDPO}, and \texttt{KLDPO} Baselines}\label{app:baselines}

We compare our proposed defenses against four existing robust DPO baselines designed for random noise or distributional shift.

\subsection{Baseline methods}\label{app:baselines-methods}

\paragraph{\texttt{DR\_DPO}~\citep{wu2024towards}.}
\texttt{DR\_DPO} applies an entropic aggregation to the DPO log-likelihood over preference pairs. In our positive-loss notation, the implemented objective is
\begin{equation*}
\mathcal{L}_{\mathrm{DR}}(\pi_\theta) \;=\; -\beta'\log\mathbb{E}\!\left[\exp\!\left(-\tilde\ell_i(\theta)/\beta'\right)\right],
\end{equation*}
where $\beta'>0$ is a tunable temperature. We use $\beta'=1.0$.
The original \texttt{DR\_DPO} objective applies the entropic transform to $h_i=-\tilde\ell_i$, the DPO log-likelihood, following Appendix D of~\citet{wu2024towards}.
In positive-loss notation, this gives a soft-minimum rather than the KL-DRO worst-case risk $\beta'\log\mathbb{E}[\exp(\tilde\ell_i/\beta')]$.
The induced per-sample gradient weight is $\exp(-\tilde\ell_i/\beta')/Z$, which downweights pairs with high observed loss.
This weighting can reduce the influence of high-loss noisy pairs, but high loss can also reflect a clean, difficult comparison. 
Under targeted corruption, fitting attacked labels can make opposing clean comparisons harder, so this weighting need not suppress the poisoned signal.

\paragraph{\texttt{NR\_DPO}~\citep{chowdhury2024provably}.}
\texttt{NR\_DPO} (called rDPO in the original paper) applies the symmetric-flip unbiased correction:
\begin{equation*}
\ell_{\mathrm{NR}}(\theta;z) \;=\; \frac{(1-\epsilon)\,\ell(z) - \epsilon\,\ell(-z)}{1-2\epsilon},
\end{equation*}
where $z$ is the chosen-rejected logit and $\epsilon\in[0,1/2)$ is the assumed flip rate.
We set $\epsilon$ equal to the true attack fraction $=0.3$ in attack scenarios, and to $0$ in the no-attack scenario, which recovers standard DPO.
This correction is unbiased for the clean DPO loss under i.i.d.\ symmetric label flips, and would work well when corruption matches that model.

\paragraph{\texttt{WDPO}~\citep{xu2025robust}.}
WDPO targets distributional shift in preference data, e.g.\ across demographic or temporal subpopulations, 
via a Wasserstein-DRO formulation $\min_\theta\sup_{P:\,W_p(P,P_o)\le\rho}\mathbb{E}_{\xi\sim P}[\ell(\xi;\theta)]$.
Following Algorithm~1 of~\citet{xu2025robust}, we tractably approximate this by adding an input-gradient penalty to the empirical DPO loss.
In the SHP simulator, the observed pair has $\tilde v_i=\psi(x^i,\tilde y_w^i)-\psi(x^i,\tilde y_l^i)$ and logit $\tilde z_i(\theta)=\beta\langle\theta-\theta_{\mathrm{ref}},\tilde v_i\rangle$.
The implementation adds $2\rho\cdot\mathbb{E}[\beta^2\sigma(-\tilde z_i(\theta))^2\|\theta-\theta_{\mathrm{ref}}\|_2^2]$ to the empirical loss. The factor $2$ accounts for the sum of squared gradient norms with respect to the chosen and rejected feature vectors, since their difference is $\tilde v_i$.
This is the log-linear analogue of the embedding-gradient penalty in WDPO. We set $\rho=10^{-2}$ for our simulator.

\paragraph{\texttt{KLDPO}~\citep{xu2025robust}.}
KLDPO targets the same distributional-shift setting as WDPO but with a KL-divergence uncertainty set, leading to a worst-case kernel that upweights samples with above-average loss.
Our minibatch implementation uses $w_i=\min\{10,\exp((\tilde\ell_i-\overline{\tilde\ell})/\tau_{\mathrm{kl}})\}$ with temperature $\tau_{\mathrm{kl}}=0.05$, where $\overline{\tilde\ell}$ is the current minibatch mean loss. 
These clipped weights are held fixed when differentiating the minibatch mean of $w_i\tilde\ell_i$. This is the approximate reweighting update used in our comparison with Algorithm~2 of~\citet{xu2025robust}.

\subsection{Results and discussion}\label{app:baselines-results}

We evaluate all four robust-DPO baselines together with our four proposed defenses (\texttt{TRIM}, \texttt{TRIM\_REG}, \texttt{SOFT\_TRIM}, \texttt{SOFT\_TRIM\_REG}) on the no-attack, random-corruption, top-impact, and targeted-BMP-A scenarios introduced in Section~\ref{sec:experiments}.
Per-step $\mathrm{candKL}$ and parameter-distance trajectories for each scenario are shown in Figures~\ref{fig:app-baselines-noattack}--\ref{fig:app-baselines-targeted}.
The final-step $\mathrm{candKL}$ values are summarized in Table~\ref{tab:app-baselines-summary}.
For the random-corruption column, all methods are averaged over the same ten random attack seeds. The proposed-method means and standard deviations are reported separately in Appendix~\ref{app:noatk-random}.

\paragraph{No-attack scenario (Figure~\ref{fig:app-baselines-noattack}).}
When the training data are clean, the proposed \texttt{TRIM} and \texttt{SOFT\_TRIM} defenses are evaluated with $K=0$, so the unregularized variants coincide with standard DPO and have final $\mathrm{candKL}=0$.
The REG variants keep $\lambda=10^{-3}$ and therefore report the clean-data offset of full-data regularized DPO.
\texttt{NR\_DPO} (with $\epsilon=0$, recovering standard DPO) and \texttt{DR\_DPO} also converge to near-zero $\mathrm{candKL}$.
\texttt{WDPO} ($\mathrm{candKL}=0.096$) and \texttt{KLDPO} ($\mathrm{candKL}=0.089$) remain farther from the full-data clean-DPO reference. Their penalty and reweighting rules change the training updates even when no corruption is present.

Thus, the unregularized \texttt{TRIM} and \texttt{SOFT\_TRIM} methods preserve clean-DPO training when the defender sets $K=0$. The REG controls retain the bias introduced by their quadratic penalty.

\begin{figure}[ht]
\centering
\includegraphics[width=0.49\textwidth]{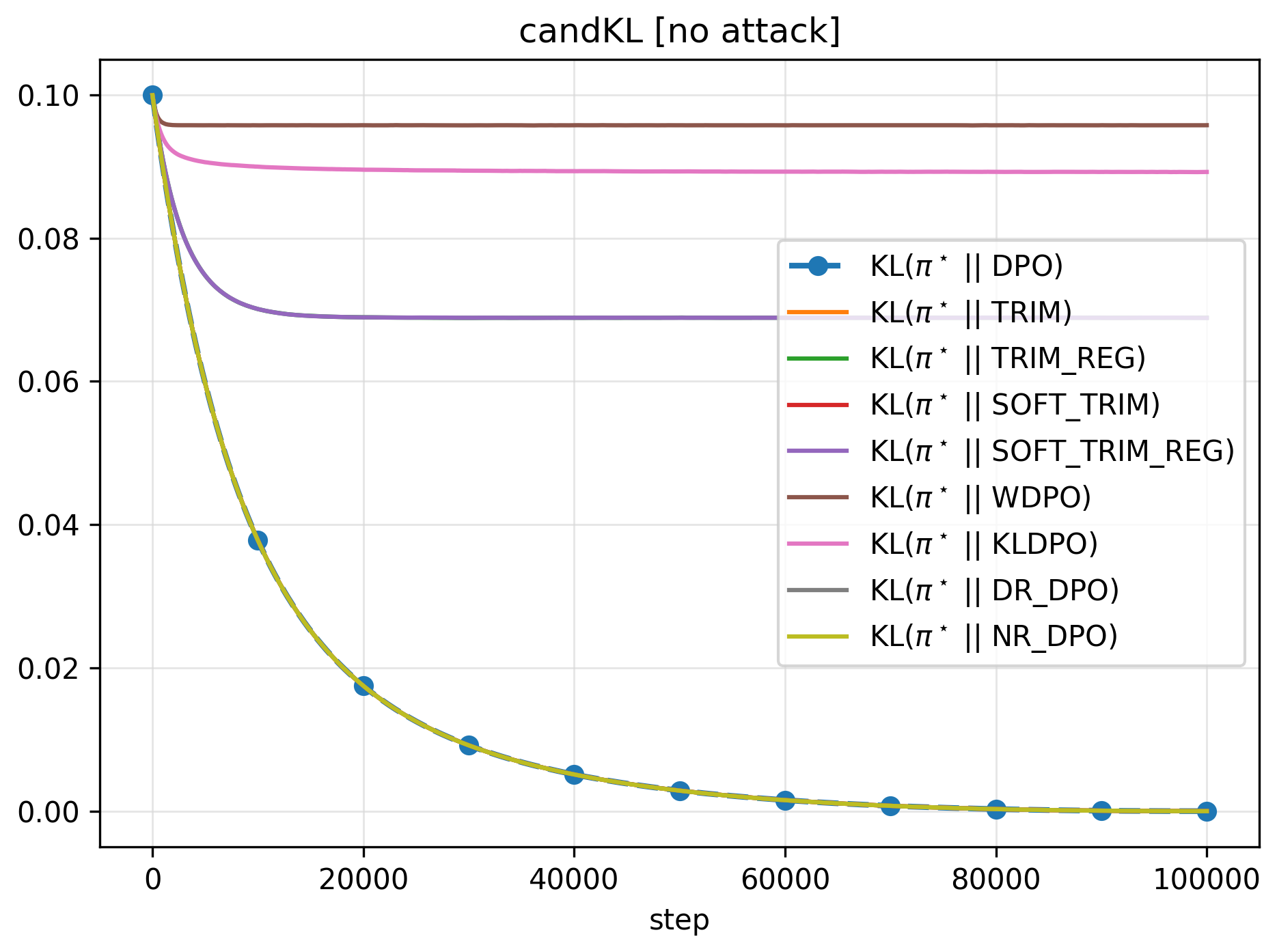}
\includegraphics[width=0.49\textwidth]{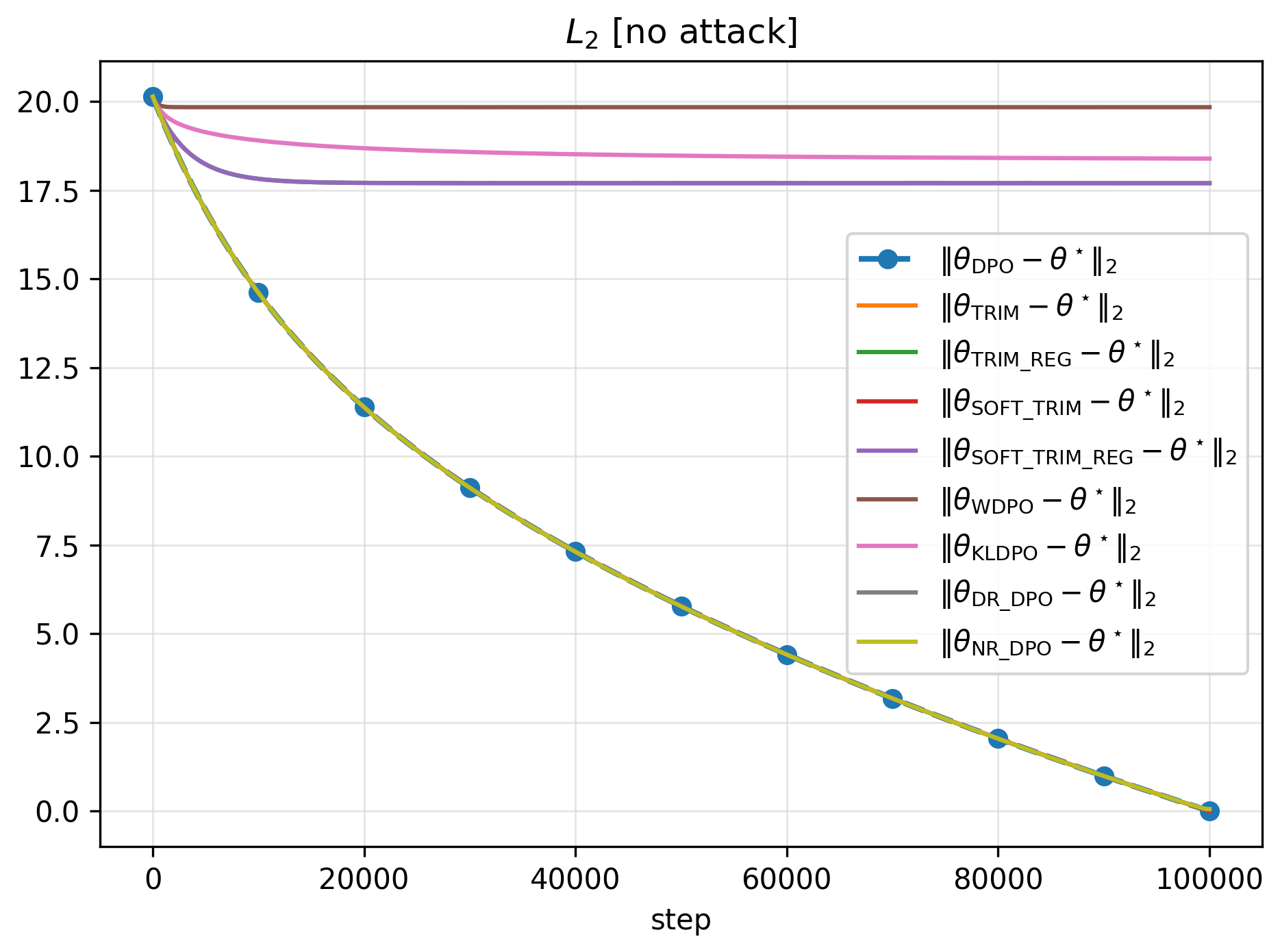}
\caption{No-attack scenario with all baselines. Left: candKL from clean DPO. Right: parameter distance from $\theta^\star$. The proposed defenses use $K=0$, and the REG variants keep $\lambda=10^{-3}$. \texttt{NR\_DPO} (with $\epsilon=0$) and \texttt{DR\_DPO} also converge to near-zero $\mathrm{candKL}$. \texttt{WDPO} and \texttt{KLDPO} converge to a moderately different solution ($\mathrm{candKL}\approx0.09$--$0.10$).}
\label{fig:app-baselines-noattack}
\end{figure}

\paragraph{Random corruption (Figure~\ref{fig:app-baselines-random}).}
Under $30\%$ uniformly random label flips, the corruption setting is closest to the noise model used by \texttt{NR\_DPO}.
\texttt{NR\_DPO} (noise rate $\epsilon=0.3$) reaches the lowest average $\mathrm{candKL}$ among the compared methods under random flips: $\mathrm{candKL}=0.010$, approximately one quarter of standard DPO's $0.041$. 
This is consistent with its random-noise correction.
\texttt{DR\_DPO} ($0.082$), \texttt{WDPO} ($0.098$), and \texttt{KLDPO} ($0.096$) have higher candKL than standard DPO.

The unregularized proposed methods give modest improvements over DPO: the ten-seed mean candKL values are $0.039$ for \texttt{TRIM} and $0.037$ for \texttt{SOFT\_TRIM}, compared with $0.041$ for DPO.

\begin{figure}[ht]
\centering
\includegraphics[width=0.49\textwidth]{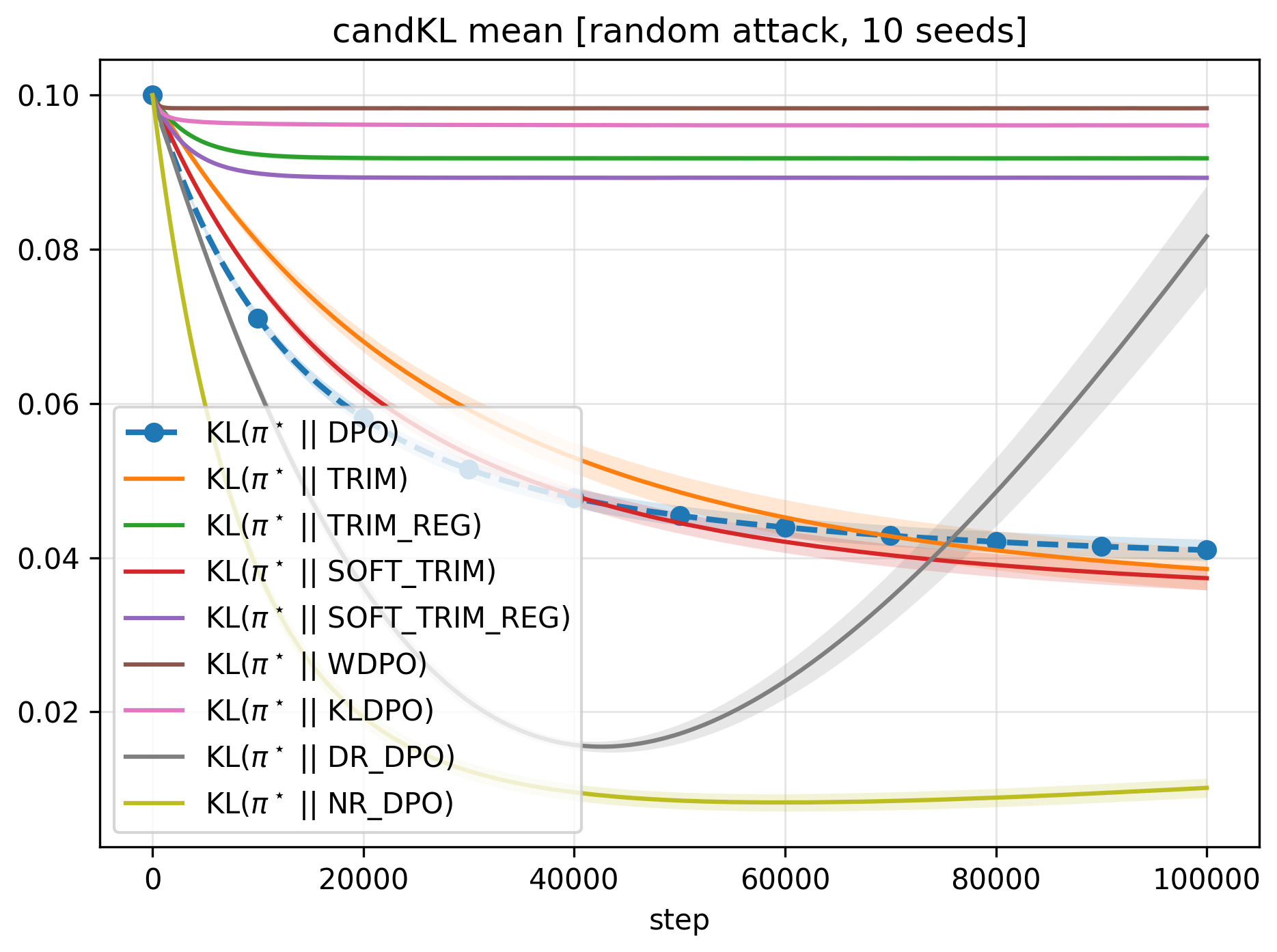}
\includegraphics[width=0.49\textwidth]{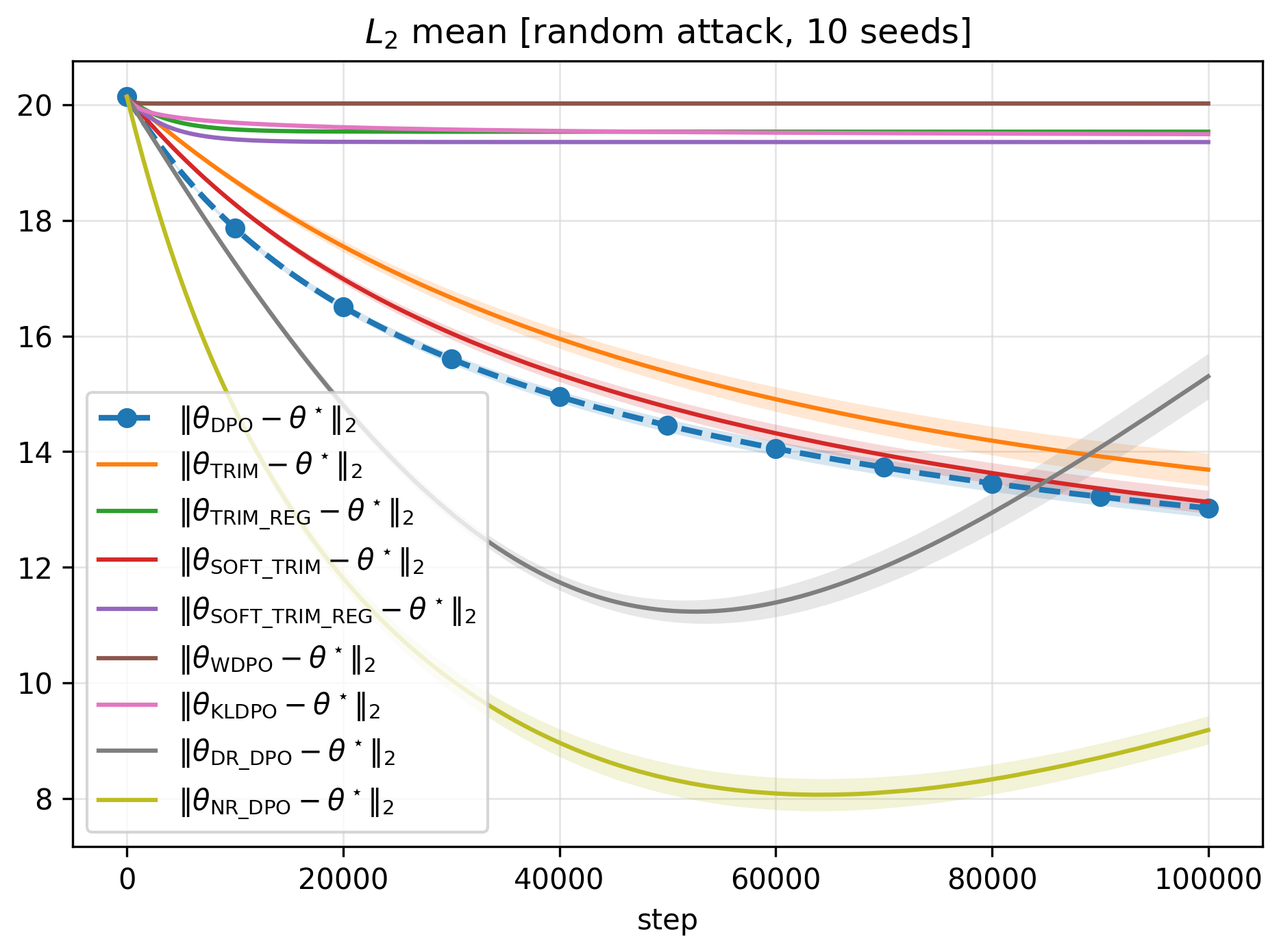}
\caption{Random-corruption scenario with all baselines, averaged over ten seeds. Left: candKL from clean DPO. Right: parameter distance from $\theta^\star$.}
\label{fig:app-baselines-random}
\end{figure}

\paragraph{Top-impact attack (Figure~\ref{fig:app-baselines-topimpact}).}
Under the top-impact attack, \texttt{DR\_DPO} ($\mathrm{candKL}=0.423$) and \texttt{NR\_DPO} ($0.304$) deviate more than attacked DPO ($0.155$).
\texttt{WDPO} and \texttt{KLDPO} have lower candKL than DPO ($0.101$ and $0.102$), but remain above \texttt{TRIM} ($0.018$). 
\texttt{SOFT\_TRIM} also has lower candKL ($0.064$) than these baselines.

Both unregularized proposed methods outperform the compared baselines in this setting. Hard trimming removes all flipped pairs and matches \texttt{TRIM} trained on clean data with the same $K$, leaving only the deviation due to discarding clean training information.

\begin{figure}[ht]
\centering
\includegraphics[width=0.49\textwidth]{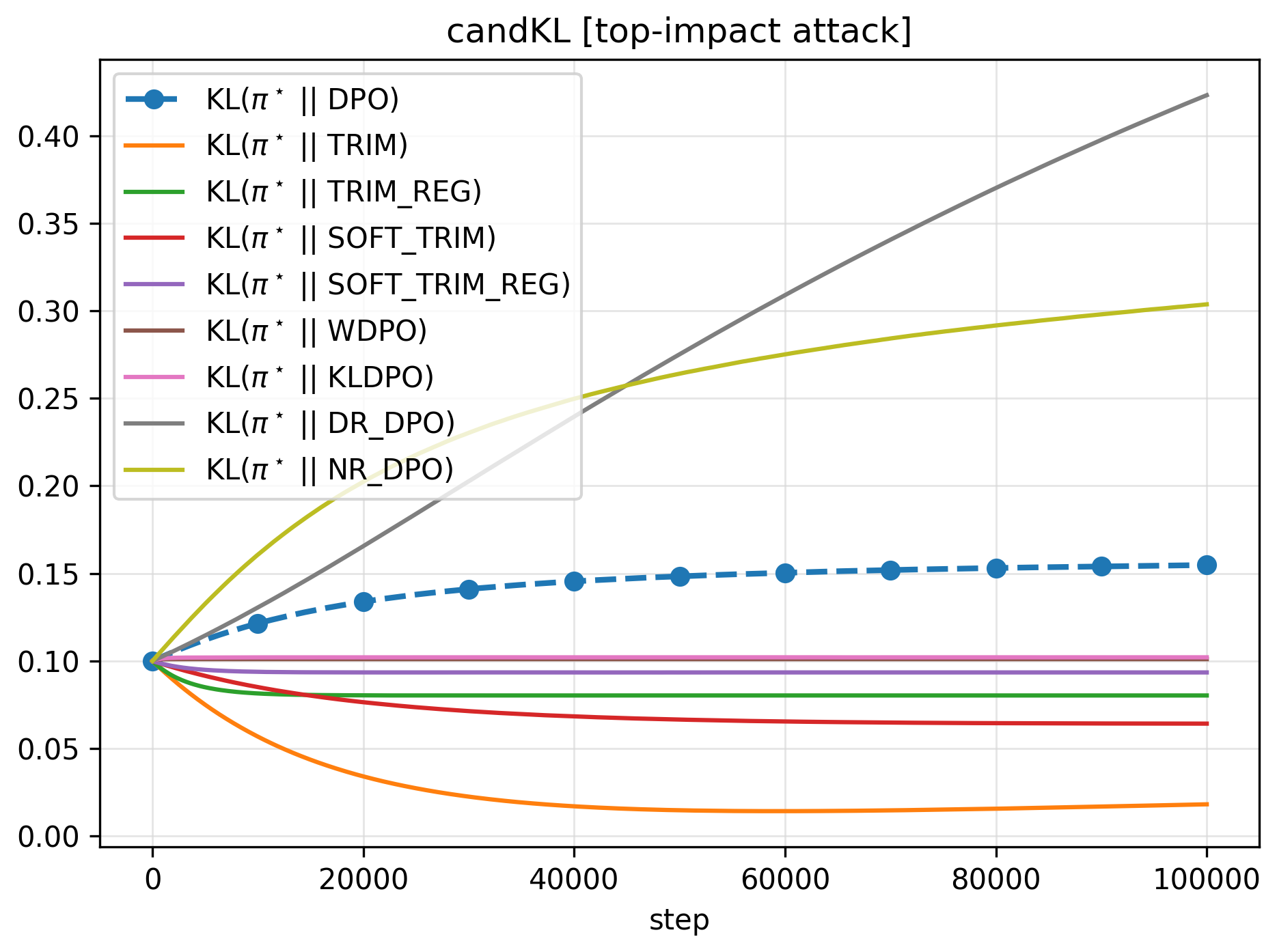}
\includegraphics[width=0.49\textwidth]{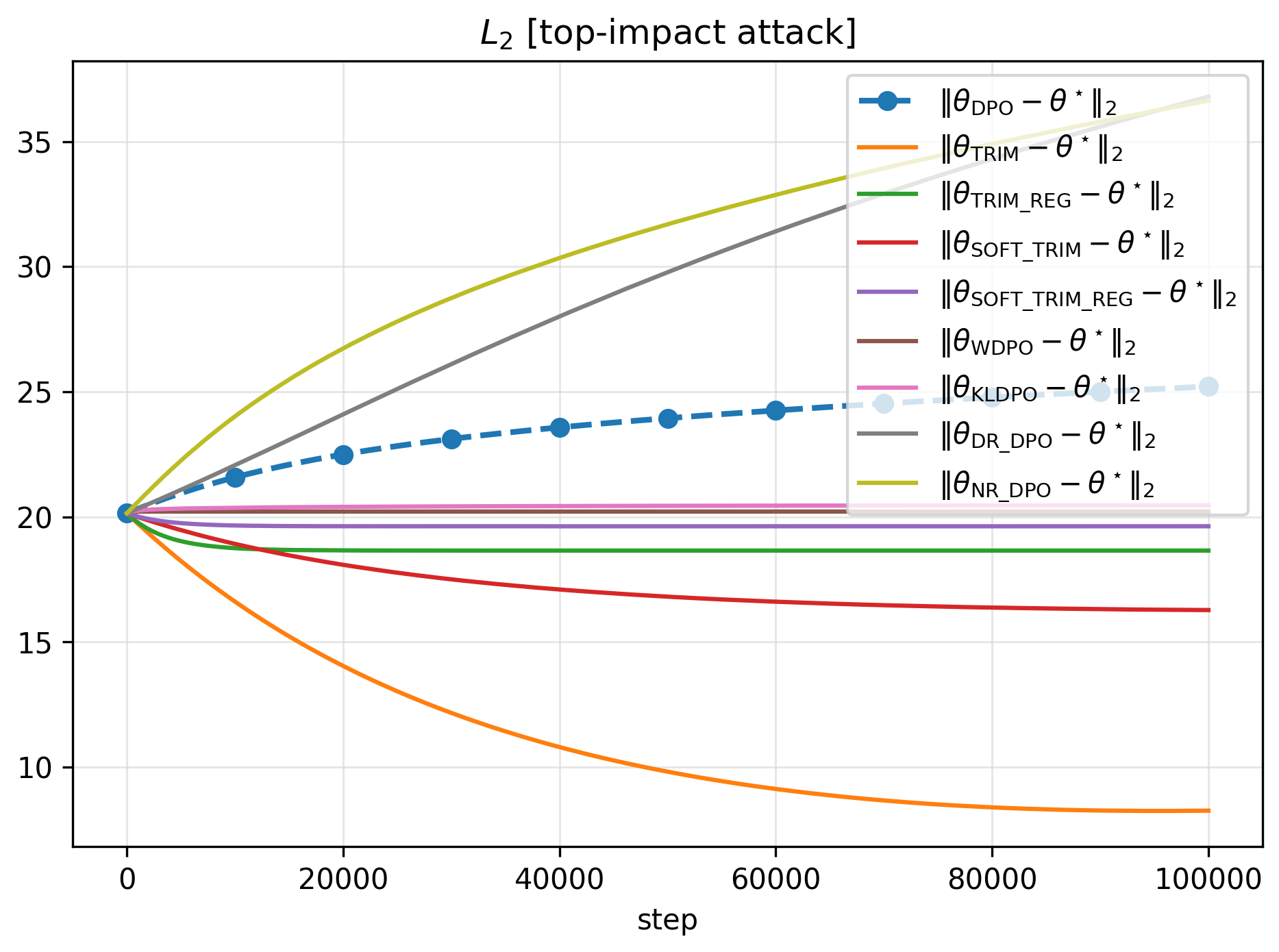}
\caption{Top-impact attack with all baselines. Left: candKL from clean DPO. Right: parameter distance from $\theta^\star$.}
\label{fig:app-baselines-topimpact}
\end{figure}

\paragraph{Targeted BMP-A attack (Figure~\ref{fig:app-baselines-targeted}).}
Under the targeted BMP-A attack, \texttt{NR\_DPO} ($\mathrm{candKL}=1.079$) and \texttt{DR\_DPO} ($1.066$) have larger deviations than standard DPO ($0.289$). Their parameter distances are $54$--$58$, compared with $30.7$ for DPO.
\texttt{WDPO} ($\mathrm{candKL}=0.103$) and \texttt{KLDPO} ($\mathrm{candKL}=0.107$) have smaller deviations than DPO in this fixed BMP-A instance.
\texttt{TRIM} ($\mathrm{candKL}=0.335$) and \texttt{SOFT\_TRIM} ($\mathrm{candKL}=0.320$) are worse than standard DPO in this instance. 
Their top-$K$ intervention region overlaps only $42\%$ of the corrupted support: hard trimming removes those pairs, while soft trimming downweights them. 
Both rules also suppress clean high-impact samples.
The regularized variants are closer to full-data clean DPO, with candKL $0.118$ for \texttt{TRIM\_REG} and $0.123$ for \texttt{SOFT\_TRIM\_REG}, both below matched full-data \texttt{DPO\_REG} ($0.130$).

Here regularization helps the proposed defenses reduce policy deviation under BMP-A, while unregularized trimming alone is insufficient. 
Both proposed REG variants improve over DPO, but WDPO and KLDPO achieve lower candKL in this instance. 
Appendix~\ref{app:matched-regularization} quantifies the additional benefit of trimming beyond the shared regularizer.

\begin{figure}[ht]
\centering
\includegraphics[width=0.49\textwidth]{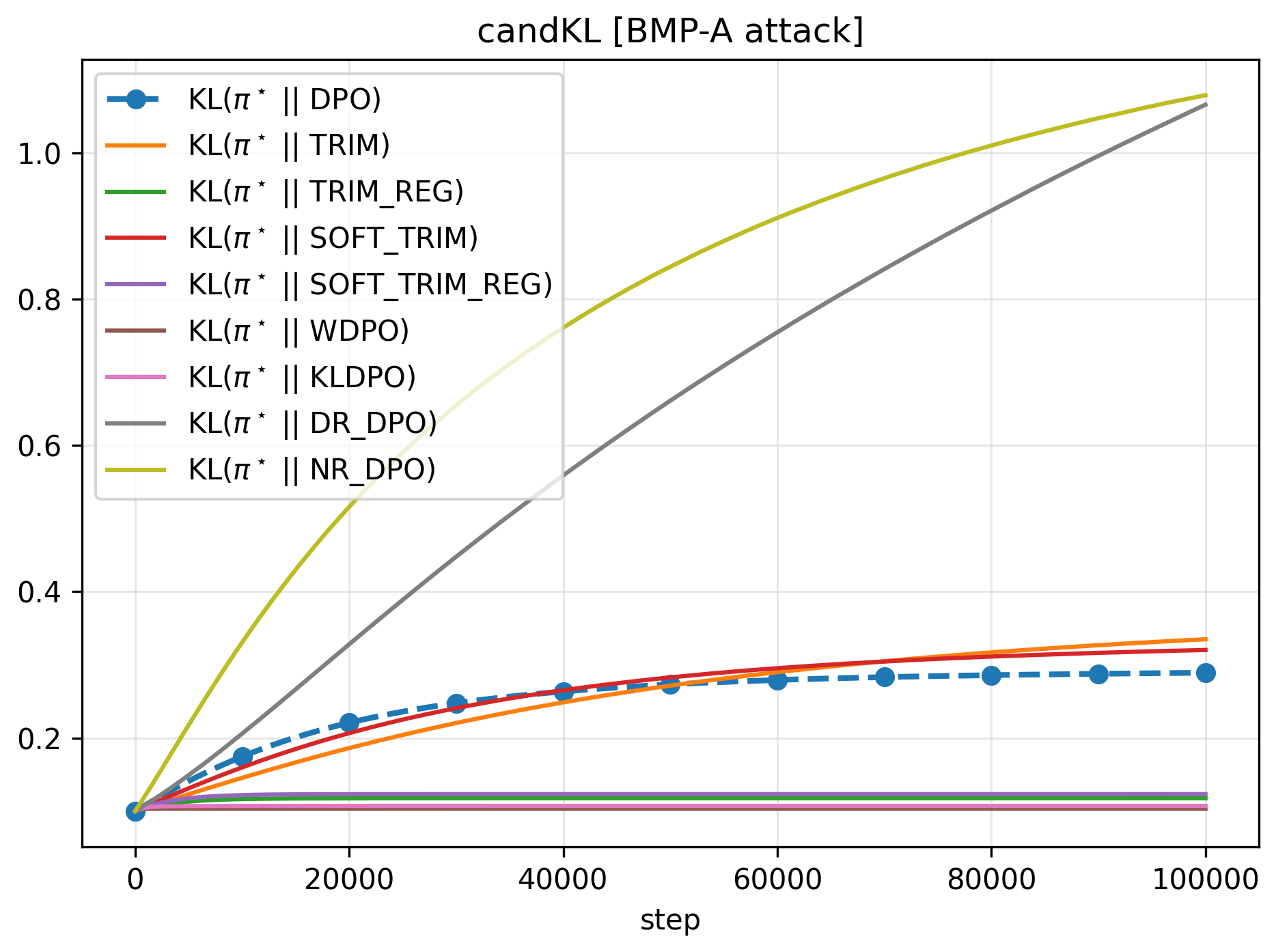}
\includegraphics[width=0.49\textwidth]{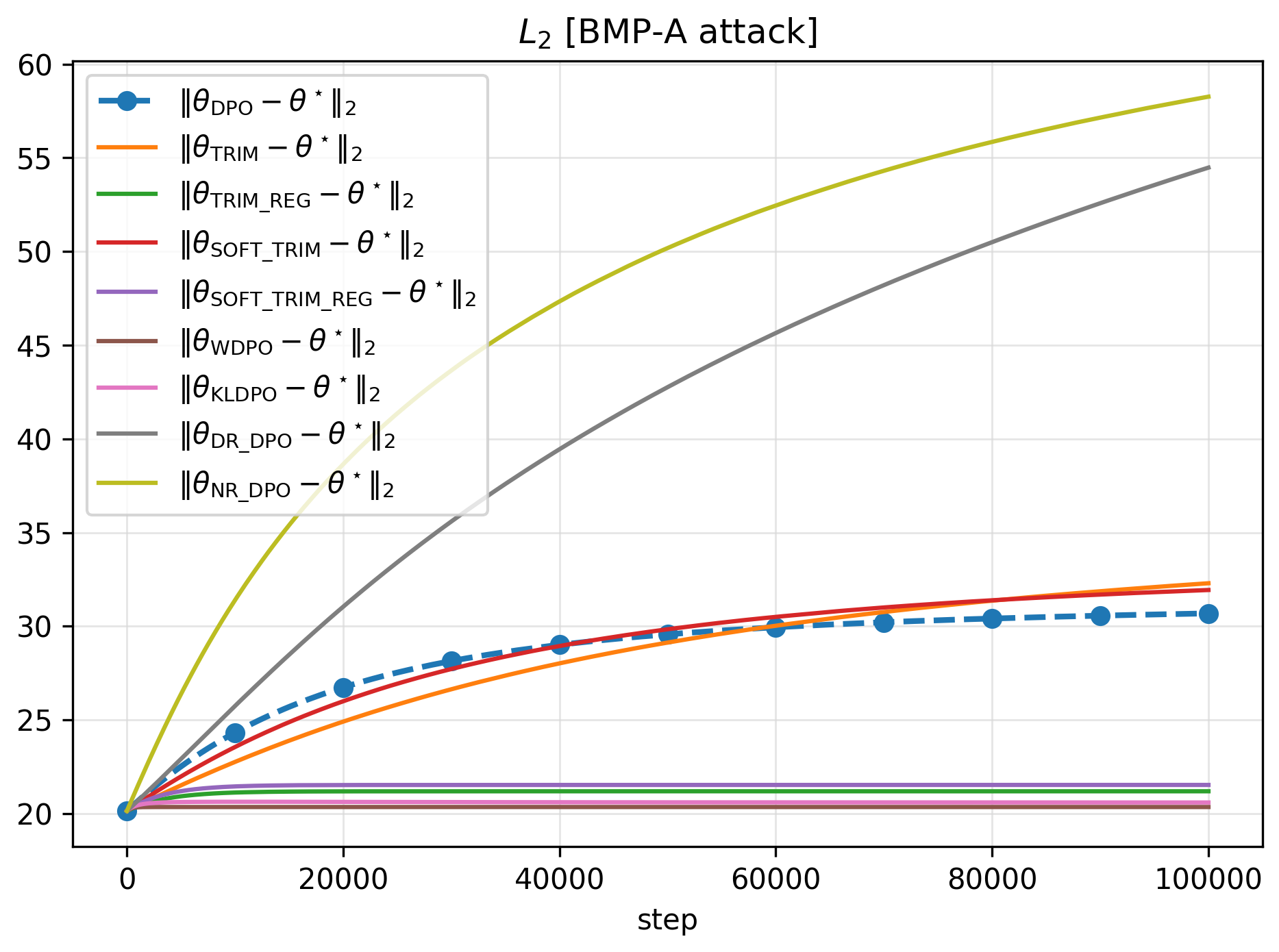}
\caption{Targeted BMP-A attack with all baselines. Left: candKL from clean DPO. Right: parameter distance from $\theta^\star$.}
\label{fig:app-baselines-targeted}
\end{figure}

Table~\ref{tab:app-baselines-summary} summarizes the final-step $\mathrm{candKL}$ across all four scenarios.

\begin{table}[ht]
\centering
\resizebox{\textwidth}{!}{%
\begin{tabular}{l c c c c}
\toprule
Method & No-attack & Random ($30\%$) & Top-impact & Targeted BMP-A \\
\midrule
\texttt{DPO} (no defense)              & $0$ (ref.)        & $0.041$           & $0.155$           & $0.289$ \\
\texttt{DPO\_REG}   & $0.069$ & $0.087$ & $0.109$ & $0.130$ \\
\midrule
\texttt{TRIM}                     & $0$               & $0.039$           & $0.018$  & $0.335$ \\
\texttt{TRIM\_REG}                & $0.069$           & $0.092$           & $0.080$           & $0.118$ \\
\texttt{SOFT\_TRIM}               & $0$               & $0.037$           & $0.064$           & $0.320$ \\
\texttt{SOFT\_TRIM\_REG}          & $0.069$           & $0.089$           & $0.093$           & $0.123$ \\
\midrule
\texttt{DR\_DPO}~\citep{wu2024towards}        & ${\approx}0$  & $0.082$           & $0.423$           & $1.066$ \\
\texttt{NR\_DPO}~\citep{chowdhury2024provably} & $0$           & $0.010$  & $0.304$           & $1.079$ \\
\texttt{WDPO}~\citep{xu2025robust}            & $0.096$       & $0.098$           & $0.101$           & $0.103$ \\
\texttt{KLDPO}~\citep{xu2025robust}           & $0.089$       & $0.096$           & $0.102$           & $0.107$ \\
\bottomrule
\end{tabular}}
\caption{Final-step $\mathrm{candKL}$ across all four baseline-comparison scenarios. The random-corruption column reports ten-seed means for all methods. Appendix~\ref{app:noatk-random} reports the corresponding mean and standard deviation for the proposed methods. In the no-attack control, the unregularized proposed defenses use $K=0$ and the regularized ones keep $\lambda=10^{-3}$. \texttt{NR\_DPO} uses $\epsilon=0$ without attacks and $\epsilon=0.3$ in attacked scenarios.}
\label{tab:app-baselines-summary}
\end{table}

For \texttt{NR\_DPO}, the strong performance under random flips does not carry over to targeted BMP-A.
The independent random-noise model required for \texttt{NR\_DPO}'s cancellation is violated by this fixed targeted support. 
\texttt{DR\_DPO} weights easy observed pairs more strongly. 
If those are attacked, the weighting can favor the poisoned signal. 
These mechanisms explain why their respective robustness assumptions need not hold here.

\texttt{WDPO} and \texttt{KLDPO} stay in a narrow range ($\mathrm{candKL}\in[0.09,0.11]$ in these runs).
This makes them relatively stable under the targeted BMP-A attack, where \texttt{WDPO} has the lowest final $\mathrm{candKL}$ in this run.
Their no-attack deviations also show that the modified training rules do not recover the full-data clean-DPO reference in these runs. 
Their design addresses distribution shift rather than identifying and removing a label-flipped subset.

These comparisons support the effectiveness of the proposed defenses in the tested settings without establishing universal empirical superiority.
The empirical ordering depends on the attack setting and may also depend on the feature map, attack budget, and regularization strength.
The feature map determines impact magnitudes and gradient directions. 
Changing it may change the attacked/trimmed-set overlap and the relative performance of the defenses.
The differences across attacks, together with the clean-data and random-flip controls, illustrate why our analysis separates clean defense bias from surviving attack effects.
Building on the constant flip-gradient shift in Lemma~\ref{lem:constant-shift}, the resulting gradient-deviation bounds yield time-uniform parameter stability and then policy-level KL guarantees under the stated log-linear, bounded-region, and curvature assumptions.
These guarantees hold uniformly over all fixed attack sets containing at most $K$ flips, including top-impact attack, random attack, targeted attack, or even attacks with knowledge of the defenses.

\section{Experiments: Matched Regularization Controls}
\label{app:matched-regularization}

We compare \texttt{TRIM\_REG} and \texttt{SOFT\_TRIM\_REG} with full-data \texttt{DPO\_REG} on the same attacked data. 
All three methods use the quadratic penalty $\lambda\|\theta-\theta_{\mathrm{ref}}\|_2^2/2$ with $\lambda=10^{-3}$. 
\texttt{DPO\_REG} does not trim or downweight pairs, allowing us to separate the effect of trimming from the shared quadratic regularizer and measure the additional effect of the impact-score rule.

\begin{table}[htbp]
\centering
\caption{Final SGD candidate-set KL from full-data clean DPO in the $160{,}000$-pair SHP setting. Percentages are computed from unrounded values relative to matched full-data \texttt{DPO\_REG}.}
\label{tab:matched-dpo-reg}
\begin{tabular}{lrrrrr}
\toprule
Attack & \texttt{DPO\_REG} & \texttt{TRIM\_REG} & Reduction & \texttt{SOFT\_TRIM\_REG} & Reduction \\
\midrule
Top-impact & 0.1089 & 0.0803 & 26.3\% & 0.0934 & 14.2\% \\
BMP-A      & 0.1296 & 0.1176 &  9.3\% & 0.1232 &  4.9\% \\
\bottomrule
\end{tabular}
\end{table}

Table~\ref{tab:matched-dpo-reg} shows lower candKL for both proposed REG variants than for \texttt{DPO\_REG} under both attacks. The reductions are larger for top-impact than for BMP-A. Figure~\ref{fig:allrdpo-targeted} shows the corresponding BMP-A trajectories. Together, these comparisons show that impact-based trimming and downweighting provide an additional benefit beyond the shared quadratic regularizer in these two attack settings.

\section{Experiments: Score Assignment and BMP-A Magnitude Geometry}
\label{app:score-ablation}

To test the effect of score assignment, Random \texttt{TRIM\_REG} removes the same number of pairs as \texttt{TRIM\_REG}, while Permuted \texttt{SOFT\_TRIM\_REG} randomly reassigns the \texttt{SOFT\_TRIM\_REG} weights, preserving their multiset and normalization. These are separate randomized controls, not aliases for the impact-based methods. 
Both controls randomize only among pairs with nonzero flip impact. 
The $35{,}256$ zero-impact pairs remain untrimmed and have weight one. 
Within each paired comparison, the attacked set, candidate set, clean-DPO reference, $K$, and $\lambda$ are fixed.

We use the SGD setup of Appendix~\ref{app:shp_simulation_details}: $100{,}000$ updates, learning rate $0.25$, batch size $25{,}600$, and candKL evaluation every $100$ updates. 
We reuse the original SGD trajectories for \texttt{DPO\_REG}, \texttt{TRIM\_REG}, and \texttt{SOFT\_TRIM\_REG}. 
Hard-trim methods sample uniformly with replacement from their retained subsets. 
Soft-trim methods sample all pairs and normalize by the global mean weight. 
Each proposed method and its randomized control use the same initialization and minibatch seed.

\begin{table}[htbp]
\centering
\caption{Nonzero-conditioned control candKL minus impact-score candKL after $100{,}000$ SGD updates. Positive values favor the impact score. Mean $\pm$ sample standard deviation is over ten paired trials.}
\label{tab:score-assignment-ablation}
\resizebox{\textwidth}{!}{%
\begin{tabular}{lrr}
\toprule
Scenario & \texttt{TRIM\_REG} vs. Random \texttt{TRIM\_REG} & \texttt{SOFT\_TRIM\_REG} vs. Permuted \texttt{SOFT\_TRIM\_REG} \\
\midrule
Clean, wrong $K$ & $-0.0086\pm0.0002$ & $-0.0042\pm0.0001$ \\
Random flips     & $-0.0038\pm0.0005$ & $-0.0019\pm0.0004$ \\
Top-impact       & $+0.0278\pm0.0003$ & $+0.0150\pm0.0001$ \\
BMP-A            & $+0.0090\pm0.0004$ & $+0.0048\pm0.0002$ \\
\bottomrule
\end{tabular}}
\end{table}

We repeat each comparison for ten randomized assignments, holding the minibatch seed fixed. 
Random-flip trials also use different attacks. 
Under top-impact and BMP-A, both impact-based methods have lower candKL than their randomized controls in all ten trials. 
The randomized controls have lower candKL in the clean-data and random-flip settings.

For BMP-A, the top-$30\%$ impact region contains $20{,}232$ of the $48{,}000$ flipped pairs ($42.15\%$). 
These pairs account for $58.88\%$ of the summed impact and $72.76\%$ of the summed squared impact over the attacked set.

Figure~\ref{fig:score-ablation} shows the candKL for the top-impact and BMP-A attacks.
The gains over matched randomized controls show that the impact-score assignment matters for these attacks, beyond the number of retained pairs or the distribution of weights.
In contrast, random flips do not preferentially affect high-impact pairs, and the clean-data control contains no attacked pairs to remove.
The mixed results in Table~\ref{tab:score-assignment-ablation} actually reflect the distinction between our uniform perturbation bounds and a claim that impact-based selection is best for every corruption pattern.

\begin{figure}[htbp]
\centering
\begin{subfigure}[t]{0.49\textwidth}
\centering
\includegraphics[width=\linewidth]{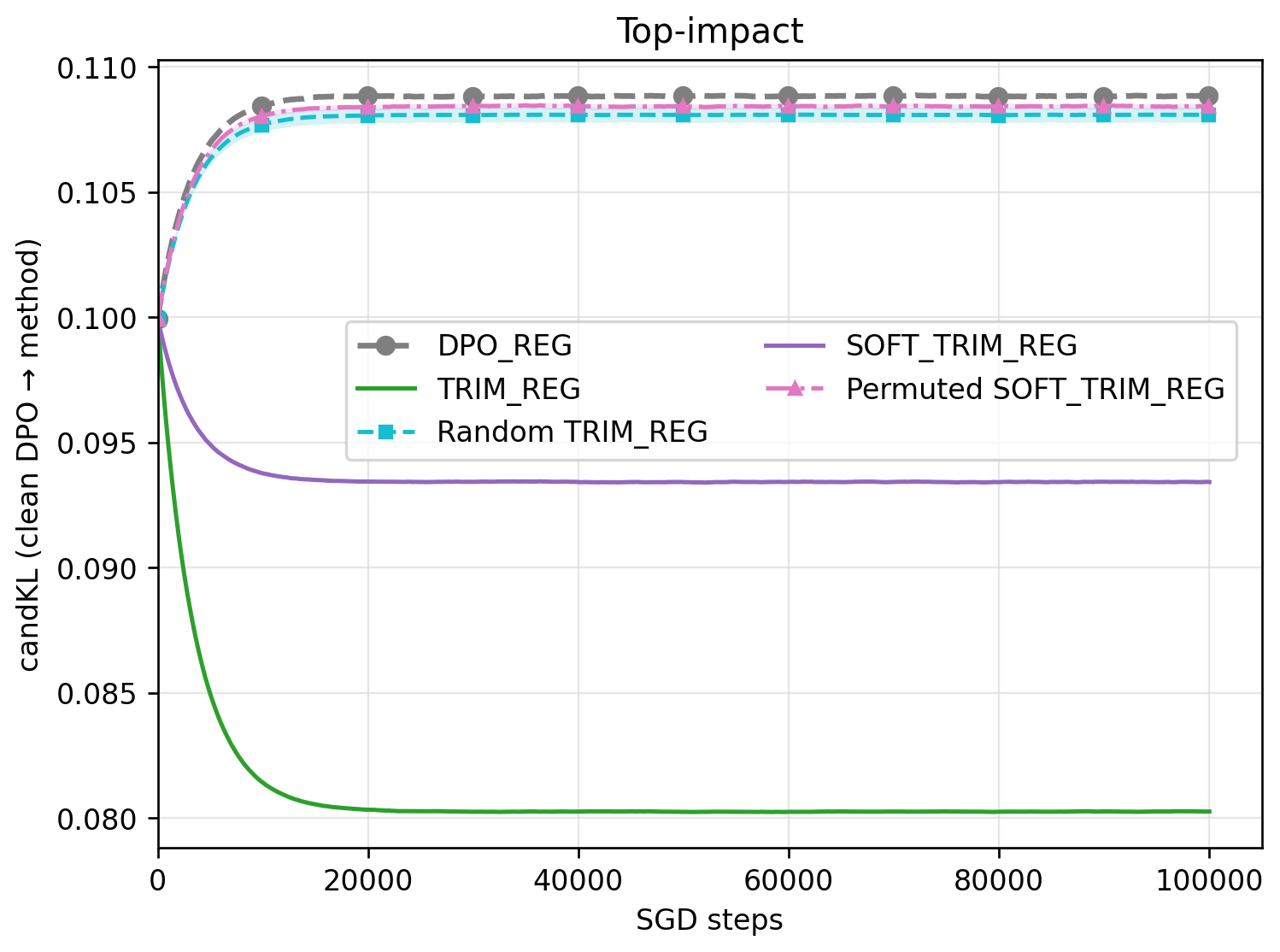}
\caption{Top-impact attack.}
\end{subfigure}\hfill
\begin{subfigure}[t]{0.49\textwidth}
\centering
\includegraphics[width=\linewidth]{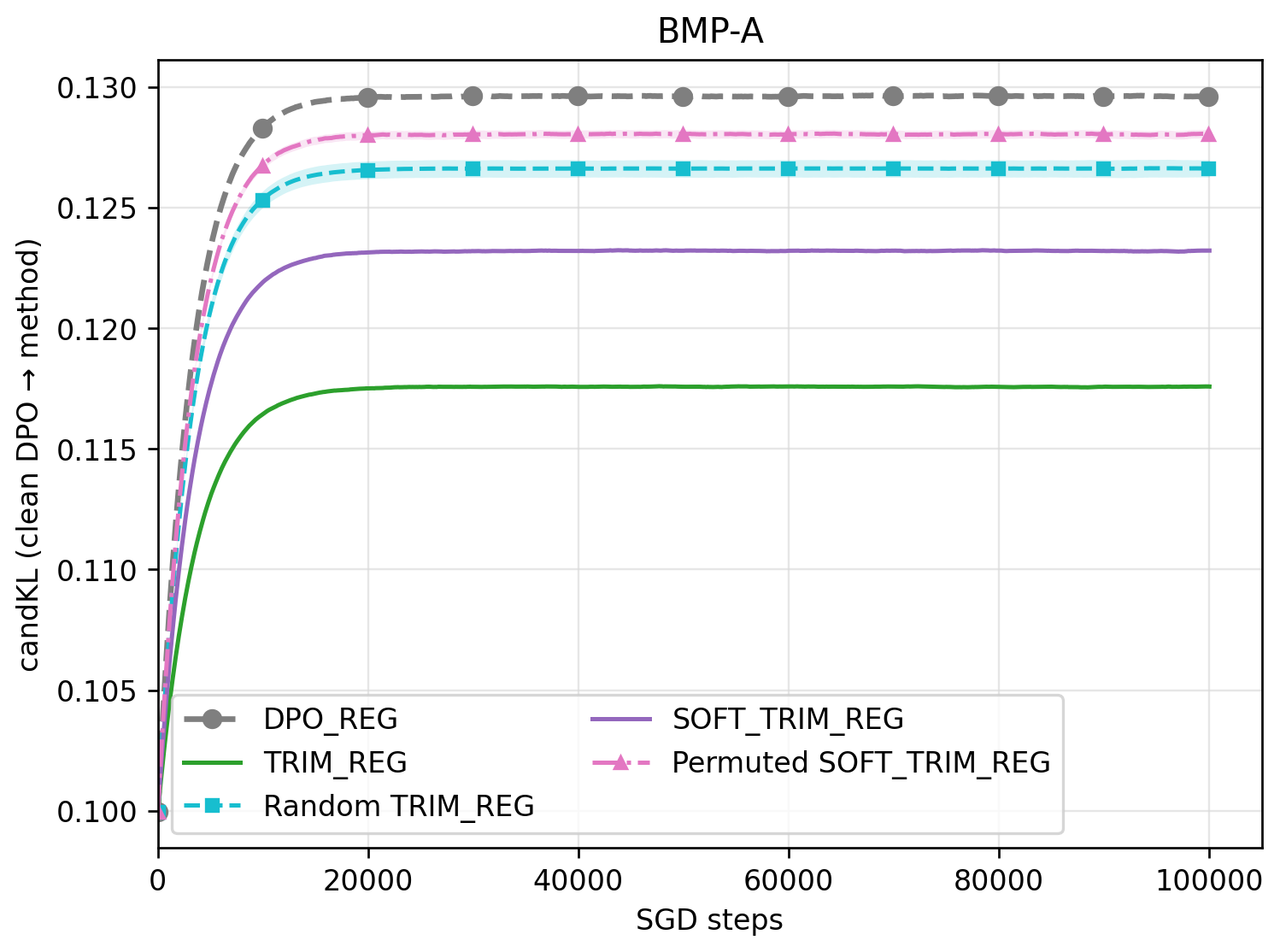}
\caption{BMP-A attack.}
\end{subfigure}
\caption{Score-assignment ablation: candKL versus SGD steps, with the same quadratic regularizer for every method. 
Random \texttt{TRIM\_REG} and Permuted \texttt{SOFT\_TRIM\_REG} are the nonzero-conditioned controls. 
Lines show means and shaded bands show $\pm$ one sample standard deviation over ten random assignments with fixed minibatch seeds. 
The impact-based methods and \texttt{DPO\_REG} use one fixed SGD run for each attack.}
\label{fig:score-ablation}
\end{figure}

\section{Experiments: Three-Dataset BMP-A Budget Sweep}
\label{app:multidataset-budget}

We evaluate SHP~\citep{ethayarajh2022understanding}, HH-RLHF~\citep{bai2022training,ganguli2022red}, and UltraFeedback-Binarized~\citep{cui2023ultrafeedback}. Each dataset uses $6{,}000$ training and $1{,}500$ held-out pairs for seeds $0,1,2$.

\paragraph{Data and features.}
SHP reuses the cached features and seed-$0$ $80/20$ split from Appendix~\ref{app:shp_simulation_details}. HH-RLHF uses \path{Anthropic/hh-rlhf}, with its \texttt{train} and \texttt{test} splits. UltraFeedback-Binarized uses \path{HuggingFaceH4/ultrafeedback_binarized}, with its \texttt{train\_prefs} and \texttt{test\_prefs} splits. We retain the last human prompt and final assistant response for HH-RLHF, and the prompt and final assistant response for UltraFeedback-Binarized. Empty or identical response pairs are excluded, as are UltraFeedback pairs with tied preference scores. We construct pools of $10{,}000$ training and $1{,}600$ held-out pairs per external dataset, excluding held-out prompts that occur in the sampled training pool, and subsample the experimental pairs independently for each seed.

External-dataset features are the $\ell_2$-normalized $384$-dimensional \texttt{[CLS]} outputs of frozen \texttt{all-MiniLM-L6-v2}. The input concatenates the last $800$ prompt characters with the first $5{,}000$ response characters, separated by \texttt{Assistant:}, and is truncated to $256$ tokens. Feature differences are not normalized, preserving the variation in flip-impact scores.

\paragraph{Optimization and attack.}
All methods start at $\theta_0=\theta_{\mathrm{ref}}=0$ and use $\beta=1$, $100{,}000$ SGD steps, batch size $1{,}024$, learning rate $0.25$, $\lambda=10^{-3}$ for REG, and $\tau=5K/12$. Seeds vary data subsampling and SGD. Within each seed, methods use the same data, attack, and minibatch seed. Following Appendix~\ref{app:bmp}, the target is the reflection $\theta^\dagger=2\theta_{\mathrm{ref}}-\theta^\star=-\theta^\star$ of the final clean-DPO parameter. We run BMP-A once to the largest budget and use nested prefixes for the $5\%,10\%,20\%,30\%$ attacks. Each attacked set remains fixed during training.

\paragraph{Metric.}
Table~\ref{tab:multidataset-budget} and Figure~\ref{fig:multidataset-budget} report held-out two-candidate $D_{\mathrm{KL}}(\pi_{\mathrm{clean}}\|\pi_{\mathrm{method}})$, with each candidate set consisting of the held-out preferred and rejected responses. This metric is not directly comparable to the main paper's $100$-candidate candKL.

\begin{table}[htbp]
\centering
\caption{Three-seed mean two-candidate policy KL under nested BMP-A budgets. Residual is BMP-A's relative target-gradient residual.}
\label{tab:multidataset-budget}
\resizebox{\textwidth}{!}{%
\begin{tabular}{llrrrrrrr}
\toprule
Dataset & Budget & \texttt{DPO} & \texttt{DPO\_REG} & \texttt{TRIM} & \texttt{TRIM\_REG} & \texttt{SOFT\_TRIM} & \texttt{SOFT\_TRIM\_REG} & BMP-A Residual \\
\midrule
SHP & 5\%  & .015 & .030 & .018 & .030 & .017 & .030 & .674 \\
    & 10\% & .035 & .034 & .042 & .034 & .039 & .034 & .493 \\
    & 20\% & .076 & .040 & .087 & .038 & .081 & .039 & .244 \\
    & 30\% & .110 & .043 & .126 & .039 & .117 & .041 & .102 \\
\midrule
HH-RLHF & 5\%  & .015 & .039 & .015 & .039 & .014 & .039 & .698 \\
        & 10\% & .034 & .048 & .038 & .048 & .035 & .048 & .520 \\
        & 20\% & .083 & .065 & .104 & .066 & .091 & .065 & .271 \\
        & 30\% & .120 & .072 & .150 & .071 & .134 & .072 & .174 \\
\midrule
UltraFeedback-Binarized & 5\%  & .010 & .023 & .013 & .024 & .011 & .024 & .675 \\
                        & 10\% & .024 & .027 & .030 & .027 & .027 & .027 & .478 \\
                        & 20\% & .049 & .030 & .061 & .029 & .053 & .030 & .273 \\
                        & 30\% & .071 & .032 & .077 & .030 & .073 & .031 & .151 \\
\bottomrule
\end{tabular}}
\end{table}

At $20\%$ and $30\%$, both proposed REG variants have lower KL than DPO on all three datasets. 
Their differences from \texttt{DPO\_REG} are smaller, and both have slightly higher KL than \texttt{DPO\_REG} on HH-RLHF at $20\%$ before rounding. 
At $5\%$, all REG methods have higher KL than DPO. 
The smaller-budget attacks also have larger relative target-gradient residuals, ranging from $0.674$ to $0.698$ at $5\%$ and from $0.478$ to $0.520$ at $10\%$.
These residuals indicate that BMP-A less closely matches the target gradient at smaller budgets, and attacked DPO remains closer to the clean policy than at the larger budgets.

Overall, the regularized defenses are effective against the larger-budget BMP-A attacks across all three datasets. 
At smaller budgets, defense-induced bias can outweigh the reduction in the surviving-attack effect. 
This is the tradeoff captured by our bounds, which do not assert that a defense improves on DPO for every attacked set.

\begin{figure}[htbp]
\centering
\begin{subfigure}[t]{0.32\textwidth}
\centering
\includegraphics[width=\linewidth]{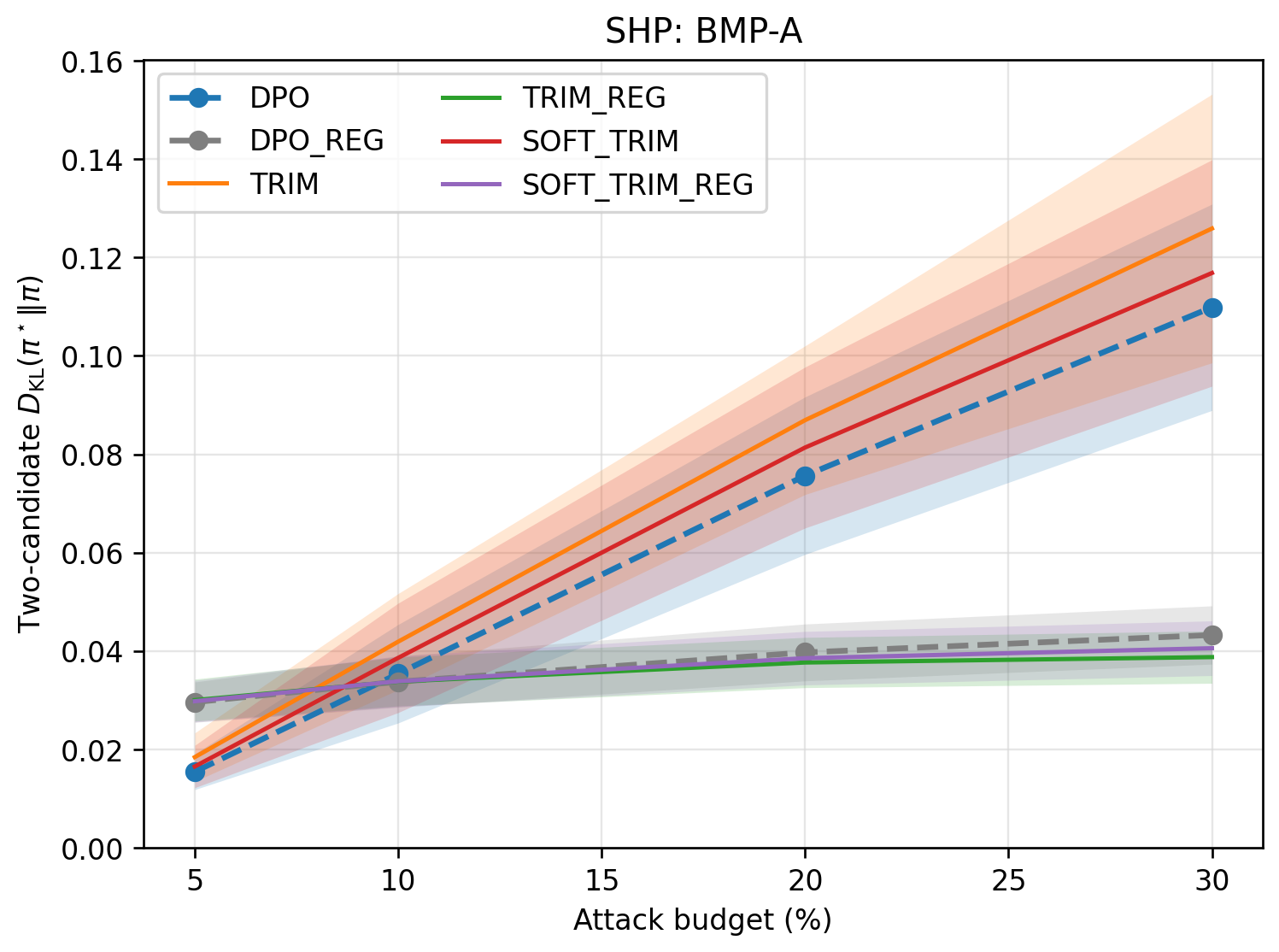}
\caption{SHP.}
\end{subfigure}
\hfill
\begin{subfigure}[t]{0.32\textwidth}
\centering
\includegraphics[width=\linewidth]{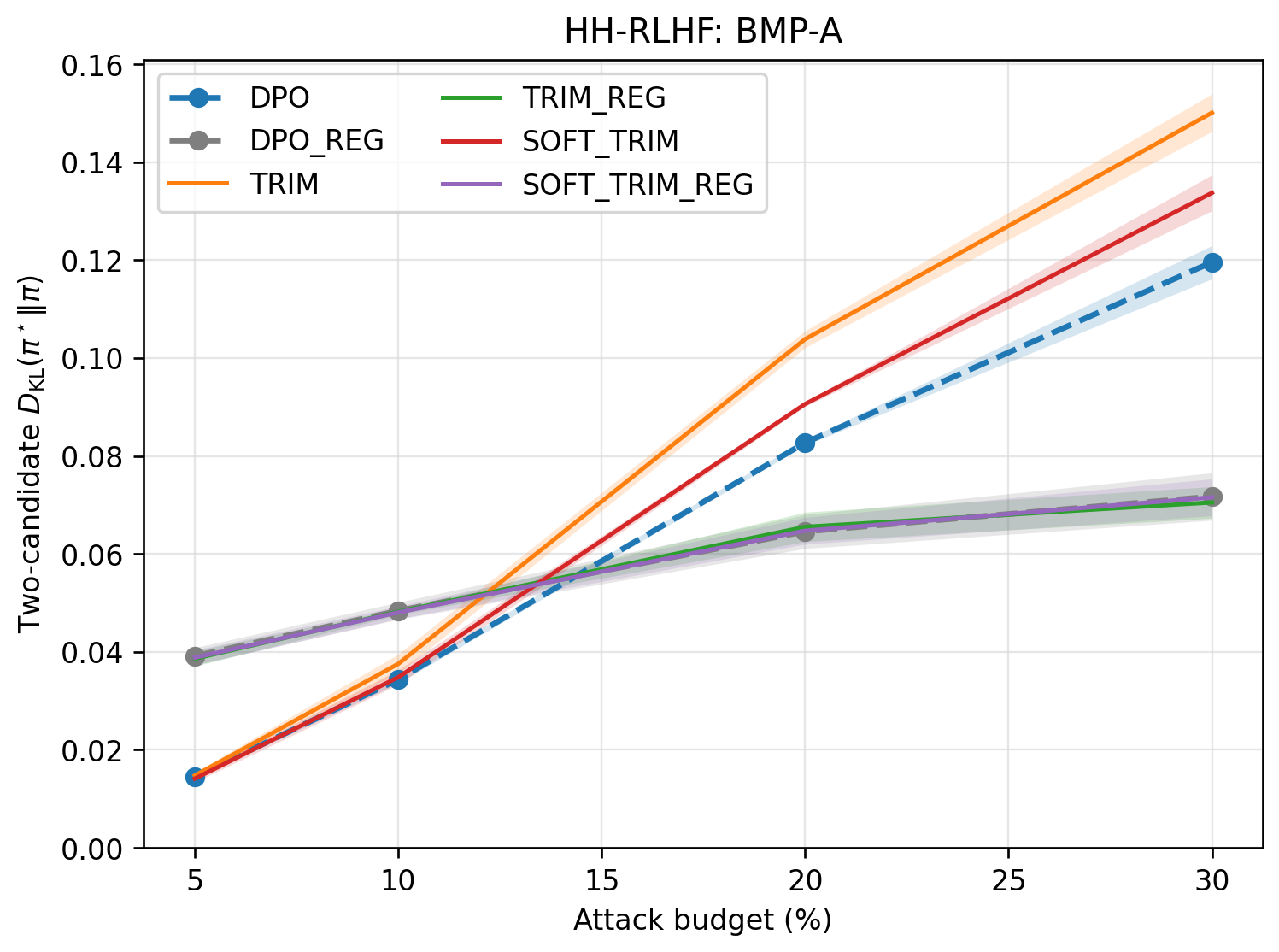}
\caption{HH-RLHF.}
\end{subfigure}
\hfill
\begin{subfigure}[t]{0.32\textwidth}
\centering
\includegraphics[width=\linewidth]{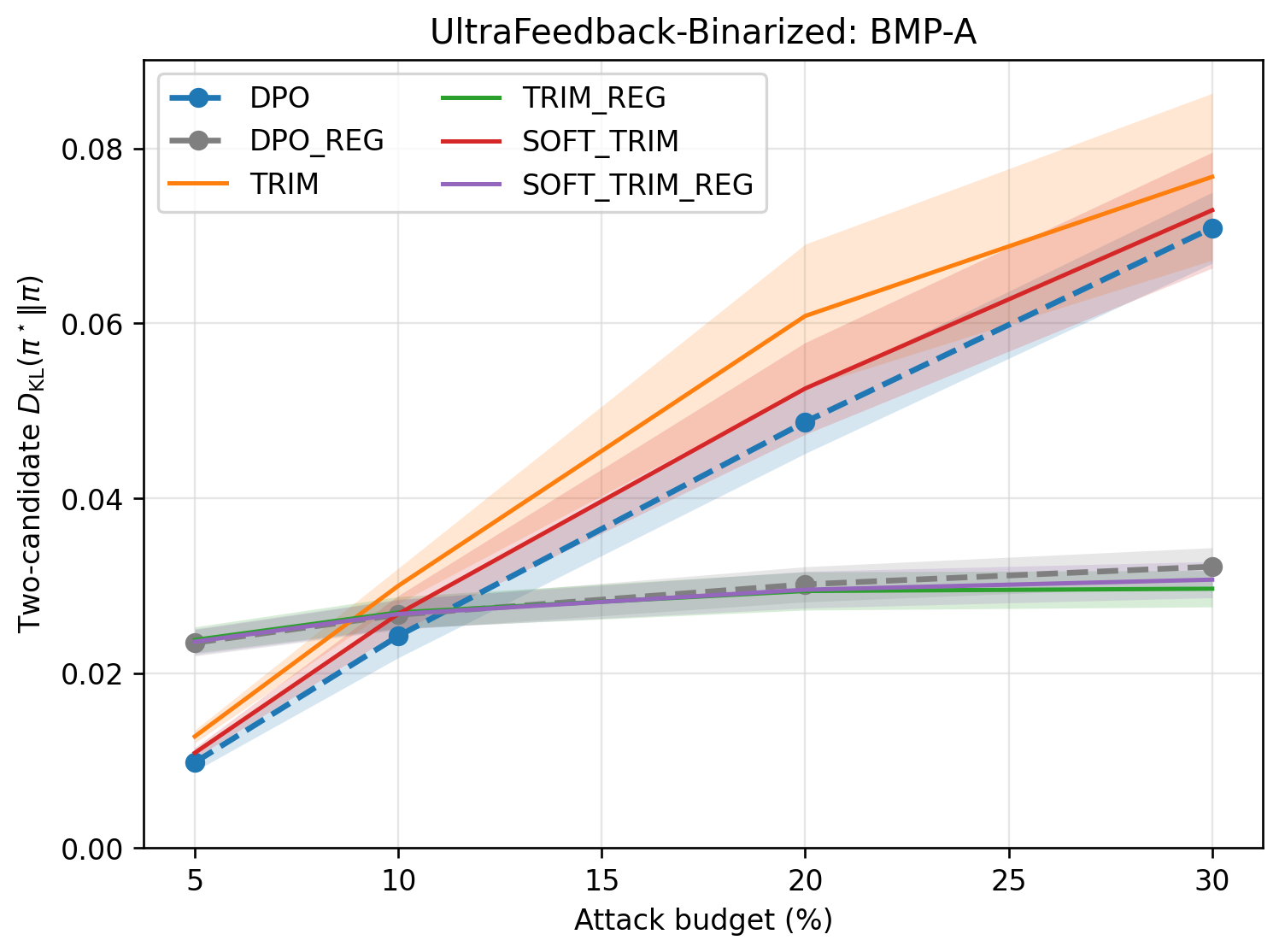}
\caption{UltraFeedback-Binarized.}
\end{subfigure}
\caption{Three-dataset log-linear BMP-A budget sweep. Lines show means and shaded bands show $\pm$ one sample standard deviation over three seeds.}
\label{fig:multidataset-budget}
\end{figure}

\section{Experiments: Budget Selection and Misspecified $K$}
\label{app:k-selection}

Our bounds assume that $K$ is an upper bound on the number of flips. Here we examine sensitivity to the assumed budget and select $K$ using held-out policy KL to the clean-DPO reference available in the experiment.

We use the datasets, preprocessing, initialization, optimizer, and reflected-target construction of Appendix~\ref{app:multidataset-budget}. 
For each dataset and seed, one fixed $20\%$ true BMP-A attacked set on $6{,}000$ training pairs is reused while the defender's assumed budget varies over
$K/N\in\{0\%,1\%,2\%,4\%,8\%,16\%,32\%\}$.
For each method and seed, we select the budget with the lowest two-candidate KL on the first $500$ held-out pairs, breaking ties toward smaller $K$. We evaluate the selected method on the remaining $1{,}000$ pairs. There are three seeds, $100{,}000$ SGD steps, $\lambda=10^{-3}$, and $\tau=5K/12$ for $K>0$. When $K=0$, all soft weights are set to one, so the unregularized and REG methods reduce to DPO and \texttt{DPO\_REG}, respectively.

\begin{table}[htbp]
\centering
\caption{Mean test two-candidate KL for a fixed true $20\%$ attack while assumed $K$ changes. Each entry is unregularized / regularized methods.}
\label{tab:k-misspecification}
\resizebox{\textwidth}{!}{%
\begin{tabular}{llrr}
\toprule
Dataset & Assumed $K/N$ & \texttt{TRIM} / \texttt{TRIM\_REG} & \texttt{SOFT\_TRIM} / \texttt{SOFT\_TRIM\_REG} \\
\midrule
SHP & 0\%  & .0743 / .0390 & .0743 / .0390 \\
    & 1\%  & .0767 / .0390 & .0758 / .0390 \\
    & 2\%  & .0770 / .0388 & .0760 / .0389 \\
    & 4\%  & .0788 / .0386 & .0765 / .0388 \\
    & 8\%  & .0806 / .0381 & .0780 / .0385 \\
    & 16\% & .0875 / .0375 & .0801 / .0381 \\
    & 32\% & .0886 / .0357 & .0820 / .0372 \\
\midrule
HH-RLHF & 0\%  & .0818 / .0636 & .0818 / .0636 \\
        & 1\%  & .0824 / .0640 & .0824 / .0640 \\
        & 2\%  & .0820 / .0636 & .0821 / .0639 \\
        & 4\%  & .0832 / .0632 & .0819 / .0635 \\
        & 8\%  & .0872 / .0634 & .0837 / .0634 \\
        & 16\% & .0983 / .0643 & .0877 / .0637 \\
        & 32\% & .1099 / .0628 & .0941 / .0639 \\
\midrule
UltraFeedback-Binarized & 0\%  & .0484 / .0303 & .0484 / .0303 \\
                        & 1\%  & .0500 / .0303 & .0493 / .0302 \\
                        & 2\%  & .0498 / .0301 & .0494 / .0302 \\
                        & 4\%  & .0508 / .0299 & .0497 / .0301 \\
                        & 8\%  & .0521 / .0297 & .0502 / .0299 \\
                        & 16\% & .0557 / .0294 & .0512 / .0297 \\
                        & 32\% & .0595 / .0288 & .0531 / .0294 \\
\bottomrule
\end{tabular}}
\end{table}

\begin{table}[htbp]
\centering
\caption{Validation-selected assumed $K/N$ for seeds $0/1/2$. The true attack fraction is $20\%$ in every run.}
\label{tab:k-selected}
\begin{tabular}{lrrrr}
\toprule
Dataset & \texttt{TRIM} & \texttt{TRIM\_REG} & \texttt{SOFT\_TRIM} & \texttt{SOFT\_TRIM\_REG} \\
\midrule
SHP & 0/0/0 & 32/32/32 & 0/0/0 & 32/32/32 \\
HH-RLHF & 0/0/0 & 32/32/4 & 0/0/0 & 32/16/4 \\
UltraFeedback-Binarized & 2/0/0 & 32/32/32 & 8/0/0 & 32/32/32 \\
\bottomrule
\end{tabular}
\end{table}

Figure~\ref{fig:k-misspecification} shows the held-out test curves as the assumed budget varies. Both proposed REG variants have lower mean test KL than DPO across the entire grid on all three datasets, while unregularized trimming generally increases the deviation. Validation selects the test-grid minimizer in $31$ of $36$ cases. Only $15$ of the $36$ selected budgets satisfy $\widehat K/N\ge20\%$.

Thus, the REG variants retain their advantage over attacked DPO in this experiment even when the assumed budget differs from the true budget. A clean-policy reference also helps select $K$ for policy proximity, but the selected budget is not an estimate of the number of attacked pairs. The uniform guarantees apply when the chosen $K$ is an upper bound on that number.

\begin{figure}[htbp]
\centering
\begin{subfigure}[t]{0.32\textwidth}
\centering
\includegraphics[width=\linewidth]{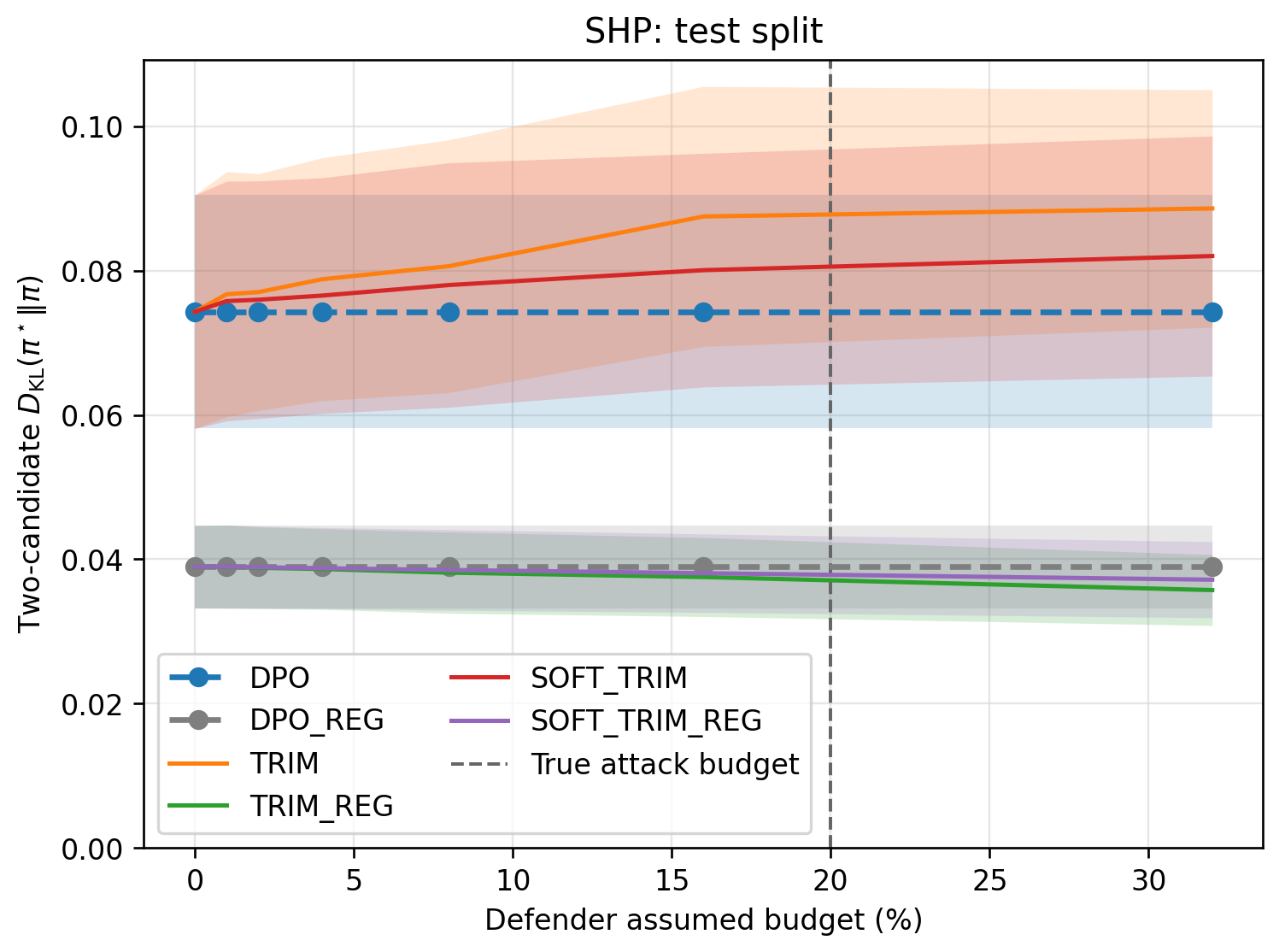}
\caption{SHP.}
\end{subfigure}
\hfill
\begin{subfigure}[t]{0.32\textwidth}
\centering
\includegraphics[width=\linewidth]{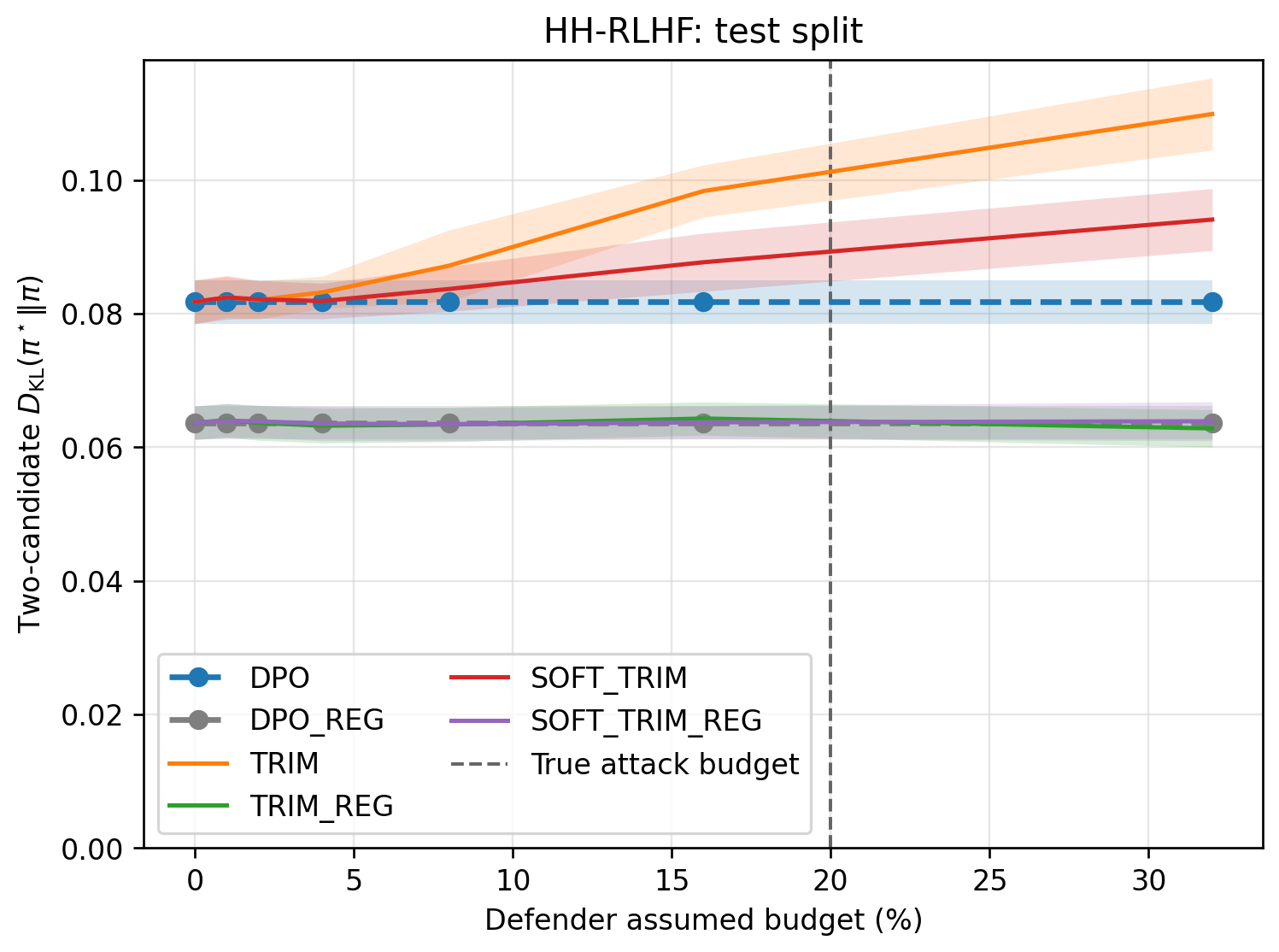}
\caption{HH-RLHF.}
\end{subfigure}
\hfill
\begin{subfigure}[t]{0.32\textwidth}
\centering
\includegraphics[width=\linewidth]{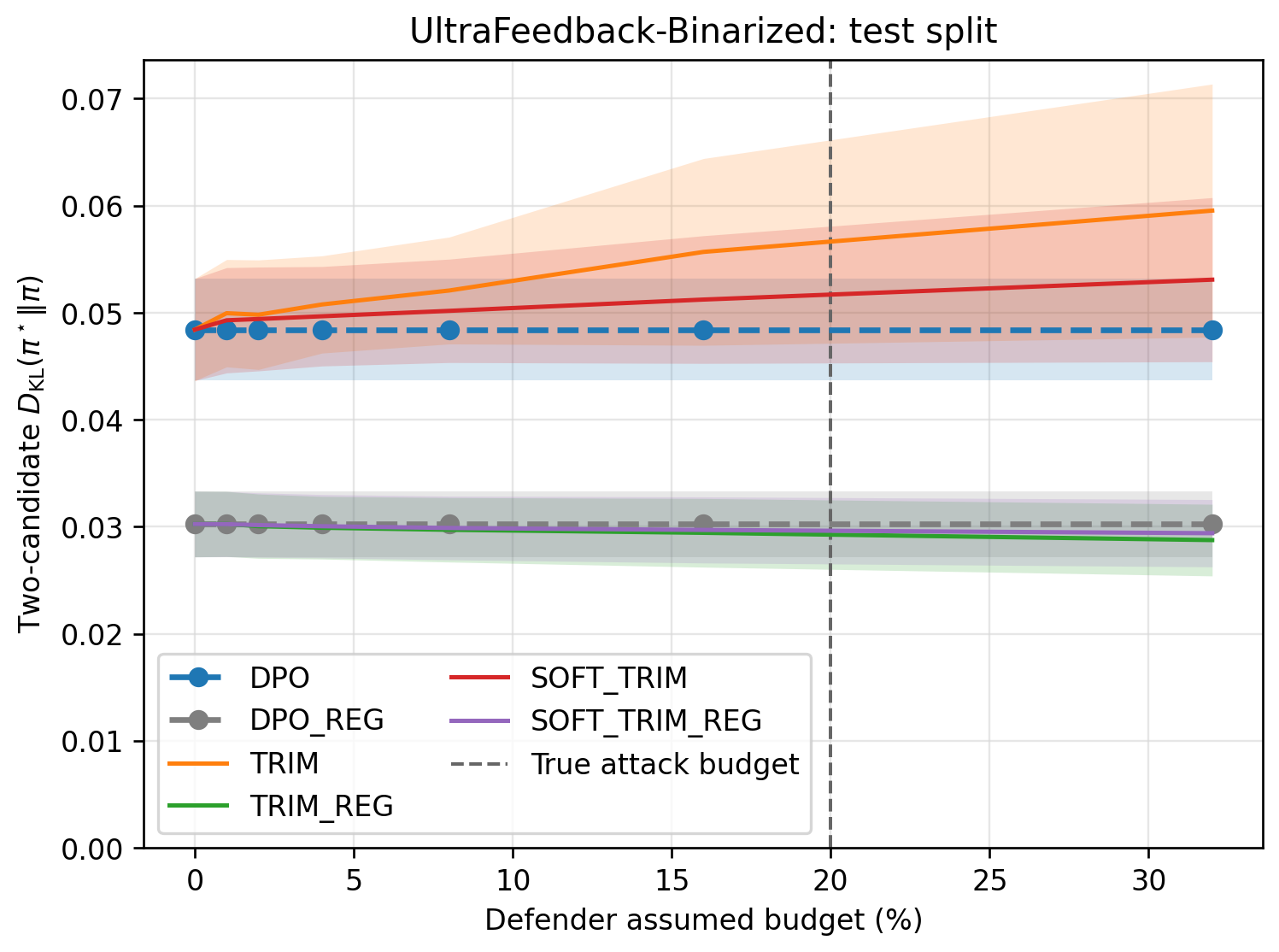}
\caption{UltraFeedback-Binarized.}
\end{subfigure}
\caption{Held-out test KL for a fixed-$20\%$ BMP-A support with a varied defender budget. Lines show means and shaded bands show $\pm$ one sample standard deviation over three seeds. The vertical dashed line marks the true attack fraction.}
\label{fig:k-misspecification}
\end{figure}

\section{Extension to Nonlinear DPO Policies}
\label{app:nonlinear-extension}

We discuss the difficulties in extending the analysis beyond log-linear policies and report preliminary experiments with a small nonlinear policy over frozen representations.

\subsection{Flip-gradient identity for differentiable DPO policies}
\label{app:nonlinear-flip-identity}

We restate the flip-gradient identity of~\citet{yang2026efficient} in our notation and include its proof for completeness.

\begin{lemma}[Flip-gradient identity for differentiable DPO policies]\label{lem:general-flip-identity}
For any differentiable policy $\pi_\theta$ and a flipped pair $i\in A$, let $\ell_i$ be the DPO loss on the clean pair $(x^i,y_w^i,y_l^i)$ and let $\tilde\ell_i$ be the observed loss after swapping the two responses. Wherever the log-policy gradients exist,
\begin{equation}\label{eq:general-flip-identity}
\nabla\tilde\ell_i(\theta)-\nabla\ell_i(\theta)
=\beta\!\left[
\nabla\log\pi_\theta(y_w^i\mid x^i)
-\nabla\log\pi_\theta(y_l^i\mid x^i)
\right].
\end{equation}
At a fixed $\theta$, reversing the pair orientation negates the vector on the right-hand side and therefore preserves its norm. Unlike Lemma~\ref{lem:constant-shift}, however, this vector generally depends on $\theta$.
\end{lemma}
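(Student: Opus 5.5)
The plan is a direct chain-rule computation that mirrors the proof of Lemma~\ref{lem:constant-shift}, but without collapsing the log-policy gradients into a fixed feature difference. For the clean pair, define the generalized logit
\begin{equation*}
z_i(\theta):=\beta\Big[\log\tfrac{\pi_\theta(y_w^i\mid x^i)}{\pi_{\mathrm{ref}}(y_w^i\mid x^i)}-\log\tfrac{\pi_\theta(y_l^i\mid x^i)}{\pi_{\mathrm{ref}}(y_l^i\mid x^i)}\Big],
\end{equation*}
so that $\ell_i(\theta)=-\log\sigma(z_i(\theta))$. Set $u_i(\theta):=\nabla\log\pi_\theta(y_w^i\mid x^i)-\nabla\log\pi_\theta(y_l^i\mid x^i)$. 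Because $\pi_{\mathrm{ref}}$ does not depend on $\theta$, we have $\nabla z_i(\theta)=\beta u_i(\theta)$ wherever the log-policy gradients exist.

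First, I would differentiate the clean loss. Using $\frac{d}{dz}[-\log\sigma(z)]=\sigma(z)-1$, this gives $\nabla\ell_i(\theta)=\beta(\sigma(z_i(\theta))-1)u_i(\theta)$.

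Second, I would record the effect of swapping the two responses. The swap exchanges the two log-ratio terms, so the observed logit is exactly $-z_i(\theta)$ and its gradient is $-\beta u_i(\theta)$. Hence
\begin{equation*}
\nabla\tilde\ell_i(\theta)=\beta(\sigma(-z_i(\theta))-1)(-u_i(\theta))=\beta\sigma(z_i(\theta))u_i(\theta),
\end{equation*}
where the last step uses $\sigma(-z)=1-\sigma(z)$.

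Third, I would subtract the two expressions. The $\sigma(z_i)$ terms cancel, leaving $\nabla\tilde\ell_i-\nabla\ell_i=\beta u_i(\theta)$, which is \eqref{eq:general-flip-identity}.

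For the remaining remarks, note that reversing the orientation replaces $u_i$ by $-u_i$ at the same $\theta$, so the norm is preserved. Dependence on $\theta$ is illustrated by any policy whose log-probability is not affine in $\theta$; for example, a one-hidden-layer network gives a $u_i(\theta)$ that varies with $\theta$. Only in the log-linear case does $u_i$ reduce to the constant $v_i$, because the partition-function gradients cancel.

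No step is a genuine obstacle. The only care needed is to restrict the statement to points where both log-policy gradients exist, so that the chain rule applies, and to use the sign symmetry of $\sigma$ in the second step.
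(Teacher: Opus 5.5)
Your proof is correct and follows essentially the same route as the paper: write the clean and swapped losses as $-\log\sigma(z_i(\theta))$ and $-\log\sigma(-z_i(\theta))$, and use that the reference terms are constant in $\theta$, so the gradient difference is $\nabla z_i(\theta)=\beta u_i(\theta)$. The only difference is bookkeeping: the paper uses directly that the two losses differ by exactly $z_i(\theta)$, while you compute each gradient explicitly and cancel the $\sigma(z_i)$ terms.
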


\begin{proof}
Let $z_i(\theta)$ be the DPO logit of the clean pair, so that
$\ell_i(\theta)=-\log\sigma(z_i(\theta))$ and
$\tilde\ell_i(\theta)=-\log\sigma(-z_i(\theta))$.
Differentiation gives
$\nabla\tilde\ell_i(\theta)-\nabla\ell_i(\theta)=\nabla z_i(\theta)$.
The reference-policy terms in $z_i(\theta)$ are constant in $\theta$, leaving exactly~\eqref{eq:general-flip-identity}.
\end{proof}

\subsection{Why the log-linear robustness analysis does not directly extend}
\label{app:nonlinear-difficulty}

In the log-linear class, each flip contributes the fixed gradient shift $\beta v_i$, so the impact ranking and attack-induced gradient shift do not change during training. For a nonlinear policy, the flip-gradient vector in Lemma~\ref{lem:general-flip-identity} depends on $\theta$. Extending our analysis would require control of these vectors and their variation over the training region, rather than only at initialization. Recomputing scores can also change the retained set or weights during training.

The nonlinear objective is generally nonconvex, so the covariance argument used in Theorems~\ref{thm:trim-K-iter-stability} and~\ref{thm:softtrim-iter-stability} does not directly provide the required curvature and contraction. A nonlinear extension would also need a suitable parameter-to-policy bound and, for refreshed scores, an analysis of the changing objective.

\subsection{Nonlinear preliminary experiments setup}
\label{app:nonlinear-experiments}

We use SHP with seeds $0,1,2$, subsampling $6{,}000$ training pairs and $2{,}000$ held-out pairs from the split in Appendix~\ref{app:shp_simulation_details}. A seed-specific Gaussian projection followed by $\ell_2$ normalization maps frozen $384$-dimensional MiniLM representations to $\bar\psi\in\mathbb R^{32}$. The policy uses the nonlinear energy head
\begin{equation}\label{eq:nonlinear-energy-head}
f_\theta(\bar\psi)=a^\top\tanh(W\bar\psi+b),
\qquad
\pi_\theta(j\mid x;\mathcal Y)\propto\exp\!\big(f_\theta(\bar\psi(x,\mathcal Y_j(x)))\big),
\end{equation}
with hidden width $8$. Seeds vary data subsampling, the projection, initialization, and SGD. Within each seed, all methods share the same randomly initialized frozen reference, starting parameters, and minibatch seed. We run $10{,}000$ SGD steps with batch size $512$, learning rate $1$, $\beta=1$, $\lambda=10^{-3}$ for REG, and $\tau=5K/12$.

The attack target is trained from the reference for $10{,}000$ steps with all preference labels reversed. Neural BMP-A uses the exact flip-gradient vectors at that target and constructs nested attack sets for $5\%,10\%,20\%,30\%$ budgets. Each set is fixed during the corresponding training run. The defender does not observe the target or attacked set and computes the static impact scores once at the common reference initialization. At $30\%$ ($K=1{,}800$), the mean BMP-A relative target residual is $0.097$, and the static top-$K$ set contains $40.9\%$ of the attacked pairs. We also train on clean data using the same assumed budgets to measure the clean defense bias.

The $2{,}000$ held-out pairs are used for pairwise diagnostics. The reported candKL instead uses $500$ fixed held-out SHP prompts from the candidate cache of Appendix~\ref{app:shp_simulation_details}, with $100$ candidates per prompt, shared across methods. The clean-DPO nonlinear policy trained for the same number of steps is the comparator.

\subsection{Static impact-score results}
\label{app:nonlinear-static}

\begin{table}[htbp]
\centering
\caption{Mean $\pm$ sample standard deviation candidate-set
$D_{\mathrm{KL}}(\pi_{\mathrm{clean}}\|\pi_{\mathrm{method}})$ over three nonlinear-head seeds under the fixed $30\%$ neural BMP-A setting.}
\label{tab:nonlinear-policy}
\resizebox{\textwidth}{!}{%
\begin{tabular}{rrrrrr}
\toprule
\texttt{DPO} & \texttt{DPO\_REG} & \texttt{TRIM} & \texttt{TRIM\_REG} & \texttt{SOFT\_TRIM} & \texttt{SOFT\_TRIM\_REG} \\
\midrule
$.213\pm.025$ & $.141\pm.019$ & $.297\pm.029$ & $.115\pm.013$ & $.241\pm.009$ & $.128\pm.015$ \\
\bottomrule
\end{tabular}}
\end{table}

At the $30\%$ attack budget, \texttt{TRIM\_REG} and \texttt{SOFT\_TRIM\_REG} have $18.7\%$ and $9.6\%$ lower mean candKL than \texttt{DPO\_REG}, respectively (Table~\ref{tab:nonlinear-policy}). Without regularization, \texttt{TRIM} and \texttt{SOFT\_TRIM} have higher candKL than DPO.

Figure~\ref{fig:nonlinear-policy} shows the budget sweep and clean-data controls. At $5\%$, all REG methods, including full-data \texttt{DPO\_REG}, have higher candKL than DPO, while unregularized \texttt{TRIM} and \texttt{SOFT\_TRIM} slightly improve on DPO. On clean data, candKL increases with the assumed budget for unregularized \texttt{TRIM} and \texttt{SOFT\_TRIM}. At $30\%$, their candKL values are $0.0494$ and $0.0132$, respectively.

These preliminary results show a benefit from combining impact-based trimming with regularization under the larger nonlinear BMP-A attack. At smaller budgets, the regularization bias can outweigh this benefit, as also seen for full-data \texttt{DPO\_REG}. Extending the robustness guarantees to this policy class requires the additional analysis discussed in Appendix~\ref{app:nonlinear-difficulty}.

\begin{figure}[htbp]
\centering
\begin{subfigure}[t]{0.49\textwidth}
\centering
\includegraphics[width=\linewidth]{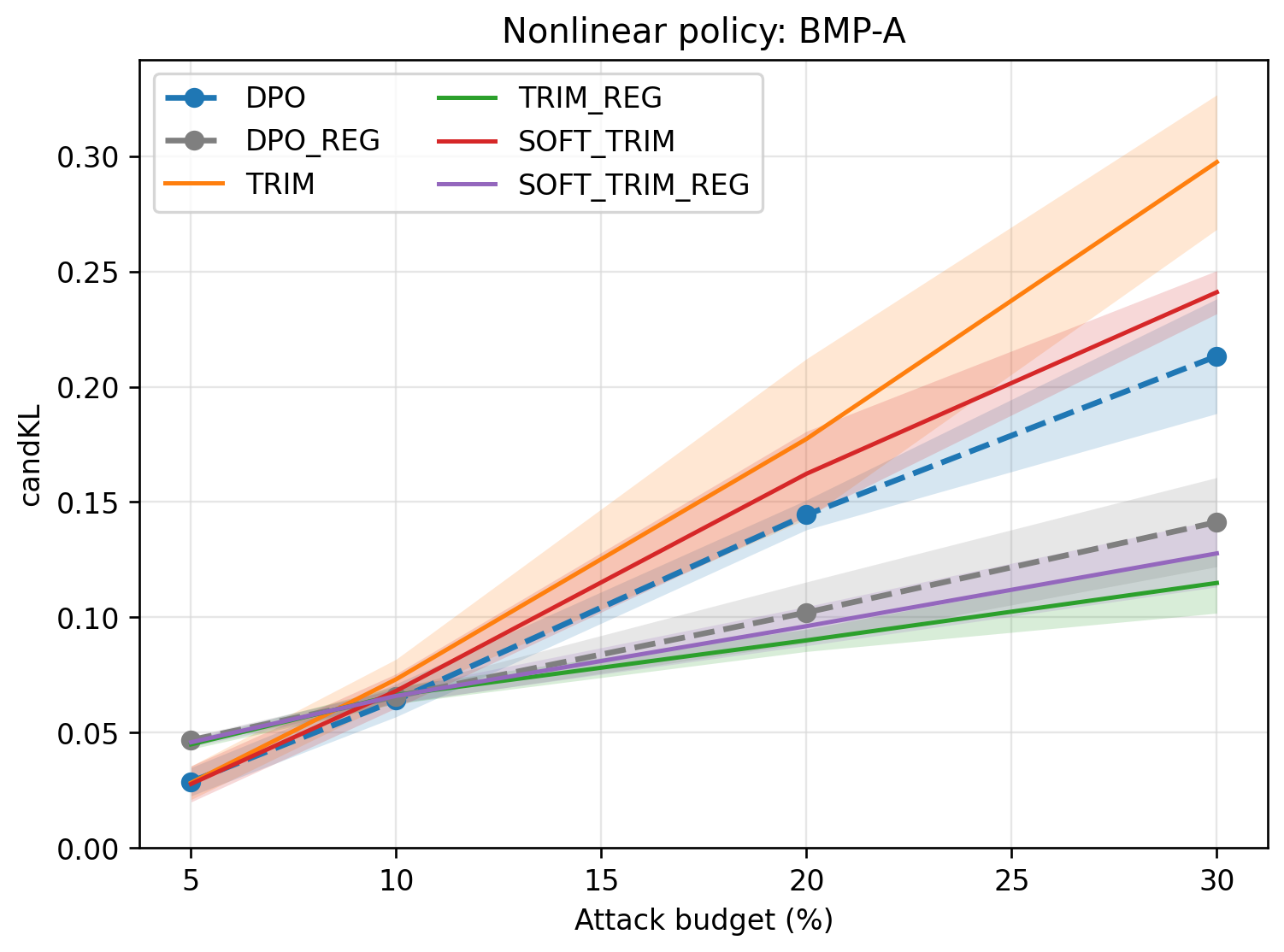}
\caption{Neural BMP-A attacks.}
\end{subfigure}\hfill
\begin{subfigure}[t]{0.49\textwidth}
\centering
\includegraphics[width=\linewidth]{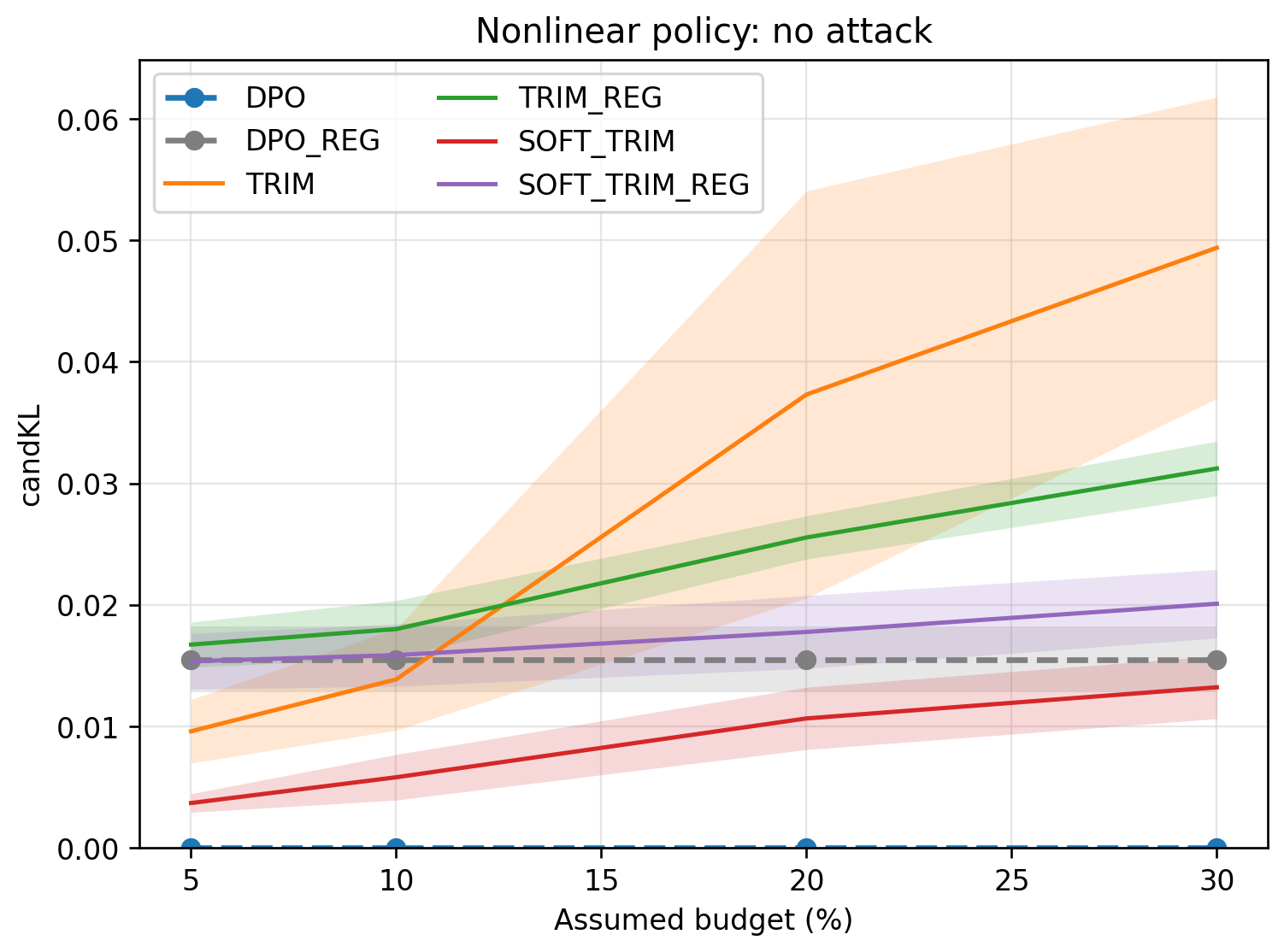}
\caption{Clean data, assumed attack budgets.}
\end{subfigure}
\caption{Small nonlinear-head sweep over frozen SHP representations. Left: nested $5\%,10\%,20\%,30\%$ neural BMP-A attacks. Right: clean data with the same assumed budgets. Lines show means and shaded bands show $\pm$ one sample standard deviation over three seeds.}
\label{fig:nonlinear-policy}
\end{figure}

\subsection{Periodic impact-score refresh}
\label{app:nonlinear-refresh}

We keep the $30\%$ attacked set fixed and recompute the parameter-dependent impact scores every $500$ optimization steps. This gives $20$ score evaluations at steps $0,500,\ldots,9{,}500$.

\begin{table}[htbp]
\centering
\caption{Static versus periodically refreshed candidate-set KL, averaged over three nonlinear-head seeds.}
\label{tab:nonlinear-refresh}
\begin{tabular}{lrr}
\toprule
Method & Static & Refreshed \\
\midrule
\texttt{TRIM} & .2974 & .2941 \\
\texttt{TRIM\_REG} & .1149 & .1056 \\
\texttt{SOFT\_TRIM} & .2410 & .2266 \\
\texttt{SOFT\_TRIM\_REG} & .1277 & .1216 \\
\bottomrule
\end{tabular}
\end{table}

Table~\ref{tab:nonlinear-refresh} shows lower mean candKL for all four methods after score refresh. At the last refresh, the top-$K$ set contains $43.6\%$ of the attacked pairs for \texttt{TRIM} and $45.8\%$ for \texttt{SOFT\_TRIM}, compared with $40.9\%$ at initialization. The corresponding fractions for \texttt{TRIM\_REG} and \texttt{SOFT\_TRIM\_REG} are $41.5\%$ and $41.9\%$. \texttt{TRIM} removes these pairs, while \texttt{SOFT\_TRIM} downweights them.

Refreshing the impact scores therefore gives a further improvement in this nonlinear experiment, with \texttt{TRIM\_REG} attaining the lowest mean candKL among the compared methods. The unregularized variants still have higher candKL than DPO, so score refresh alone does not resolve the tradeoff between removing attacked pairs and changing the clean objective.

\end{document}